\documentclass[letterpaper]{article} % DO NOT CHANGE THIS
\usepackage[preprint]{aaai2027}  % DO NOT CHANGE THIS
\usepackage[hyphens]{url}  % DO NOT CHANGE THIS
\usepackage{graphicx} % DO NOT CHANGE THIS
\usepackage{natbib}  % DO NOT CHANGE THIS AND DO NOT ADD ANY OPTIONS TO IT
\usepackage{caption} % DO NOT CHANGE THIS AND DO NOT ADD ANY OPTIONS TO IT
\usepackage[ruled,vlined,linesnumbered]{algorithm2e}
\usepackage{amsthm}
\usepackage{amsmath,amssymb}
\usepackage{graphicx}
\usepackage{algorithmic}
\usepackage{tikz-cd}
\newtheorem{Proposition}{Proposition}
\usepackage[table]{xcolor}
\usepackage{multicol}
\usetikzlibrary{arrows.meta,calc,fit,backgrounds,shapes.geometric}

\usepackage{algorithm}
\usepackage{algorithmic}
\usepackage{algorithm}
\usepackage{algorithmic}
\usepackage{enumitem}
\usepackage{overpic}
\usepackage{subcaption}
\usepackage{amsmath,amssymb}
\usepackage{comment}
\usepackage{tikz-cd}
\usepackage{multicol}
\usetikzlibrary{arrows.meta,calc,fit,backgrounds,shapes.geometric}

\usepackage{newfloat}
\usepackage{amsfonts}
\usepackage{listings}
\DeclareCaptionStyle{ruled}{labelfont=normalfont,labelsep=colon,strut=off} % DO NOT CHANGE THIS
\floatstyle{ruled}
\newfloat{listing}{tb}{lst}{}
\floatname{listing}{Listing}

\usepackage{booktabs}

\title{Bootstrap Stochastic Interpolant Tree for  Inference-time  \\ Search Under Sparse Reward}
\author {
    Binglin Ji\textsuperscript{\rm 1}\equalcontrib,
    Anindya Sarkar\textsuperscript{\rm 1}\equalcontrib,
    Yevgeniy Vorobeychik\textsuperscript{\rm 1}
}
\affiliations {
    \textsuperscript{\rm 1}Department of Computer Science \& Engineering, Washington University in St. Louis, USA\\
    \{binglin.j,~anindya,~yvorobeychik\}@wustl.edu
}

\begin{document}

%\begin{comment}
\maketitle

\begin{abstract}
Tree-based samplers offer a compelling alternative for scalable inference-time search, bypassing the per-step reward and gradient dependencies that introduce systematic bias into SMC-style frameworks. Despite this advantage, contemporary tree samplers fail to achieve broad exploration. 
Their reliance on learned, DDPM-like transitions inherently couples neighbor states in a trajectory, thereby restricting child-parent feature divergence in the tree sampler.
%DDPM-like transitions couples child nodes to their parents, thereby restricting child-parent feature divergence.
%Because they rely on learned, DDPM-like transitions for child node sampling, child nodes are tightly coupled to their parents, restricting child-parent feature divergence. 
%Moreover, the denoiser's inductive bias stifles exploratory diversity. 
Consequently, these samplers suffer from restricted coverage of the target marginal distribution, severely hindering the exploration critical for efficient search under sparse reward. 
To overcome these limitations, we introduce a simple yet effective \textbf{B}ootstrap \textbf{S}tochastic \textbf{I}nterpolant \textbf{T}ree (BSIT) sampler that decouples child node generation from explicit parent dependencies. By expanding the degrees of freedom during child node sampling, BSIT encourages diverse trajectory rollouts and unlocks broader path-space diversity—the exact catalyst needed for efficient exploration.
%By anchoring the terminal node via a Flow-map trajectory, we synthesize intermediate nodes through stochastic interpolation between the endpoint and a random source sample. 
%Crucially, we show that our sampler converges to the true target distribution faster than sampler relying on ddpm-like transition for child node sampling due to its enhance exploration capability, while naturally broadening its coverage to enhance sample diversity under a fixed sampling budget, without requiring additional function evaluations, thus this framework establishes a highly efficient foundation for online, reward-driven search. 
%Crucially, we demonstrate that the enhanced exploration capability of the sampler accelerates convergence to the target distribution with enhanced distribution coverage that maximizes sample diversity compared to samplers relying on DDPM-like transitions for child node generation-all without requiring extra function evaluations or reward feedback—establishing a highly efficient foundation for online, reward-driven search.
Crucially, we show that the BSIT’s improved exploration accelerates convergence to the target distribution while achieving broader distributional coverage than samplers that rely on DDPM-like transitions for child-node generation, all without incurring additional function evaluations or reward feedback, thereby providing a highly efficient basis for rare target search.
Extensive experiments across diverse search settings, including rare-species discovery and target-aligned molecule generation tasks, demonstrate BSIT's superior search efficiency over the baselines.

\end{abstract}

% Uncomment the following to link to your code, datasets, an extended version or similar.
% You must keep this block between (not within) the abstract and the main body of the paper.
% Make sure that you do not de-anonymize yourself with these links.
% \begin{links}
%     \link{Code}{https://aaai.org/example/code}
%     \link{Datasets}{https://aaai.org/example/datasets}
%     \link{Extended version}{https://aaai.org/example/extended-version}
% \end{links}

\section{Introduction}
Diffusion models, flow matching, and stochastic interpolants have emerged as the most powerful frameworks for generative modeling~\citep{song2020score,lipman2022flow,albergo2022building},
achieving state-of-the-art results across a wide range of modalities, including image synthesis~\citep{rombach2022high,ho2020denoising}, molecular conformer generation~\citep{hoogeboom2022equivariant,xu2022geodiff}, and text generation~\citep{lou2023discrete}. A growing frontier in this space is adapting these models to align with a reward function to enforce desirable sample properties~\citep{bansal2024universal,kim2022refining,ma2025inference}. %However, a fundamental bottleneck in many applications is the reward sparsity and the prohibitive cost of acquiring ground-truth reward feedback. Whether it's the rare species discovery task or generating image with a rare attribute, true reward signals are expensive to evaluate and sparse. 
%However, real-world deployment faces a fundamental bottleneck: ground-truth reward feedback is typically sparse and expensive to evaluate. In domains like rare species discovery or rare-attribute image generation, evaluating true reward signals incurs prohibitive costs under extreme sparsity.
A central challenge in many applications stems from extreme reward sparsity combined with the high cost of acquiring ground-truth feedback. In tasks such as rare species discovery or generating images with rare attributes, true reward signals remain both scarce and expensive to compute.
Consequently, constructing samplers capable of yielding high-quality, target-aligned samples while radically reducing reliance on ground-truth reward feedback is essential. The cornerstone of an effective, feedback-efficient sampler lies in its ability to conduct highly informed and strategically targeted exploration. 

While numerous feedback-efficient search strategies have been introduced~\citep{uehara2024feedback,kim2025test,jain2026diffusion,ji2026bootstrap}, they tend to underperform precisely in regimes that demand broad exploration—specifically when reward signals are extremely sparse and subject to a strict evaluation budget. %In such settings, the sampler must conduct robust exploration to discover high-utility regions while operating under a tightly constrained feedback budget, a requirement that existing methods often fail to meet. 
For instance,~\citep{uehara2024feedback} employs online RL fine-tuning of diffusion models to approximate a target KL divergence minimization. This inherently mode-seeking objective fails to capture the full diversity of high-utility regions within the target distribution. %Furthermore, because these frameworks rely on an online-trained reward model, systematic bias in the early phases of training can misdirect the fine-tuning process and permanently degrade sample quality.
Recent Sequential Monte Carlo–based inference-time scaling methods~\citep{kim2025test,singhal2025general,skreta2025feynman} circumvent direct model fine-tuning by operating on fixed generative priors. However, these approaches suffer from severe weight degeneracy, which collapses the proposal distribution, cripples feedback efficiency, and severely suppresses sample diversity~\citep{lee2025debiasing}. Furthermore, their reliance on reward values or gradients at every resampling step poses a major bottleneck. In practice, these signals are unavailable, and relying on approximations introduces systematic bias that ultimately degrades the sample quality.

\begin{figure*}[t]
\centering
\resizebox{\textwidth}{!}{%
\begin{tikzpicture}[
    x=0.86cm,y=0.86cm,
    >=Latex,
    font=\large,
    panel/.style={draw=black!12, rounded corners=2mm, fill=black!1, line width=0.8pt},
    state/.style={circle, draw=black!35, fill=black!5, minimum size=8.5mm, inner sep=0pt, font=\bfseries\normalsize},
    base/.style={state, draw=green!55!black, fill=green!9, text=green!45!black},
    bnode/.style={state, draw=teal!55!black, fill=teal!10, text=teal!60!black},
    noise/.style={state, draw=purple!45!black, fill=purple!8, text=purple!65!black},
    card/.style={draw=black!14, fill=white, rounded corners=1.6mm, align=left, inner sep=4pt, font=\normalsize},
    title/.style={font=\bfseries\Large, text=black!78},
    note/.style={font=\small, text=black!58},
    advantage/.style={draw=orange!75!black, fill=orange!12, rounded corners=1.6mm, line width=0.8pt, align=left, inner sep=4pt, text=orange!35!black, font=\small},
    arr/.style={-{Latex[length=2.5mm]}, line width=1pt},
    bflow/.style={arr, teal!65!black},
    pflow/.style={arr, violet!70!black},
    guide/.style={draw=black!18, rounded corners=10mm, line width=0.9pt},
    treeedge/.style={draw=blue!18, line width=0.75pt},
    treenode/.style={circle, draw=blue!20, fill=blue!5, minimum size=6mm, inner sep=0pt},
    trajpipe/.style={orange!70!black, line width=16pt, opacity=0.28, line cap=round, line join=round},
    trajpipeedge/.style={orange!80!black, line width=1.1pt, line cap=round, line join=round}
]

\begin{scope}[xshift=-1.15cm]
\node[title] at (10.55,6.72) {BSIT};
\node[state] (xt) at (7.05,5.50) {$x_t$};
\node[base] (xbase) at (11.00,-1.35) {$x_0$};
\node[noise] (eps1) at (9.05,5.85) {$\varepsilon_1$};
\node[noise] (eps2) at (10.95,5.85) {$\varepsilon_2$};
\node[noise, minimum size=6.5mm] (epsd) at (12.75,5.85) {$\cdots$};
\node[noise] (epsm) at (14.45,5.85) {$\varepsilon_m$};
\node[bnode] (c1) at (8.20,4.85) {$x_{t'_1}$};
\node[bnode] (c2) at (9.45,3.55) {$x_{t'_2}$};
\node[bnode, minimum size=6.5mm] (cd) at (10.75,2.10) {$\cdots$};
\node[bnode] (cm) at (12.45,-0.10) {$x_{t'_m}$};

\begin{scope}[on background layer]
% Faint branches indicate that the highlighted trajectory belongs to a
% larger search tree rooted at x_t.
\node[treenode] (t11) at (5.75,4.55) {};
\node[treenode] (t12) at (6.55,4.05) {};
\node[treenode] (t14) at (7.65,4.15) {};
\node[treenode] (t21) at (5.20,3.15) {};
\node[treenode] (t22) at (6.05,2.75) {};
\node[treenode] (t23) at (6.85,2.55) {};
\node[treenode] (t31) at (5.55,1.45) {};
\node[treenode] (t32) at (6.50,1.15) {};
\node[treenode] (t33) at (7.45,1.35) {};

\node[treenode] (t24) at (8.20,2.55) {};
\node[treenode] (t34) at (8.35,0.95) {};
\node[treenode] (t35) at (9.30,0.70) {};

% Final leaf row aligned vertically with x_0.
\node[treenode] (leaf1) at (5.15,-1.35) {};
\node[treenode] (leaf2) at (6.15,-1.35) {};
\node[treenode] (leaf3) at (7.15,-1.35) {};
\node[treenode] (leaf4) at (8.05,-1.35) {};
\node[treenode] (leaf5) at (8.95,-1.35) {};

\draw[treeedge] (xt) -- (t11);
\draw[treeedge] (xt) -- (t12);
\draw[treeedge] (t11) -- (t21);
\draw[treeedge] (t11) -- (t22);
\draw[treeedge] (t12) -- (t23);
\draw[treeedge] (t21) -- (t31);
\draw[treeedge] (t22) -- (t32);
\draw[treeedge] (t23) -- (t33);
\draw[treeedge] (xt) -- (t14);
\draw[treeedge] (t14) -- (t24);
\draw[treeedge] (t24) -- (t34);
\draw[treeedge] (t24) -- (t35);
\draw[treeedge] (t31) -- (leaf1);
\draw[treeedge] (t32) -- (leaf2);
\draw[treeedge] (t33) -- (leaf3);
\draw[treeedge] (t34) -- (leaf4);
\draw[treeedge] (t35) -- (leaf5);
\draw[trajpipe] (xt) -- (c1) -- (c2) -- (cd) -- (cm) -- (xbase);
\draw[trajpipeedge] (xt) -- (c1) -- (c2) -- (cd) -- (cm) -- (xbase);
\end{scope}

\draw[pflow] (xt) .. controls (6.15,3.25) and (7.95,-0.85) .. node[pos=0.45, left=2pt, text=violet!70!black, font=\bfseries\small] {1 NFE} (xbase);
\draw[bflow] (xbase) -- (c1);
\draw[bflow] (xbase) -- (c2);
\draw[bflow, dashed] (xbase) -- (cd);
\draw[bflow] (xbase) -- (cm);
\draw[bflow] (eps1) -- (c1);
\draw[bflow] (eps2) -- (c2);
\draw[bflow, dashed] (epsd) -- (cd);
\draw[bflow] (epsm) -- (cm);
\node[note, draw=teal!55!black, fill=teal!6, rounded corners=1.2mm, inner sep=2.2pt, text=teal!65!black] at (11.95,4.05) {$x_{t'_i}=I_{t'_i}(x_0,\varepsilon_i)$};
\end{scope}

\begin{scope}[xshift=12.35cm, yshift=0.15cm]
\node[font=\bfseries\Large, text=black!78] at (6.10,6.20) {Comparison with Current Tree Samplers.};
\draw[rounded corners=1.2mm, line width=0.8pt, draw=black!35, fill=black!1] (0,1.20) rectangle (12.20,5.75);
\foreach \x in {2.44,4.88,7.32,9.76} {\draw[black!28, line width=0.55pt] (\x,1.20) -- (\x,5.75);}
\foreach \y in {2.40,3.60,4.55} {\draw[black!28, line width=0.55pt] (0,\y) -- (12.20,\y);}
\node[font=\bfseries\small, align=center, text=black!72] at (1.22,5.13) {Method};
\node[font=\bfseries\scriptsize, align=center, text=black!72] at (3.66,5.13) {No Reward\\Gradient};
\node[font=\bfseries\scriptsize, align=center, text=black!72] at (6.10,5.13) {Dynamic\\Transition\\Schedule};
\node[font=\bfseries\scriptsize, align=center, text=black!72] at (8.54,5.13) {NFE\\Efficient};
\node[font=\bfseries\scriptsize, align=center, text=black!72] at (10.98,5.13) {No Parent State\\Dependency};
\node[font=\bfseries\normalsize, text=black!75] at (1.22,4.08) {DTS};
\node[font=\bfseries\normalsize, text=black!75] at (1.22,2.90) {BFMT};
\node[font=\bfseries\normalsize, text=black!75] at (1.22,1.70) {BSIT};
\node[font=\LARGE, text=green!55!black] at (3.66,4.08) {$\checkmark$};
\node[font=\LARGE, text=red!70!black] at (6.10,4.08) {$\times$};
\node[font=\LARGE, text=red!70!black] at (8.54,4.08) {$\times$};
\node[font=\LARGE, text=red!70!black] at (10.98,4.08) {$\times$};
\node[font=\LARGE, text=green!55!black] at (3.66,2.90) {$\checkmark$};
\node[font=\LARGE, text=green!55!black] at (6.10,2.90) {$\checkmark$};
\node[font=\LARGE, text=green!55!black] at (8.54,2.90) {$\checkmark$};
\node[font=\LARGE, text=red!70!black] at (10.98,2.90) {$\times$};
\node[font=\LARGE, text=green!55!black] at (3.66,1.70) {$\checkmark$};
\node[font=\LARGE, text=green!55!black] at (6.10,1.70) {$\checkmark$};
\node[font=\LARGE, text=green!55!black] at (8.54,1.70) {$\checkmark$};
\node[font=\LARGE, text=green!55!black] at (10.98,1.70) {$\checkmark$};
\node[font=\bfseries\scriptsize, text=green!55!black] at (5.60,0.72) {$\checkmark$ supported};
\node[font=\bfseries\scriptsize, text=red!70!black] at (8.45,0.72) {$\times$ not supported};
\node[advantage, text width=10.2cm] at (6.10,-0.95) {\textbf{No Explicit Parent State Dependency within BSIT enables:} \\[1pt] %for Child Node Sampling enables: \\[1pt]
$1.$ Superior Search Performance under Sparse-Reward.\\[1pt]
$2.$ Superior Performance under Limited Feedback-Budget and Compute Constraints.};
\end{scope}
\end{tikzpicture}%
}
\caption{\large{Conceptual Overview of BSIT. $I_{t^{\prime}_i}$ denotes Stochastic Interpolant operator.}}
\label{fig:bsit_trajectory}
\vspace{-3pt}
\end{figure*}

To mitigate these issues, tree-based inference-time samplers~\citep{jain2026diffusion,ji2026bootstrap,guo2026training} have been introduced that backpropagate only terminal rewards, eliminating the need to evaluate rewards at every intermediate denoising step. 
Despite their potential, current tree-based samplers~\citep{jain2026diffusion,ji2026bootstrap,li2025dynamic} are constrained by DDPM-like transitions for child node sampling. This creates tightly coupled intermediate nodes in a trajectory that yield highly correlated marginal samples across the denoising time steps, ultimately suppressing the broad path-space exploration vital for budget-constrained search. 
To overcome this bottleneck, we introduce a deceptively simple yet powerful trajectory construction strategy that bypasses the need for step-by-step, DDPM-like transitions for child node sampling. By anchoring the terminal node from any denoising time step powered by a pretrained Flow-map model, we synthesize all intermediate states of a trajectory by stochastically interpolating between this terminal bootstrap target and a random source sample. This design eliminates explicit parent-child dependencies, structurally minimizing correlation between intermediate nodes to unlock expansive path-space coverage.
By decoupling these marginal distributions across the denoising steps, our approach enables enhanced path-space exploration, ultimately ensuring broader target space coverage necessary to succeed under sparse reward settings. We depict a conceptual overview of the BSIT sampler in Figure 1. We summarize our key contributions below:
\begin{itemize}[noitemsep,topsep=0pt, leftmargin=*]
 \item We introduce the Bootstrap Stochastic Interpolant Tree, a novel framework that enhances the exploration capability of a tree sampler to enable feedback-efficient fine-grained search.
 \item We demonstrate that BSIT significantly outperforms existing methods in exploration-heavy regimes characterized by sparse rewards and strict feedback budgets.
 \item Through evaluation across diverse search settings, we show that BSIT achieves faster convergence and broader target distribution coverage compared to prior tree samplers.
 %\item We show that BSIT achieves faster convergence and superior target coverage under fixed budgets. %compared to prior tree-based samplers. % relying on DDPM-style transitions for child node sampling.
\end{itemize}

\vspace{-7pt}
\section{Related Work}
\vspace{-6pt}
\paragraph{Tree-Based Sampler For Search}
Tree-based samplers have been introduced for effective inference-time search and alignment. Diffusion Tree Sampler (DTS)~\citep{jain2026diffusion}  casts the denoising process as a finite-horizon tree, where, similar to MCTS, rollouts are used to continuously improve value estimates for intermediate noisy states. DTS thus enables a principled search in the space of denoising trajectories to identify the modes within high-volume regions of the target density. Unlike SMC-style samplers, tree-based frameworks backpropagate only terminal rewards, eliminating the need to evaluate rewards at every intermediate denoising step. While promising, current tree-based samplers~\citep{jain2026diffusion,guo2026training} are severely bottlenecked by the massive number of function evaluations (NFEs) needed for node valuation, rendering them impractical for exploration-heavy tasks. Moreover, their dependence on rigid, uniform transitions restricts adaptive search, and heuristic workarounds like beam search~\citep{li2025dynamic} fail to outperform standard UCT selection~\citep{kocsis2006bandit}. %Exacerbating these bottlenecks, existing exploration strategies are fundamentally budget-agnostic, rendering them infeasible for resource-constrained tasks. 
To resolve these drawbacks, \citep{ji2026bootstrap} recently proposed the Flow-map tree. By utilizing a flow-map to construct trajectories in a single NFE, this approach drastically reduces computational overhead while exploiting sufficient statistics for child-node sampling to enable dynamic transitions and flexible search.
\vspace{-6pt}
\paragraph{SMC-based Sampler for Search}
Inference-time scaling has seen rapid adoption, with Sequential Monte Carlo (SMC) serving as a primary framework. SMC methods approximate the target distribution by maintaining an evolving population of sample particles. 
SMC frameworks~\citep{phillips2024particle,del2006sequential,kim2025test,singhal2025general} utilize potential functions—which typically approximate the soft value function—to iteratively weight and resample particles at each step. While numerous variants have recently been adapted for diffusion model alignment~\citep{cardoso2023montecarloguideddiffusion,kim2025test,wu2023practical}, classical SMC theoretically guarantees both exact target sampling and optimal value estimation only in the limit of infinite particles.
In practice, repeated resampling inherently diminishes sample diversity due to high weight variance and imprecise value estimation. %Moreover, this process depends on a learned reward model that is typically poorly calibrated during initial online interactions. Consequently, this early-stage bias misdirects the reverse denoising trajectory, severely degrading final sample quality.
To combat particle weight degeneracy and debias particle-based guidance, several correction techniques~\citep{skreta2025feynman,lee2025debiasing} have recently been proposed. However, these methods falter in high-dimensional domains, such as image synthesis, due to their reliance on dense reward gradients at every resampling step. Because exact gradients are rarely available in practice, relying on approximations inevitably injects systematic bias into the sampling trajectory.
Although numerous gradient-free alignment methods circumvent the need for reward gradients via best-of-N sampling~\citep{stiennon2020learning,beirami2024theoretical}, this generate-and-filter paradigm incurs severe computational overhead. Furthermore, because intervention occurs exclusively post-hoc at the final output state, these approaches fundamentally fail to actively steer the intermediate generative trajectory—a capability essential for feedback-efficient sampling.
%\vspace{-6pt}
\begin{comment}
\paragraph{Direct Fine-tuning of Generative Models for Efficient Search}
Fine-tuning generative models using human feedback~\citep{ouyang2022training,touvron2023llama}, has become increasingly common. Existing approaches largely focus on optimizing predefined reward functions through supervised learning~\citep{lee2023aligning,wu2023better}, control-based strategies~\citep{ajay2022conditional,janner2022planning}, or policy-gradient methods~\citep{black2024training}. A key limitation of these frameworks is their reliance on a static reward model, where rewards are treated as fixed ground truth, precluding adaptive or online interaction. In contrast, we consider an online, interactive setting in which the target properties are not specified a priori but are progressively uncovered through sequential feedback. This formulation enables continual target discovery rather than one-shot alignment to a fixed reward proxy. Recent work introduces a feedback-efficient online fine-tuning approach for diffusion models~\citep{uehara2024feedback}; however, it depends on learning a separate online reward model to guide sampling. This design can exacerbate bias during early training stages, and the KL-regularized objective further exposes the method to mode collapse.
\vspace{-14pt}
\end{comment}
\section{Problem Formulation}
%\vspace{-3pt}
%Real-world data distribution is inherently long-tailed in nature. Thus, we assume we are given a long tail dataset, represented as $
%D_{lt}$,  consisting of \( m \) independent and identically distributed (i.i.d.) samples \( \{x_i\}_{i=1}^m \) drawn from a distribution \( \mathcal{P}_X \) over the data space \( X \). As human preferences are unique and heterogeneous, we assume the user preference are diverse and come from distinct modes of the underlying distribution and also from the long-tail region of the distribution. The user preferences are not known a priori, but revealed through interactive feedback. Obtaining feedback is costly, as it might require real world wet lab experiment, for example -in drug discovery. Thus our goal is to design a sampler that maximizes user preference within a pre-defined fixed query budget $B$.
%Real-world data are typically long-tailed, with most probability mass concentrated on a few frequent patterns and a vast number of rare ones. 
%Assuming access to a pretrained generative model (e.g., a Flow Matching model), trained on a dataset $\mathcal{D}=\{x_i\}_{i=1}^m$, consisting of $m$ i.i.d. samples drawn from an unknown distribution $\mathcal{P}_X$ over a data space $\mathcal{X}$. 
We assume access to a pretrained generative model, such as a Flow Matching model, optimized over a dataset $\mathcal{D}=\{x_i\}_{i=1}^m$, consisting of $m$ i.i.d. observations sampled from an unknown true distribution $\mathcal{P}_X$ defined over the space $\mathcal{X}$.
%In practice, desired target properties are inherently multimodal; thus, a given downstream objective may naturally favor several distinct modes of $\mathcal{P}_X$. These high-value target properties are not known a priori and are revealed only through costly interactive feedback—for example, via physical experiments or through human engagement—so feedback queries must be used sparingly. 
%In practice, downstream objectives are inherently sparse and multimodal, naturally favoring a sparse set of distinct regions of $\mathcal{P}_X$. Because often the case in practice the obtaining reward feedback is costly—such as physical experiments or human evaluations—the search process demands strict sample efficiency to minimize expensive queries.
In practice, downstream objectives are inherently sparse and multimodal, concentrating target properties within a few distinct regions of $\mathcal{P}_X$. Because acquiring reward feedback—such as through physical experiments or human evaluation—is often highly costly, the search process requires strict sample efficiency to minimize expensive queries.
%Our goal is therefore to design a sampler that, given a fixed feedback budget $\mathcal{B}$, efficiently allocates queries to learn the underlying multi-modal target structure and generates samples that maximally align with the discovered target properties under the data distribution. We formulate the objective as:

Therefore, we seek to derive a sampler $g_{\theta_\mathcal{T}}$, constrained by a fixed feedback budget $\mathcal{T}$, that, at inference time, generates samples that maximize alignment with desired high-utility properties. We formalize this objective as: $$\frac{1}{\mathcal{B}}\sum_{i=1}^{i=\mathcal{B}}\mathbb{E}_{x^i \sim g_{\theta_\mathcal{T}}}[r(x^i)]$$

%Therefore, our goal is to design a sampler that operates within a strict feedback budget $\mathcal{B}$ to uncover the underlying multimodal target structure efficiently and generate samples that maximally align with the discovered high-utility properties while remaining consistent with the data distribution. We formally define this objective as: $\max_{\{g_{\theta^{\mathcal{D}}_t}\}}\sum_{t=1}^{t=\mathcal{B}}\mathbb{E}_{x_t \sim g_{\theta^{\mathcal{D}}_t}}[r(x_t)]$
%$g_{\theta^{\mathcal{D}}_t}$ denotes the underlying generative model, parameterized by $\theta^{\mathcal{D}}_t$, used to draw samples at the $t$-th interaction step. $\theta^{\mathcal{D}}_0$ denotes the parameters of the base generative model, pretrained exclusively on $\mathcal{D}$ to generate samples before observing any ground-truth feedback. Unlike direct fine-tuning methods that update parameters after each feedback step, our framework operates entirely at inference time. Specifically, we keep the base model strictly frozen throughout the search, ensuring that $\theta^{\mathcal{D}}_0$ remains constant across all interaction steps $t \in \{1, \ldots, \mathcal{B}\}$.
Here, $\mathcal{B}$ represents inference-time evaluation budget, $r: \mathcal{X} \to [0, 1]$ denotes the probabilistic sparse reward function, where $r(x^i) = 1$ signifies that the $i$-th generated sample successfully exhibits the desired high-value target properties, and $r(x^i) = 0$ indicates a failure to do so. Leveraging the generative prior pre-trained on $\mathcal{D}$, our objective is twofold: first, to efficiently utilize the budget $\mathcal{T}$ to derive a sampler $g_{\theta_\mathcal{T}}$ that explores the vast space $\mathcal{X}$ to discover diverse, high-utility search trajectories; and second, to systematically exploit this sampler $g_{\theta_\mathcal{T}}$ at inference time to generate samples maximally aligned with these rare target properties.
%Leveraging the generative prior trained on $\mathcal{D}$, our objective is twofold: to efficiently utilize the budget T to derive a sampler that explore the vast space $\mathcal{X}$ to discover diverse, high-utility regions, and during inference to systematically exploit the derived sampler to generate samples that maximally align with these rare high-utility properties.
%\vspace{-14pt}
\section{Method: Background}
Effective, feedback-efficient search requires an exploration policy that strategically navigates the target landscape to sample efficiently from the target distribution:
\vspace{-3pt}
\begin{equation}
    \small{\pi^{*}(x) = \frac{1}{Z} p^{pre}_{\theta}(x) \exp(\tau \> r(x))}
    \label{eq:reward_tilted}
\end{equation}
Here, $p^{\text{pre}}_{\theta}(x)$ is the pretrained generative prior, $r(x)$ is the reward function, $\tau$ is the inverse temperature, and $Z$ is the normalizing constant. The target distribution can be seen as the optimal policy for the following objective~\citep{jain2026diffusion}:
\vspace{-3pt}
\begin{equation}
    \small{\pi^*(x) := \text{arg\,max}_{\pi} \mathbb{E}_{x \sim \pi(\cdot)} [r(x)] - \frac{1}{\tau} D_{\text{KL}}(\pi \parallel p^{pre}_\theta)}
    \label{eq:optimal_policy}
\end{equation}

Efficiently optimizing $\pi^*(x)$ requires the estimation of the soft value function at time $t$, defined as:
\begin{equation}
    \small{V_t(x_t) := \frac{1}{\tau} \log \mathbb{E}_{p_{\theta}(x_{0:t-1}|x_t)} \left[ \exp \left( \tau r(x_0) \right) \right]}
    \label{eq:soft_value_func}
\end{equation}
Estimating $x_t$'s value via Eq.~\ref{eq:soft_value_func} introduces a computational bottleneck by requiring full denoising trajectory simulations, which incur high NFEs. Standard Tweedie-based terminal estimators fail here due to inaccuracies at high noise levels. Furthermore, a fundamental tension exists: while highly efficient, ODE sampling lacks the stochastic transitions strictly required for search. Minimizing the discretization error of these stochastic steps requires them to be infinitesimal, demanding a prohibitively large number of transitions.
To address this conflict, prior work~\citep{potaptchik2026meta} derives an ODE-based flow model capable of sampling from $x_0 \sim p(x_0|x_t)$ in a single step. 
Theoretically, given $x_t$, an ODE transports the prior $p_1$ to the conditional posterior $p_{0\vert{}t}(\cdot\vert{}x_t)$. Its associated drift $b_s(\cdot;x_t)$ solves the standard flow matching problem~\cite{lipman2022flow}, explicitly targeting the conditional posterior $p_{0\vert{}t}(\cdot\vert{}x_t)$ instead of the marginal data distribution $p_0$:
\begin{equation}
\small{
\begin{aligned}
    {b}_s(x; x_t) &= \mathbb{E}\left[\dot{\alpha}_sI_1 + \dot{\beta}_sI_0 \mid I_s = x\right] \\
    I_s &= \alpha_sI_1 + \beta_sI_0; \>\>
    \text{where } I_1 \sim \mathcal{N}(0, I_d), I_0 \sim p_{0|t}(\cdot|x_t)
\end{aligned}
}
\end{equation}
Consequently, the probability flow associated with 
$b_s(\cdot; x_t)$ satisfies~\citep{holderrieth2025glass}:
\vspace{-3pt}
\begin{equation}
\small{\frac{d}{ds}x_s = b_s(x_s; x_t), \quad x_1 \sim \mathcal{N}(0, I_d) \implies \text{Law}(x_0) = p_{0|t}(\cdot|x_t)}
\label{eq:ode_cond}
\end{equation}
$\alpha_t$ and $\beta_t$ are time-dependent coefficients satisfying $\alpha_0 = \beta_1 = 0$ and $\alpha_1 = \beta_0 = 1$. Assuming a Gaussian prior $p_1$, the conditional drift $b_s$ can be derived analytically by reparameterizing the unconditional drift $b_t$~\cite{holderrieth2025glass}:
$$b_s(x_s; x_t) = w_1x_s + w_2 b_{t^*}(S(x_s, x_t)) + w_3 x_t$$
Where, $w_1, w_2,$ and $w_3$ represent scalar weights. $S$ serves as a linear \emph{sufficient statistic}, and $t^*$ denotes a reparametrized time parameter:
\vspace{-3pt}
\begin{equation}
\small{
\begin{aligned}
    S_{s,t}(x_s, x_t) &= \frac{\alpha_s \sigma^2_t x_s + \alpha_t \sigma^2_s x_t}{\sigma^2_t \alpha^2_s + \alpha^2_t \sigma^2_s} \\
    t^*(s, t) &= g^{-1}\left( \frac{\sigma^2_t \sigma^2_s}{\sigma^2_t \alpha^2_s + \alpha^2_t \sigma^2_s} \right), \quad g(t) = \frac{\sigma^2_t}{\alpha^2_t}
\end{aligned}
}
\label{eq:combined_reparam}
\end{equation}
While this reparameterization makes $b_s$ accessible, generating posterior samples from $p_{0\vert{}t}(\cdot\vert{}x_t)$ via ODE integration remains computationally intensive. To bypass this bottleneck, standard teacher-distillation utilizes the exact analytical field $b_s(x_s; x_t)$ as a teacher. The student model's velocity $\hat{v}$ is optimized using a combined objective, featuring an instantaneous loss that minimizes the mean squared error between the student's instantaneous velocity $\hat{v}_{s,s}$ and the teacher field:
\vspace{-3pt}
\begin{equation}
\small{\mathcal{L}_{inst}^{distill}(\hat{v}) := \int_0^1 \int_0^1 \mathbb{E} ||\hat{v}_{s,s}(x; x_t) - b_s(x; x_t)||^2 ds dt}
\end{equation}
Pairing this with a consistency loss~\cite{boffi2025build} ($\mathcal{L}^{\text{distill}}_{\text{cons}}$) ensuring the student accurately integrates this velocity over varying time intervals, defined as follows:
\vspace{-3pt}
\[
\small{
\begin{aligned}
    \int_{0}^{1} \int_{0}^{u} \mathbb{E} \bigg\| \hat{v}_{s,u}(I_{s}; I'_t) &- \text{sg}\big( b_u (\hat{X}_{s,u}(I_{s}; I'_t)); I'_t \big) \\
    &- (u - s)\partial_u \hat{v}_{s,u}(I_{s}; I'_t) \bigg\|^2 ds \, du
\end{aligned}
}
\]
Here, $X_{s,u}(\cdot \, ; x_t) : \mathbb{R}^d \to \mathbb{R}^d$ denotes the solution operator for the context-dependent ODE in Eq.~\ref{eq:ode_cond}. This distillation compresses the multi-step ODE trajectories into a single amortized map, enabling efficient, one-step sampling directly from the conditional posterior $p_{0\vert{}t}(\cdot\vert{}x_t)$.
%\vspace{-14pt}
\section{Method: BSIT Framework}
%Armed with the distilled flow-map model $\hat{v}(x;x_t)$, we first sample the terminal node in a single step from any given node $x_t$. In order to sample a child of $x_t$, instead of following the strict guidance of the denoiser conditioned on $x_t$, we compute a stochastic interpolant between the bootstrapped terminal node $x_0$ and a randomly sampled noise from the source distribution. Specifically, we perform the following steps:
\paragraph{Trajectory Construction}
Leveraging the distilled flow-map model $\hat{v}(x;x_t)$, we first generate the terminal node $x_0$ in a single step from any given parent $x_t$. \emph{To sample its child node ($x_{t-1}$)}, rather than relying on DDPM-style standard reverse-time denoising, we construct a stochastic interpolant between the terminal sample $x_0$ and a random noise vector drawn from the source distribution via the following steps:
\begin{equation}
    \small{x_{t-1} = \alpha_{t-1} x_0 + \beta_{t-1} \epsilon^{\prime}, \quad \text{where } x_0 \sim p_{\hat{v}(\epsilon;x_t)}(x_0 \mid x_t), \;} %\epsilon, \epsilon^{\prime} \sim \mathcal{N}(0, \mathbf{I})} %\quad \forall s \in (0, t)}
    \label{eq:flow_ss}
\end{equation}
Here, $\epsilon, \epsilon^{\prime} \sim \mathcal{N}(0, I)$. 
%Rather than stepping recursively backward through a denoiser, we construct the complete trajectory starting at $x_t$ (i.e., $x_s$ $\forall{s \in (0,t)}$) by iteratively evaluating the stochastic interpolant between a fixed bootstrapped terminal node $x_0$ and independent noise samples $\epsilon_s \sim \mathcal{N}(0, I)$ across the desired intermediate time steps $s \in (0,t)$. Specifically, we compute, $x_s =\alpha_{s} x_0 + \beta_{s} \epsilon^{s}, \forall{s \in (o,t)}$. This process populates every intermediate node between the parent and the terminal state, synthesizing the entire tree trajectory in a highly parallelizable and non-sequential manner. We term the proposed trajectory construction scheme Bootstrap Stochastic Interpolant Tree (a.k.a BSIT). 
Rather than stepping recursively backward through a denoiser, we construct the complete trajectory originating at $x_t$ by evaluating the stochastic interpolant across intermediate time steps $s \in (0, t)$. Specifically, using a bootstrapped terminal node $x_0$ alongside independent noise vectors $\epsilon_s \sim \mathcal{N}(0, \mathbf{I})$, each intermediate state is computed directly as: $x_s = \alpha_s x_0 + \beta_s \epsilon_s, \quad \forall s \in (0, t)$, where $\epsilon_s \sim \mathcal{N}(0, I)$. We denote $x_s \sim p_{\text{BSIT}}(x_s \mid x_t)$. This formulation populates all intermediate nodes between the parent and terminal states simultaneously, synthesizing the entire trajectory in a highly parallelizable, non-sequential manner. We term this trajectory construction scheme the Bootstrap Stochastic Interpolant Tree (BSIT).
Beyond eliminating the extra function evaluations (NFEs) required by traditional denoisers, BSIT provides complete flexibility over transition step sizes. Unlike DDPM-style transitions that constrain $t'$ to be infinitesimally close to $t$ to control discretization error, our stochastic interpolant formulation supports arbitrary step sizes. This enables dynamic search strategies that progressively scale transition steps as tree depth increases, offering precise control over exploration–exploitation dynamics.
%Beyond eliminating additional function evaluations (NFEs) incurred by traditional denoisers, this trajectory construction strategy offers complete flexibility over the transition step. Unlike DDPM-style transitions that constrain $t'$ to be infinitesimally close to $t$ to minimize discretization error, our BSIT based trajectory construction strategy allows arbitrary step sizes. This enables dynamic search strategies that can progressively scale transition steps as the tree deepens, allowing efficient control over exploration-exploitation dynamics. 
\vspace{-10pt}
\paragraph{Value Assignment and its Benefits}
In tree-based search frameworks, the sampler must accurately estimate the soft value function $V_t(x_t)$ at an intermediate parent node $x_t$ to guide the tree policy toward high-reward trajectories under a strict feedback budget. The soft value function of BSIT can be represented as: $$V_t(x_t) = \frac{1}{\tau} \log \mathbb{E}_{x_0 \sim p_{\hat{v}(\epsilon;x_t)}(\cdot \mid x_t)} \left[ \exp\left( \tau \cdot r(x_0) \right) \right]$$ 
Applying the law of total expectation and by conditioning on a child state $x_s$, we express $\mathbb{E}_{x_0 \sim p_{\hat{v}(\epsilon, x_t)}} \left[ \exp(\tau r(x_0)) \right]$ as:
$$ \mathbb{E}_{x_s \sim p_{\text{BSIT}}(\cdot \mid x_t)} \left[ \mathbb{E}_{x_0 \mid x_s, x_t} \left[ \exp(\tau r(x_0)) \right] \right]$$ 
Taking the log-soft max and using markovian transition property yields the BSIT Continuous Bellman Backup Identity:
\begin{equation}
V_t(x_t) = \frac{1}{\tau} \log \int p_{\text{BSIT}}(x_s \mid x_t) \exp\left( \tau \cdot V(x_s ) \right) \mathrm{d}x_s
\label{eq:bellman}
\end{equation}
where $V(x_s ) = \frac{1}{\tau} \log \mathbb{E}_{x_0 \mid x_s} [\exp(\tau r(x_0))]$ is the soft value estimate of the child node of $x_t$.
Although \emph{iterative Bellman updates} (Eq.~\ref{eq:bellman}) guarantee that the node value estimate $V_t(x_t)$ converges to the unique optimal value function $V^*$, high estimator variance severely impairs feedback efficiency. Specifically, elevated variance exponentially increases the number of rollouts needed to resolve optimal paths from suboptimal ones. Crucially, our trajectory construction strategy significantly reduces value estimation variance. This lower variance allows BSIT to achieve the same value function accuracy with strictly fewer rollouts, making it fundamentally more sample efficient under tight feedback budgets, as stated in the following Proposition.
%the soft Bellman backup operator $\mathcal{T}_{\text{BSIT}} V(x) = \frac{1}{\tau} \log \mathbb{E}_{x_s \mid x} [\exp(\tau V(x_s))]$ is a $\gamma$-contraction under the supremum norm $\Vert{}\cdot\Vert{}_\infty$, guaranteeing unique convergence to the optimal value function $V^*$, here we aim to derive a sampler that is capable of estimating optimal value with as minimal feedback sample as possible. To this end it is important to observe that  high estimator variance directly wastes the sample budget, as it requires an exponentially larger number of rollouts to distinguish optimal paths from suboptimal ones. Crucially, our trajectory construction strategy significantly reduces value estimation variance. This lower variance allows BSIT to achieve the same value function accuracy with strictly fewer rollouts, making it fundamentally more sample-efficient under tight feedback budgets, as stated in the following Proposition.
\begin{Proposition}
    Let $x_t$ be an intermediate parent node at time $t$. Let $\hat{V}_{\text{DDPM}}(x_t)$ and $\hat{V}_{\text{BSIT}}(x_t)$ be the empirical Monte Carlo estimators of the soft value function $V_t(x_t)$ constructed from a single rollout trajectory under DDPM-style child node sampling and BSIT, respectively. The variance of the BSIT value estimator is strictly lower than that of the DDPM-denoiser-based trajectory rollout:$\text{Var}(\hat{V}_{\text{BSIT}}(x_t)) < \text{Var}(\hat{V}_{\text{DDPM}}(x_t))$
\end{Proposition}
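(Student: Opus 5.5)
The natural route is to write both single-rollout estimators as functions of a terminal sample and compare them with the law of total variance (a Rao--Blackwell-type argument). Fix the parent $x_t$. Under either scheme, the single-rollout estimator is $\hat V(x_t)=\frac{1}{\tau}\log \exp(\tau r(x_0)) = r(x_0)$, or equivalently the backed-up quantity $\exp(\tau \hat V)$. Here $x_0$ is the leaf reached by the rollout. We therefore work with $Y:=\exp(\tau r(x_0))$, the unbiased Monte Carlo estimate of $\exp(\tau V_t(x_t))$, and pass to $\hat V$ at the end. Since $r\in[0,1]$, the map $y\mapsto \frac1\tau\log y$ is monotone and Lipschitz on $[1,e^\tau]$. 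Alternatively, since $r$ is binary, $\hat V$ is an affine function of $Y$, so variance comparisons transfer exactly.

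The first step is to decompose the DDPM rollout variance along its chain $x_t\to x_{t-1}\to\cdots\to x_0$. Iterating the law of total variance over the intermediate states gives
\[
\mathrm{Var}_{\mathrm{DDPM}}(Y)=\mathrm{Var}\big(\mathbb{E}[Y\mid x_0]\big)+\sum_{k}\mathbb{E}\big[\mathrm{Var}\big(\mathbb{E}[Y\mid x_{k-1}]\,\big|\,x_k\big)\big].
\]
This writes the variance as a sum of nonnegative contributions from each stochastic transition. Under BSIT, the terminal node is drawn in one step, $x_0\sim p_{\hat v(\epsilon;x_t)}(\cdot\mid x_t)$. The intermediate states $x_s=\alpha_s x_0+\beta_s\epsilon_s$ are then deterministic functions of $x_0$ and independent noise, so $Y$ is $\sigma(x_0)$-measurable. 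Hence only the terminal term survives: $\mathrm{Var}_{\mathrm{BSIT}}(Y)=\mathrm{Var}(Y\mid x_t)$, computed over the one-step posterior. Conditioning on the child $x_s$, as in the Bellman identity (Eq.~\ref{eq:bellman}), contributes no extra noise beyond that of $x_0$.

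The second step is to compare the terminal laws. If the distilled flow map is exact, the BSIT terminal law equals $p_{0|t}(\cdot\mid x_t)$, which is also the DDPM terminal law in the limit of exact transitions. The marginal variance of $Y$ would then coincide, so strictness must come from somewhere else. I would locate it in the discretization and accumulation error of the finite-step DDPM chain. Each DDPM transition uses a Gaussian kernel with variance $\sigma_k^2>0$ around a learned mean, so its terminal law is the true posterior convolved with strictly positive extra noise injected along the path. Concretely, I would model the DDPM leaf as $x_0^{\mathrm{DDPM}}=x_0^\star+\xi$, with $\xi$ a non-degenerate perturbation that is conditionally independent of the target posterior sample $x_0^\star$. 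The law of total variance then gives
\[
\mathrm{Var}_{\mathrm{DDPM}}(Y)=\mathrm{Var}_{\mathrm{BSIT}}(Y)+\mathbb{E}\big[\mathrm{Var}(Y\mid x_0^\star)\big]
\]
when BSIT samples $x_0^\star$ exactly. The last term is strictly positive whenever $r$ is non-constant on the support of the perturbation.

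The main obstacle is this strictness step, not the decomposition. One needs explicit assumptions: an exact (or at least no-noisier) flow map, and a DDPM chain whose accumulated noise has full support, so that $\mathbb{E}[\mathrm{Var}(Y\mid x_0^\star)]>0$ for the sparse binary reward. I would state these as standing assumptions and verify the positivity by noting that a Gaussian perturbation of the leaf crosses the boundary of $\{r=1\}$ with positive probability unless $x_t$ is deterministically in or out of the target set. Finally, I would convert the inequality for $Y$ into $\mathrm{Var}(\hat V_{\mathrm{BSIT}})<\mathrm{Var}(\hat V_{\mathrm{DDPM}})$ using the affine relation for binary $r$.
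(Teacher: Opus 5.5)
There is a genuine gap, and it sits in the step you yourself identify as the crux. The law of total variance does not give
\[
\mathrm{Var}_{\mathrm{DDPM}}(Y)=\mathrm{Var}_{\mathrm{BSIT}}(Y)+\mathbb{E}\big[\mathrm{Var}(Y\mid x_0^\star)\big].
\]
Take $x_0^{\mathrm{DDPM}}=x_0^\star+\xi$ with $\xi$ independent of $x_0^\star$. Conditioning on $x_0^\star$ then yields $\mathrm{Var}(g(x_0^\star))+\mathbb{E}[\mathrm{Var}(Y\mid x_0^\star)]$, where $g(x)=\mathbb{E}_\xi[e^{\tau r(x+\xi)}]$ is the $\xi$-smoothed reward. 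The first term is $\mathrm{Var}(g(x_0^\star))$, not $\mathrm{Var}(e^{\tau r(x_0^\star)})$, and smoothing can shrink it by more than the second term adds. Your identity would need something like $\mathbb{E}[Y_{\mathrm{DDPM}}\mid x_0^\star]=Y_{\mathrm{BSIT}}$, i.e.\ the extra noise acting additively in reward space. A perturbation placed inside $r$ does not give that.

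Under your own standing assumptions the conclusion can actually fail. For binary $r$, each variance equals $p(1-p)$, with $p$ the mass the respective terminal law puts on $\{r=1\}$. Take $d=1$, $x_0^\star\sim\mathcal{N}(0,1)$, $r=\mathbf{1}\{|x|<c\}$ with $c$ small, and $\xi\sim\mathcal{N}(0,s^2)$. Then $p'=\Pr(|x_0^\star+\xi|<c)<p\le 1/2$, so $p'(1-p')<p(1-p)$. The DDPM estimator has strictly \emph{smaller} variance, even though $\mathbb{E}[\mathrm{Var}(Y\mid x_0^\star)]>0$, so no positivity check can rescue this route.

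Your first step has a related flaw. Since $Y$ is $\sigma(x_0)$-measurable, $\mathrm{Var}(\mathbb{E}[Y\mid x_0])$ already equals $\mathrm{Var}(Y)$, so the chain decomposition double counts. Done correctly, it only re-expresses the variance under the terminal law and favors neither scheme, as you note.

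The paper gets strictness by a different mechanism, in three moves:
\begin{itemize}
\item It linearizes $r$ about $\mu=\mathbb{E}[x_0\mid x_t]$, so that $\mathrm{Var}(r(x_0)\mid x_t)\approx\nabla r(\mu)^\top\mathbf{\Sigma}\,\nabla r(\mu)$.
\item It compares $\mathbf{\Sigma}_{\mathrm{DDPM}}=\sum_k\sigma_k^2\mathbf{J}_k\mathbf{J}_k^\top$ with $\mathbf{\Sigma}_{\mathrm{BSIT}}=\sigma_0^2\mathcal{J}_{\hat v_\theta}\mathcal{J}_{\hat v_\theta}^\top$.
\item It assumes the distilled map matches the first term, $\|\mathcal{J}_{\hat v_\theta}^\top v\|^2\approx\|\mathbf{J}_1^\top v\|^2$, which leaves the positive definite excess $\sum_{k\ge 2}\sigma_k^2\mathbf{J}_k\mathbf{J}_k^\top$.
\end{itemize}
With $\nabla r(\mu)\neq 0$ and negligible remainders, this gives the inequality. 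Linearization is exactly what makes the extra terminal noise enter additively in $r$, which is the ingredient your identity lacks.

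That mechanism needs a smooth reward with non-vanishing gradient. Binary $r$, which you chose so that $\hat V$ is affine in $Y$, is precisely the case where it is unavailable, since $\nabla r=0$ almost everywhere. To repair your argument, drop the binary specialization and assume $r$ smooth with $\nabla r(\mu)\neq 0$ and small accumulated noise. You would then get only a first-order strict inequality, which is also all the paper's argument delivers. The counterexample above shows that without such restrictions the statement is not true.
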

\vspace{-10pt}
\paragraph{Remark 1}
By the Hoeffding bounds for Monte Carlo convergence, the number of samples $M$ required to achieve an estimation error within a tolerance $\epsilon$ with high probability scales linearly with the variance: $M \propto \frac{\text{Var}(\hat{V})}{\epsilon^2}$. Since $\text{Var}(\hat{V}_{\text{BSIT}}) < \text{Var}(\hat{V}_{\text{DDPM}})$, BSIT requires strictly fewer rollout trajectories to achieve a given value function accuracy, demonstrating superior sample efficiency under constrained feedback budgets. 

Importantly, by breaking strict parent-child dependencies along the trajectory, BSIT decorrelates the induced marginal distributions at different tree depths. This structural decoupling expands path-space exploration and yields broader distributional coverage, ultimately generating highly diverse, target-aligned samples. 
The following proposition demonstrates that BSIT yields higher trajectory entropy, underpinning its enhanced exploratory capacity.
\vspace{-6pt}
\begin{Proposition}
    Let $\Omega = \mathcal{C}([0, t], \mathbb{R}^d)$ be the space of continuous path segments from time $t$ to $0$ starting at $x_t$. %Let $\mu_{\text{Diff}}$ and $\mu_{\text{BSIT}}$ be the probability measures induced on $\Omega$ by the sequential denoiser based tree sampler and the BSIT framework, respectively.
    For any intermediate timestamp $s \in (0, t)$: %The spatial support of the BSIT path density strictly contains the support of the reverse-time sequential diffusion path density: $\text{supp}(p_{\text{DDPM}}(x_s \mid x_t)) \subseteq \text{supp}(p_{\text{BSIT}}(x_s \mid x_t))$. Moreover, 
    The conditional path entropy of BSIT is strictly greater than that of traditional reverse-time sequential diffusion: $H_{\text{BSIT}}(x_s \mid x_t) > H_{\text{DDPM}}(x_s \mid x_t)$
\end{Proposition}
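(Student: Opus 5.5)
The plan is to reduce both conditional distributions to explicit forms and compare their entropies through a decomposition of the form ``entropy of a mixture of Gaussians with a common covariance.'' For BSIT, the marginal at time $s$ given $x_t$ is
\[
x_s=\alpha_s x_0+\beta_s\epsilon_s,\qquad x_0\sim p_{0|t}(\cdot\mid x_t),\quad \epsilon_s\sim\mathcal N(0,I)\ \text{independent of } x_0 .
\]
So $p_{\text{BSIT}}(x_s\mid x_t)=\int \mathcal N(x_s;\alpha_s x_0,\beta_s^2 I)\,p_{0|t}(x_0\mid x_t)\,dx_0$.

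For the DDPM-style sampler, the exact reverse-time marginal of the pretrained process at $s$ given $x_t$ is the bridge law
\[
p_{\text{DDPM}}(x_s\mid x_t)=\int p(x_s\mid x_0,x_t)\,p_{0|t}(x_0\mid x_t)\,dx_0 .
\]
Under the Gaussian interpolant, $p(x_s\mid x_0,x_t)=\mathcal N\bigl(\mu_s(x_0,x_t),\,\Sigma_{s|t}\bigr)$. Here $\Sigma_{s|t}=\tilde\beta_s^2 I$, with the usual bridge variance
\[
\tilde\beta_s^2=\beta_s^2-\frac{\beta_s^2 c_{s,t}^2}{\beta_t^2}\ \text{(schematically)},
\]
which is strictly smaller than $\beta_s^2$ for $s<t$ because conditioning on $x_t$ removes part of the noise. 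I will make this precise by writing the joint Gaussian of $(I_s,I_t)$ given $x_0$ under the interpolant and computing the Schur complement.

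The comparison then runs as follows. BSIT is obtained from the DDPM bridge by two operations. First, the mean $\mu_s(x_0,x_t)$, which contains a deterministic $x_t$-dependent component, is replaced by $\alpha_s x_0$. Second, the covariance $\tilde\beta_s^2 I$ is replaced by $\beta_s^2 I$. Equivalently, $x_s^{\text{BSIT}}\overset{d}{=}A\,x_s^{\text{DDPM}}+\text{(shift)}+\xi$. The map $A$ (in the scalar Gaussian setting, the rescaling of the $x_0$-coefficient) must be tracked, and $\xi\sim\mathcal N(0,(\beta_s^2-\tilde\beta_s^2 a)I)$ is independent of everything else.

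The cleanest route is to write each law as (law of the $x_0$-dependent mean) convolved with (isotropic Gaussian):
\[
p_{\text{DDPM}}=\mathrm{Law}(\kappa_s x_0)*\mathcal N(0,\tilde\beta_s^2I),\qquad
p_{\text{BSIT}}=\mathrm{Law}(\alpha_s x_0)*\mathcal N(0,\beta_s^2I),
\]
with the $x_t$ term in the DDPM case contributing only a translation, which leaves entropy unchanged. Translation invariance together with the scaling identity $H(cX)=H(X)+d\log|c|$ then lets me compare the two laws on a common scale. Strict monotonicity of entropy under convolution with a nondegenerate independent Gaussian (via de Bruijn's identity, or the entropy power inequality $e^{2H(X+\xi)/d}\ge e^{2H(X)/d}+e^{2H(\xi)/d}$) yields the strict inequality, provided the BSIT noise dominates after rescaling.

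The main obstacle is exactly this rescaling step. The DDPM coefficient $\kappa_s$ on $x_0$ may be smaller than $\alpha_s$, so BSIT also spreads the $x_0$-component more, and I must check that the two effects do not cancel in the wrong direction. I would handle it by showing that
\[
\frac{\beta_s^2}{\alpha_s^2}>\frac{\tilde\beta_s^2}{\kappa_s^2},
\]
i.e.\ that BSIT's effective signal-to-noise ratio at $s$ is strictly lower. This is expected because the bridge uses the information in $x_t$, and it follows from an explicit computation of the Gaussian bridge coefficients. I would then normalize both variables by their $x_0$-coefficient, apply the entropy power inequality to the normalized pair, and undo the scaling. If the coefficients turn out unfavorable for some schedules, the fallback is to state the result for schedules with $\kappa_s\le\alpha_s$, which includes the linear interpolant. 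Strictness requires $s\in(0,t)$ so that $\beta_s^2>\tilde\beta_s^2>0$.
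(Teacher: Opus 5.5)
\textbf{The crux step fails.} Your mixture representation over the common posterior $p_{0|t}(\cdot\mid x_t)$ is a genuinely different, and potentially sharper, route than the paper's. The paper lower-bounds $H_{\text{BSIT}}$ by $\frac d2\log(2\pi e\beta_s^2)$ via the EPI. It upper-bounds $H_{\text{DDPM}}$ by the Gaussian maximum-entropy bound for a single Euler--Maruyama step of variance $O(\Delta t)$, and then simply assumes this variance is $\ll\beta_s^2$. However, the step you single out as the crux is false for the coupling that a DDPM-style (Markov) sampler actually realizes. Take $x_t=(\alpha_t/\alpha_s)x_s+b\,\eta$ with $b^2=\beta_t^2-\alpha_t^2\beta_s^2/\alpha_s^2$. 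The bridge is then $x_s\mid x_0,x_t\sim\mathcal N(\rho\,\alpha_s x_0+\lambda x_t,\ \rho\,\beta_s^2 I)$, where $\rho=b^2/\beta_t^2\in(0,1)$ and $\lambda=\alpha_t\beta_s^2/(\alpha_s\beta_t^2)$. So $\kappa_s=\rho\alpha_s$ and $\tilde\beta_s^2=\rho\beta_s^2$, which gives $\tilde\beta_s^2/\kappa_s^2=\beta_s^2/(\rho\,\alpha_s^2)>\beta_s^2/\alpha_s^2$ for every schedule, the linear interpolant included. Your inequality is reversed.

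Equivalently, writing $x_s^{\text{BSIT}}\overset{d}{=}A\,x_s^{\text{DDPM}}+\text{shift}+\xi$ would force $A=1/\rho$ and $\operatorname{Var}(\xi)=\beta_s^2-\beta_s^2/\rho<0$. After normalizing by the $x_0$-coefficient, the two effects genuinely conflict, and the EPI cannot settle this because it only lower-bounds the entropy of a sum. Your fallback $\kappa_s\le\alpha_s$ is vacuous: it always holds, and it is exactly what produces the conflict. If you instead take the paper's single-noise interpolant literally, $(I_s,I_t)$ given $x_0$ is degenerate. The Schur complement is then $0$, and you lose your $\tilde\beta_s^2>0$.

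\textbf{The missing idea.} Normalize by the noise rather than the signal, and use data processing instead of the EPI. Write $x_s^{\text{DDPM}}-\lambda x_t=\sqrt\rho\,Y'$ with $Y'=\sqrt\rho\,\alpha_s x_0+\beta_s\xi$. Let $Y=\alpha_s x_0+\beta_s\xi$ be the BSIT variable. Since $Y'\overset{d}{=}\sqrt\rho\,Y+\sqrt{1-\rho}\,\beta_s\xi'$, the chain $x_0\to Y\to Y'$ is Markov. Therefore $H(Y')=H(\beta_s\xi)+I(x_0;Y')\le H(\beta_s\xi)+I(x_0;Y)=H(Y)$. Hence $H_{\text{DDPM}}(x_s\mid x_t)\le\frac d2\log\rho+H_{\text{BSIT}}(x_s\mid x_t)$, which is strict because $\rho<1$.

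This needs $\alpha_t>0$ (at the pure-noise level $\rho=1$ and the two laws coincide) and finite second moments. It holds for every $s\in(0,t)$ without any small-step assumption, so it is stronger than the paper's argument. But it is not the argument your proposal lays out.
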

\vspace{-10pt}
\paragraph{Adaptability with Deterministic Flow-map Prior} However, deploying the BSIT framework still relies on the availability of a distill meta Flow-Map model ($\hat{v}(\epsilon;x_t)$) capable of sampling from $p_{\hat{v}(\epsilon;x_t)}(x_0 | x_t)$~\citep{potaptchik2026meta,holderrieth2025glass}. 
The distilled model $\hat{v}(\epsilon; x_t)$ allows us to repeatedly sample distinct terminal nodes $x_0$ from a fixed parent $x_t$ simply by passing new noise vectors $\epsilon \sim \mathcal{N}(0, \mathbf{I})$ through the map. Consequently, revisiting high-value nodes during exploitation never leads to sample duplication. In practice, however, such a distilled stochastic model may be unavailable. In its absence, the deterministic nature of a standard flow map severely restricts repeated node visits: while the first visit (during the construction) to $x_t$ can yield a new terminal sample via Equation~\ref{eq:flow_ss}, all subsequent visits deterministically generate identical $x_0$ outcomes. This redundancy inefficiently exhausts the sampling budget, ultimately hurting downstream search performance.  
\vspace{-10pt}
\paragraph{Dynamic Node State} To overcome this challenge, we introduce a dynamic node-updating strategy for tree construction. While we track visit counts and value estimates like prior tree samplers, we do not treat node features as static once created. Instead, we dynamically update a node's feature representation upon each revisit, enabling deterministic flow maps to continuously generate novel terminal samples without duplication. Specifically, when revisiting $x_t$, we draw a new noise vector $\epsilon'$ and re-evaluate the stochastic interpolant between the fixed initial terminal node $x_0$ and $\epsilon'$ to obtain an updated state $x_t^{\text{new}}$. Because $x_t^{\text{new}}$ shares the same marginal distribution as $x_t$, all historical node statistics—including value estimations—remain valid. This simple dynamic refresh grants the deterministic flow-map additional freedom to sample novel targets from $p(x_0 \mid x_t)$, significantly enhancing local path exploration over static-node approaches.
\vspace{-15pt}
\paragraph{Sampling Mechanism and Convergence}
Similar to prior tree samplers, sampling in BSIT proceeds top-down from the root $x_T$ to a terminal node. At each step, a child node $x_{t-1} \in \mathcal{C}(x_t)$ is selected based on UCT score, involving $V(x_{t-1})$ and number of visits, terminating upon reaching $t = 0$ or an unexpanded node. Crucially, BSIT guarantees asymptotic convergence to the target distribution as the feedback budget increases as stated in the following proposition.
\vspace{-2pt}
\begin{Proposition}
    Let $r: \mathcal{X} \to \mathbb{R}$ be a bounded reward function, and $p_{prior}(x_0)$ be the source prior distribution. As the number of tree iterations $M \to \infty$, the empirical distribution of terminal states generated by BSIT, denoted by $\pi^{(M)}(x_0)$, converges in total variation to the target energy-based policy: $\pi^*(x_0) = \frac{p_{prior}(x_0) \exp\left( \tau \cdot r(x_0) \right)}{Z_0}, \quad \text{where } \quad \quad Z_0 = \int_{\mathcal{X}} p_{prior}(x_0) \exp\left( \tau \cdot r(x_0) \right) \mathrm{d}x_0$
\end{Proposition}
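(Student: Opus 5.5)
We argue by backward induction on tree depth, in the style of the convergence analyses of MCTS-based samplers (DTS and its flow-map variant). The target is to show two things: the UCT/Boltzmann selection policy at every node converges to the soft-optimal conditional policy, and the product of these conditionals along a root-to-leaf path equals $\pi^*(x_0)$. Concretely, consider a tree with finite depth $K$ and time grid $T=t_K>\dots>t_0=0$. The target path measure is
\[
\pi^*(x_{t_{K-1}},\dots,x_0\mid x_T)=\prod_{k} p_{\text{BSIT}}(x_{t_{k-1}}\mid x_{t_k})\,\exp\bigl(\tau(V(x_{t_{k-1}})-V(x_{t_k}))\bigr).
\]
Using the BSIT Bellman identity (Eq.~\ref{eq:bellman}), each factor is a normalized density. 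The product telescopes to $p(x_0\mid x_T)\exp(\tau r(x_0))/\exp(\tau V(x_T))$, since $V(x_0)=r(x_0)$. Marginalizing over the root $x_T\sim\mathcal N(0,I)$ then gives $p_{prior}(x_0)\exp(\tau r(x_0))/Z_0$. This requires that the BSIT kernel composed with the flow map reproduce the prior marginal. That holds because $x_s=\alpha_s x_0+\beta_s\epsilon_s$ with $x_0\sim p_{0|t}(\cdot\mid x_t)$, so $x_s$ has the correct interpolant marginal whenever $x_t$ does. The dynamic node refresh preserves this as well, as noted in the Dynamic Node State paragraph.

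The key steps are as follows.
\begin{enumerate}
\item \textbf{Leaf values.} For leaves, the Monte Carlo terminal value estimate converges almost surely to $r$ by boundedness of $r$ and the strong law. At interior nodes, the empirical soft backup $\frac1\tau\log\frac1{N}\sum_j \exp(\tau \hat V(x^{(j)}_s))$ converges to $V_t(x_t)$. This is where Proposition~1 (reduced variance) enters for rates, though only consistency is needed.
\item \textbf{Continuity of the backup.} Since $r$ is bounded, all $\exp(\tau\hat V)$ lie in $[e^{-\tau\|r\|_\infty},e^{\tau\|r\|_\infty}]$. The log-sum-exp map is therefore Lipschitz there, and induction on depth propagates almost-sure convergence upward.
\item \textbf{Visit counts.} The UCT exploration bonus guarantees that every node is visited infinitely often. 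Hence the children sets $\mathcal C(x_t)$ grow and become an i.i.d. sample from $p_{\text{BSIT}}(\cdot\mid x_t)$.
\item \textbf{Selection probabilities.} The selection probabilities, proportional to $\exp(\tau\hat V(x_{t-1}))$ over children, define a self-normalized importance sampler. Its empirical measure converges weakly to the tilted conditional above. To upgrade this to total variation on terminal states, I would combine a Scheffé-type argument on densities with the bounded weights.
\item \textbf{Assembling the path.} Finally, a product/telescoping lemma for TV distance of Markov chains (TV of joint $\le$ sum of expected per-step TVs) gives convergence of $\pi^{(M)}$.
\end{enumerate}

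The main obstacle is the final TV step. The empirical distribution over finitely many generated terminal states is atomic, so it cannot converge in TV to an absolutely continuous $\pi^*$ literally. I would first prove weak convergence, and state TV convergence for the sampling law $\mathbb E[\pi^{(M)}]$, meaning the law of a freshly drawn terminal state given the tree. With bounded importance weights, self-normalized importance sampling gives $\mathrm{TV}\le C/\sqrt{N}$ for that law, and summing over the $K$ levels finishes the argument. A secondary subtlety is the non-stationarity of UCT selection, since rewards feed back into which children are expanded. I would handle this by noting that expansion draws from $p_{\text{BSIT}}$ independently of rewards, so only the weighting is adaptive, and the weights converge.
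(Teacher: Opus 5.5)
Your core computation is the paper's Steps 1--3: tilt $p_{\text{BSIT}}$ by $e^{\tau(V(x_{t-1})-V(x_t))}$, telescope, use $V(x_0)=r(x_0)$, and marginalize. You are more careful than the paper in two places. You justify $p_{\text{BSIT}}(x_0)=p_{prior}(x_0)$ by marginal preservation, which the paper uses silently. You also correctly note that an atomic empirical measure cannot converge in total variation to an absolutely continuous $\pi^*$.

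As written, however, the core has two defects.
\begin{itemize}
\item \textbf{Untilted root.} Marginalizing $p(x_0\mid x_T)e^{\tau r(x_0)}/e^{\tau V(x_T)}$ over an untilted root $x_T\sim\mathcal N(0,I)$ gives $e^{\tau r(x_0)}\int\mathcal N(x_T;0,I)\,p(x_0\mid x_T)\,e^{-\tau V(x_T)}\,\mathrm{d}x_T$. This equals $\pi^*$ only if $V(x_T)$ does not depend on $x_T$. That holds for an exact meta-flow-map posterior at pure noise, but fails for a deterministic flow map. The paper avoids this with the dummy root $x_{T+1}$: the level-$T$ nodes are themselves selected with weights $e^{\tau V(x_T)}$, and $e^{\tau V(x_{T+1})}=Z_0$.
\item \textbf{Bellman identity.} Each tilted factor normalizes only if Eq.~(\ref{eq:bellman}) holds, and with $V$ defined through the one-step flow-map posterior (as in the paper) it does not. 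Under the BSIT joint, $p(x_0\mid x_s,x_t)\propto p_{0|t}(x_0\mid x_t)\,\mathcal N(x_s;\alpha_s x_0,\beta_s^2 I)$, which is not $p_{0|s}(x_0\mid x_s)$, so the ``Markov'' step fails. Instead define $V$ recursively by $e^{\tau V_t(x_t)}=\int p_{\text{BSIT}}(x_s\mid x_t)\,e^{\tau V_s(x_s)}\,\mathrm{d}x_s$ with $V_0=r$. Normalization then holds by construction, and this is exactly what your log-mean-exp backup estimates.
\end{itemize}

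The remaining gaps are in the limiting argument. The paper only asserts that the value estimates ``approach the true soft values''; you try to supply the argument, and several steps do not go through.
\begin{itemize}
\item \textbf{Step 2 (backup continuity).} Log-mean-exp is $1$-Lipschitz in the sup norm over children. But progressive widening keeps adding children whose only estimate is a single rollout, $\hat V_j=r(x_0^{(j)})$, with $O(1)$ error. So $\max_j|\hat V_j-V_j|$ never tends to zero, and almost-sure convergence child by child gives no uniformity over an ever-growing child set. You need three things: control of the average $\frac1N\sum_j|e^{\tau\hat V_j}-e^{\tau V_j}|$; a quantitative inductive hypothesis (an error bound depending only on depth and visit count, uniform in the node's state); and a proof that under UCT with $|\mathcal C(x_t)|\approx C\,N(x_t)^\alpha$ the fraction of under-visited children vanishes.
\item \textbf{Atomicity in the TV step.} The law of a fresh draw \emph{given the tree} is supported on the tree's leaves, so it is still atomic. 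Only the tree-averaged law $\mathbb E[\pi^{(M)}]$ can converge in TV.
\item \textbf{Per-level TV bound.} The bound ``i.i.d.\ SNIS at rate $C/\sqrt N$, then sum over levels'' is not justified. A node is selected with probability $\propto e^{\tau\hat V}$, which depends on its subtree, so conditioning on that selection tilts the law of its children.
\item \textbf{A cleaner route.} With log-mean-exp backups the selection probabilities along a path telescope exactly. Leaf $\ell$ is drawn with probability $w(\ell)e^{\tau r(x_\ell)}/\sum_{\ell'}w(\ell')e^{\tau r(x_{\ell'})}$, where $w(\ell)$ is the product of $|\mathcal C(a)|^{-1}$ over the ancestors $a$ of $\ell$. (With the unnormalized log-sum-exp of the pseudocode, $w(\ell)$ disappears and the limit is biased toward heavily expanded regions.) This is a single global SNIS. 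The standard SNIS bias bound, uniform over $|\phi|\le 1$, then reduces TV convergence of the averaged law to two statements about the weighted leaf measure $\mu_M=\sum_\ell w(\ell)\delta_{x_\ell}$: $\mathbb E[\mu_M]\to p_{prior}$ in TV, and $\mu_M(e^{\tau r})\to Z_0$ in $L^1$.
\item \textbf{Dynamic node refresh.} Children i.i.d.\ from $p_{\text{BSIT}}(\cdot\mid x_t)$ holds for the meta-flow-map variant. It fails under the dynamic node refresh of the deterministic flow-map variant, where a node's state changes between expansions. There the argument must be redone with nodes indexed by their stored terminal $x_0$.
\end{itemize}
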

\vspace{-11pt}
We provide the pseudocode of BSIT in the Appendix.
\paragraph{Generalizability to Discrete Domain} 
The key motivation for extending inference-time search with BSIT samplers to the discrete domain is the continuous relaxation introduced by Categorical Flow Maps~\citep{roos2026categorical} (CFMs). Rather than navigating the complexities of discrete Markov chains, CFMs model the generative process as a smooth, continuous probability path constrained directly on the probability simplex $\Delta^K$. The core trick relies on using a simplex-constrained endpoint parameterization. Because the entire flow trajectory unfolds in this continuous space, we can plug in continuous-domain inference-time search algorithms—like BSIT samplers—to categorical settings. The only modification required is a simple argmax projection at the very end of the generated path to yield the discrete output. Our empirical findings on a large-scale molecule generation task highlight BSIT's effectiveness for efficient search in the discrete domain.

\vspace{-7pt}
\section{Experiments and Results}
\vspace{-7pt}
\paragraph{Analysis in 2D setting} We begin by analyzing BSIT in simple 2D settings to answer the following questions:
%\vspace{-12pt}
\begin{itemize}
    \item \textbf{Sparse \& Multimodal Rewards:} Can BSIT sample accurately from the target distribution using highly sparse, multimodal reward signals?
    \item \textbf{Budget-Constrained Search:} Does BSIT effectively uncover high-utility modes under limited feedback budgets?
    %\item \textbf{OOD Generalization:} Can BSIT generate target-aligned samples with an unseen/novel target property that are strictly out-of-distribution (OOD) relative to the prior?
    %\item \textbf{Efficiency under Standard Setting:} How does BSIT's feedback efficiency compare to prior methods in standard search settings?
\end{itemize}
\vspace{-8pt}
\paragraph{Experimental Settings}
Consider a Flow-Map model trained on alternate regions of support in a checkerboard pattern as the prior $p(x_0)$ (see Fig.~\ref{fig:vis-checkerboard} Prior). To evaluate BSIT under sparse, multimodal conditions, we design a reward function $r_{\text{sparse}}(x_0)$ over the support of $p$. %The reward is $1$ within three disjoint squares representing high-utility modes, and $0$ otherwise 
(see Fig.~\ref{fig:vis-checkerboard} Target). To evaluate feedback and compute efficiency, we compare BSIT against competitive baselines across a wide spectrum of feedback budgets ($\mathcal{B}$) and NFE constraints, respectively. To ensure a fair comparison, we hold the feedback budgets $\mathcal{T}$ and NFE constraints constant across all tree samplers during the construction phase. %Additionally, we evaluate BSIT's capability to generate target-aligned samples outside the support of the prior $p(x_0)$. For this analysis, we define a reward function $r_{\text{OOD}}(x_0)$ that yields $1$ in the region between two squares of the prior's support and $0$ otherwise (see Fig.~\ref{fig:vis-checkerboard} target (right)). 
Across all settings, the ultimate goal is to draw as many samples as possible with high $r_{\text{sparse}}(x_0)$ under a fixed evaluation budget.
\newcommand{\largeresultpanel}[2]{%
  \begin{minipage}[t]{0.18\textwidth}
  \centering
  {\small\bfseries #1\par}
  \vspace{4pt}
  \includegraphics[width=\linewidth,height=0.23\textheight,keepaspectratio]{#2}
  \end{minipage}%
}
\vspace{-11pt}
\begin{figure*}[t!]
\centering
{\setlength{\tabcolsep}{1.5pt}%
\setlength{\arrayrulewidth}{0.5pt}%
\begin{tabular}{@{}ccccc@{}}
\largeresultpanel{Prior}{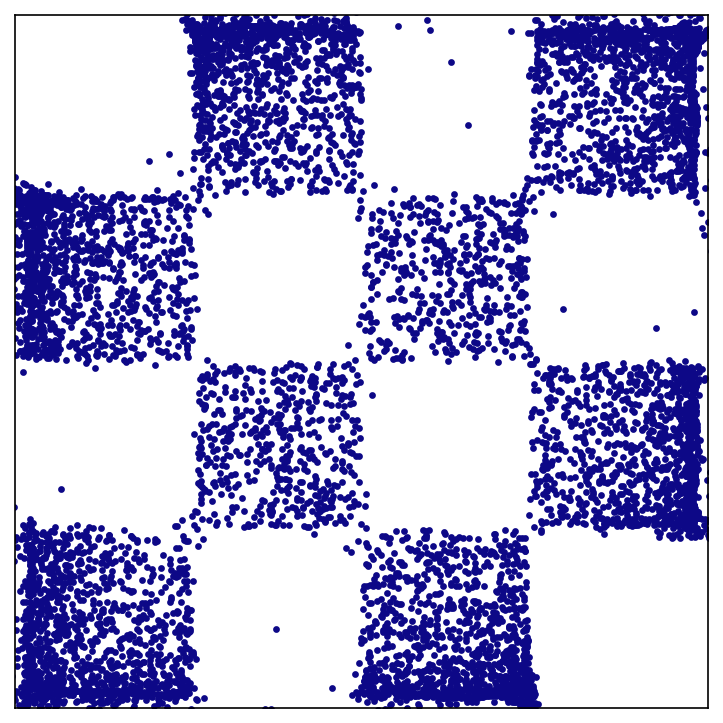} &
\largeresultpanel{Target}{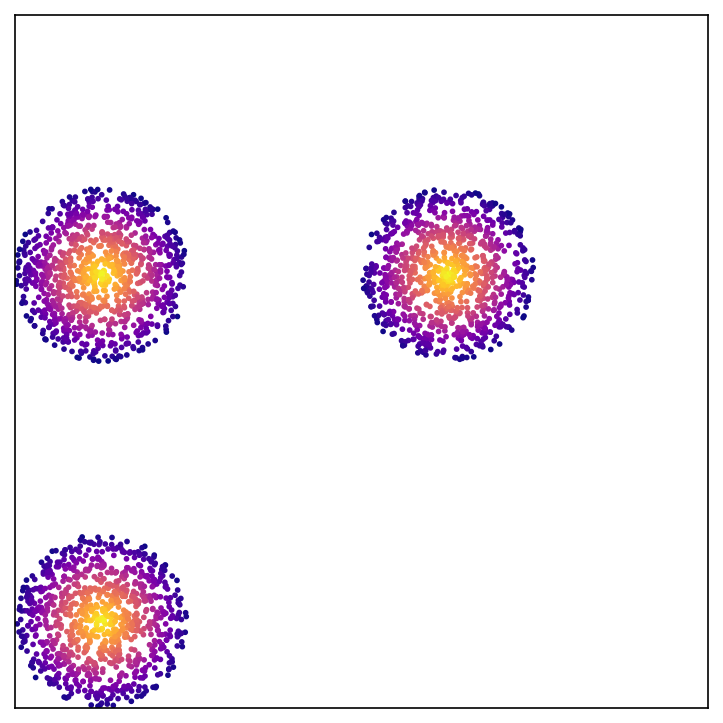} &
\largeresultpanel{DTS}{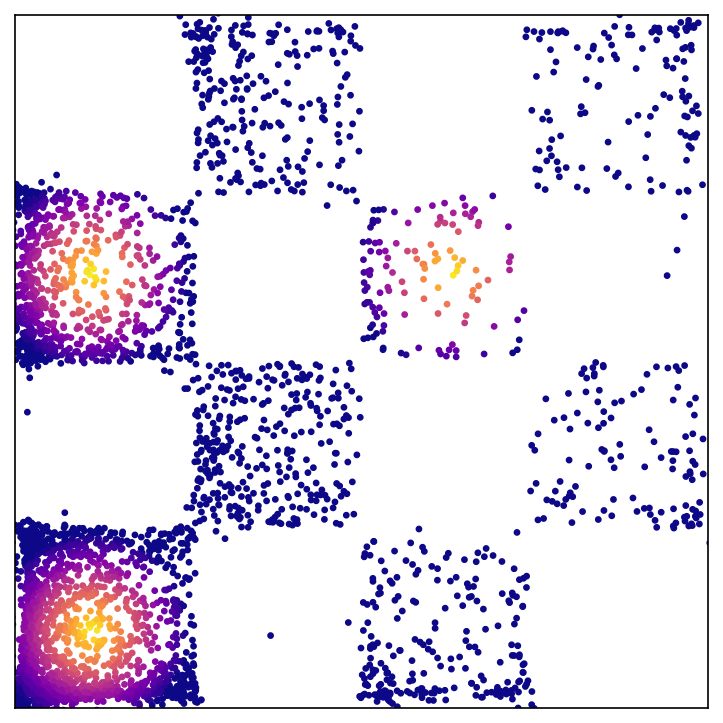} &
\largeresultpanel{BFMT}{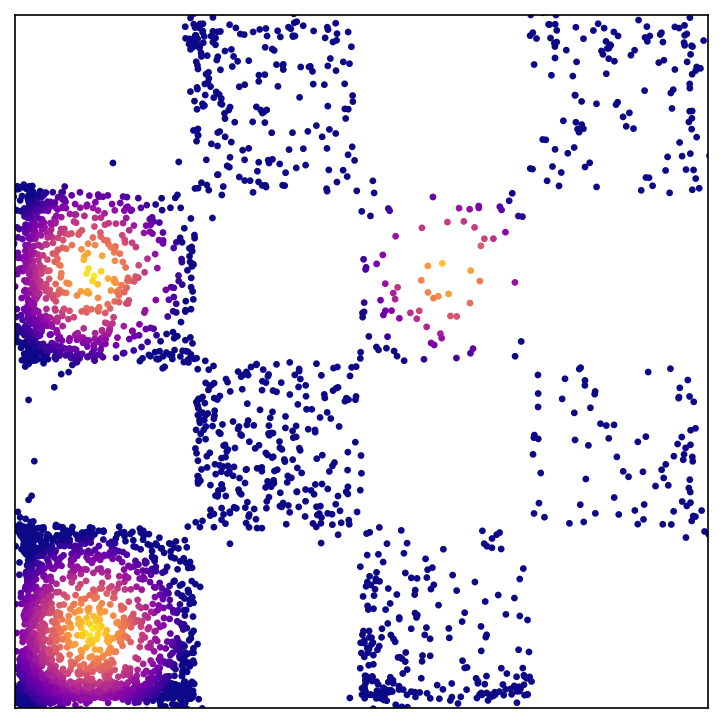} &
\largeresultpanel{BSIT}{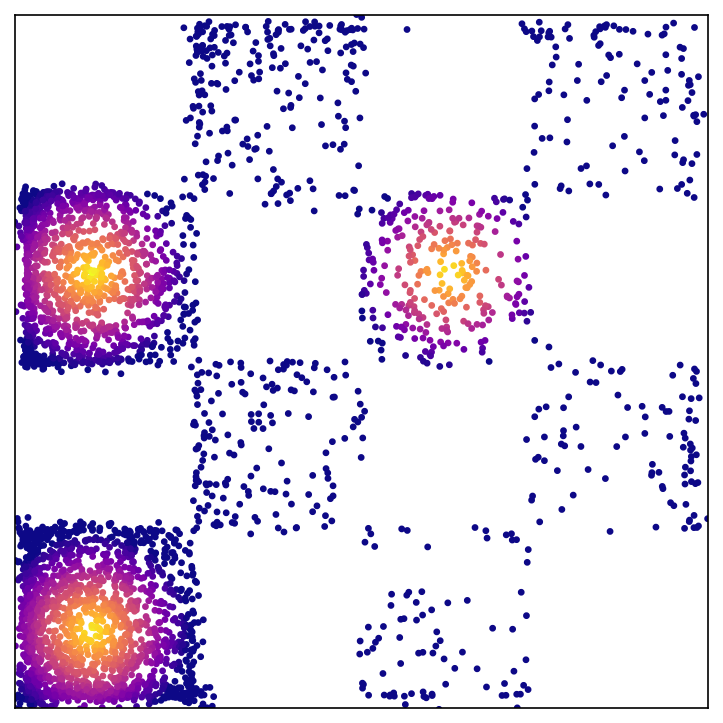}
\end{tabular}%
}
\caption{Qualitative comparison with $\mathbf{r}_{\mathrm{sparse}}$ as the target.}
\label{fig:vis-checkerboard}
\end{figure*}
%\end{comment}
\paragraph{Evaluation Metric}
%avg. reward, diverse reward
We compare the approaches using the mean cumulative reward ($r_{\text{mean}}$) across varying evaluation budgets ($\mathcal{B}$) and varying NFEs. All reported means are averaged across 5 independent trials with different random seeds.
\vspace{-9pt}
\paragraph{Baselines and Reward Function}
We benchmark BSIT against a diverse set of competitive samplers. These include state-of-the-art tree-based samplers, i.e., BFMT~\citep{ji2026bootstrap}, DTS~\citep{jain2026diffusion}, and the recently proposed Flow-Map (FM) approach~\citep{boffi2026build}. To simulate reward feedback, we design a sparse reward function $r_{\text{sparse}}$ defined over three disks, each specified by its center and radius. For a given point $(x, y)$, each disk $i$ defines a field $d_i - \lVert (x,y) - c_i \rVert$ that decreases radially from its center $c_i$ outward, where $d_i$ is its radius. Together they form three high-reward basins against a low-reward background. More details are in the Appendix.

\begin{table}[h!]
\centering
%\small
\caption{Evaluation with $r_{\text{mean}}$ across NFE and feedback budget. For this comparison, we consider $r_{\text{sparse}}$ as the target.}
\label{tab:div-checkerboard}
\resizebox{\linewidth}{!}{%
\begin{tabular}{lccc|ccc}
\hline
\hline
Method & \multicolumn{3}{c|}{NFE} & \multicolumn{3}{c}{Feedback Budget} \\
 & 1000 & 3000 & 5000 & 1000 & 3000 & 5000 \\
\hline
FM  & -0.5229 & -0.5181 & -0.5145 & -0.5229 & -0.5181 & -0.5145 \\
DTS  & -0.4873 & -0.4065 & -0.3621 & -0.2738 & -0.2294 & -0.2107 \\
BFMT & -0.2822 & -0.2122 & -0.2065 & -0.2822 & -0.2122 & -0.2065 \\
\rowcolor{blue!20} BSIT & -0.1489 & -0.1308 & -0.1170 & -0.1489 & -0.1308 & -0.1170 \\
\hline
\hline
%\vspace{-10pt}
\end{tabular}%
}
\end{table}
\begin{comment}
\vspace{-11pt}
\begin{figure}[h!]
\centering
\caption{Evaluation with $\textbf{r}_{\textbf{mean}}$ across NFE and feedback budget. For this comparison, we consider $r_{\text{sparse}}$ as the target.}
\label{fig:plot-cb}
\makebox[\linewidth][c]{%
\begin{subfigure}[c]{0.30\linewidth}
  \centering
  \includegraphics[width=\linewidth]{Figures/a.png}
  %\caption{First result.}
  \label{fig:two_panel_a}
\end{subfigure}\hspace{0.025\linewidth}%
\begin{subfigure}[c]{0.30\linewidth}
  \centering
  \includegraphics[width=\linewidth]{Figures/b.png}
  %\caption{Second result.}
  \label{fig:two_panel_b}
\end{subfigure}\hspace{0.04\linewidth}%
\raisebox{2.4ex}{\begin{minipage}[c]{0.23\linewidth}
  \raggedright\scriptsize
  \textbf{Method}\par\vspace{3pt}
  \textcolor{purple!70!black}{\rule{1.4ex}{1.4ex}}\hspace{3pt}Flow-map\par\vspace{2pt}
  \textcolor{blue!70!black}{\rule{1.4ex}{1.4ex}}\hspace{3pt}DTS\par\vspace{2pt}
  \textcolor{orange!85!black}{\rule{1.4ex}{1.4ex}}\hspace{3pt}BFMT\par\vspace{2pt}
  \textcolor{green!55!black}{\rule{1.4ex}{1.4ex}}\hspace{3pt}BSIT
\end{minipage}%
}}
\end{figure}
\end{comment}
\vspace{-11pt}
\paragraph{Results}
Empirically, BSIT successfully uncovers all distinct high-utility modes of $r_{\text{sparse}}(x_0)$ (see Fig.~\ref{fig:vis-checkerboard}). 
Compared to prior tree samplers, BSIT uncovers more high-utility samples and attains higher mean rewards (Table~\ref{tab:div-checkerboard}) across varying feedback budgets and NFE constraints, demonstrating concurrent gains in both feedback and compute efficiency under a sparse reward signal. 
%Compared to prior tree samplers, BSIT uncovers a more diverse set of high-utility samples under fixed feedback budgets (see Table~\ref{tab:div-checkerboard}). BSIT also attains higher mean reward across varying feedback budgets ( Fig.~\ref{fig:plot-cb}) compared to prior tree-based approaches, highlighting the effectiveness of BSIT in producing both higher diversity high utiity samples compared to prior approaches under a fixed feedback budget. 
Notably, this performance gain widens in higher budget regimes, highlighting BSIT's efficacy for budget-constrained search—a strength we attribute to its enhanced exploration capabilities. 
Across all experiments, BSIT yields identical performance for equal values of NFE and feedback budget, as each feedback evaluation consumes a single NFE.
%Furthermore, BSIT maintains superior diversity and higher mean rewards across varying NFEs, demonstrating computational efficiency by leveraging the Flow-Map and stochastic-interpolant to construct trajectories using a single NFE.
%Furthermore, BSIT maintains superior diversity and mean rewards compared to prior tree samplers across varying NFEs, demonstrating its compute efficiency by leveraging the Flow-Map to construct trajectories using a single NFE.

%Crucially, unlike prior approaches, BSIT uncovers high-utility samples under $r_{\text{OOD}}(x_0)$, outside the support of the prior model (see Fig.~\ref{fig:vis-checkerboard} (right)), highlighting that the enhanced path-space diversity and less correlated marginal distributions across denoising steps is key to significantly expand target space coverage.
\vspace{-11pt}
\paragraph{Analysis with High-Dimension Image Setting}
To analyse BSIT’s feedback-efficient search capabilities in high dimensions, we utilize ImageNet-1k~\citep{deng2009imagenet}. Specifically, we aim to answer the following questions:
\vspace{-12pt}
\begin{itemize}
    %\item \textbf{Budget-Constrained Search:} Does BSIT effectively uncover high-utility modes under limited feedback budgets?
    %\item \textbf{OOD Search:} Can BSIT generate target-aligned samples with an unseen/novel target property that are strictly out-of-distribution (OOD) relative to the prior?
    \item \textbf{Feedback Efficiency under Fine-grained  Search Setting:} How does BSIT's feedback efficiency compare to prior methods in fine-grained search settings?
    \item \textbf{Search under Sparse Reward:} Can BSIT generate target-aligned samples using highly sparse reward signals representing novel/rare target properties relative to the prior?
    %Can BSIT sample accurately from the target distribution using highly sparse, multimodal reward signals?
\end{itemize}
\begin{figure*}[t!]
\centering
{\setlength{\tabcolsep}{2pt}%
\begin{tabular}{@{}ccc|ccc@{}}
\multicolumn{3}{c}{Fine-grained: \itshape ``A red daisy.''} &
\multicolumn{3}{c}{Sparse: \itshape ``Three cats.''} \\[1pt]
\textbf{DTS} & \textbf{BFMT} & \textbf{BSIT} &
\textbf{FMET} & \textbf{BFMT} & \textbf{BSIT} \\
\hline
\includegraphics[width=0.15\textwidth]{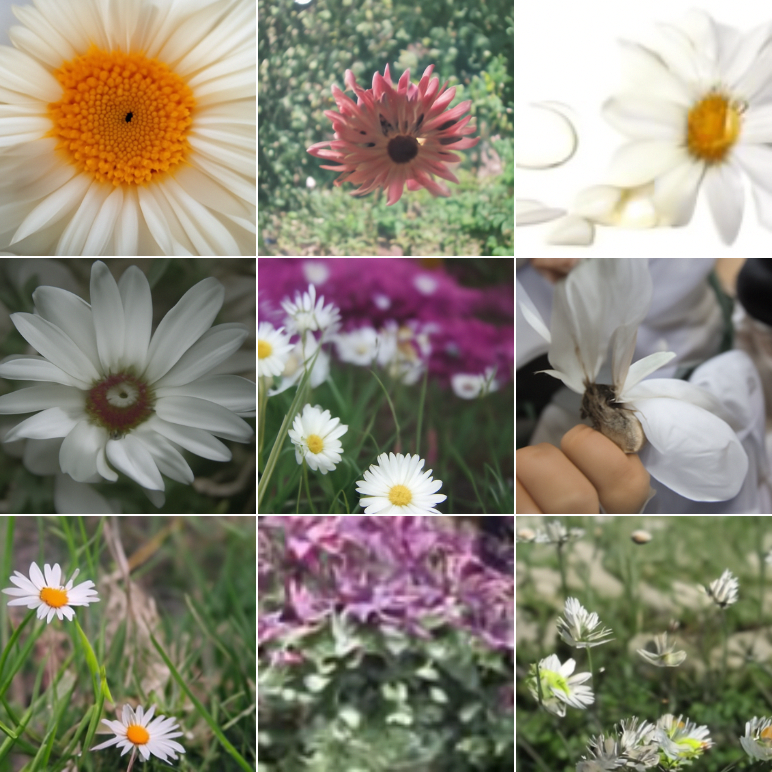} &
\includegraphics[width=0.15\textwidth]{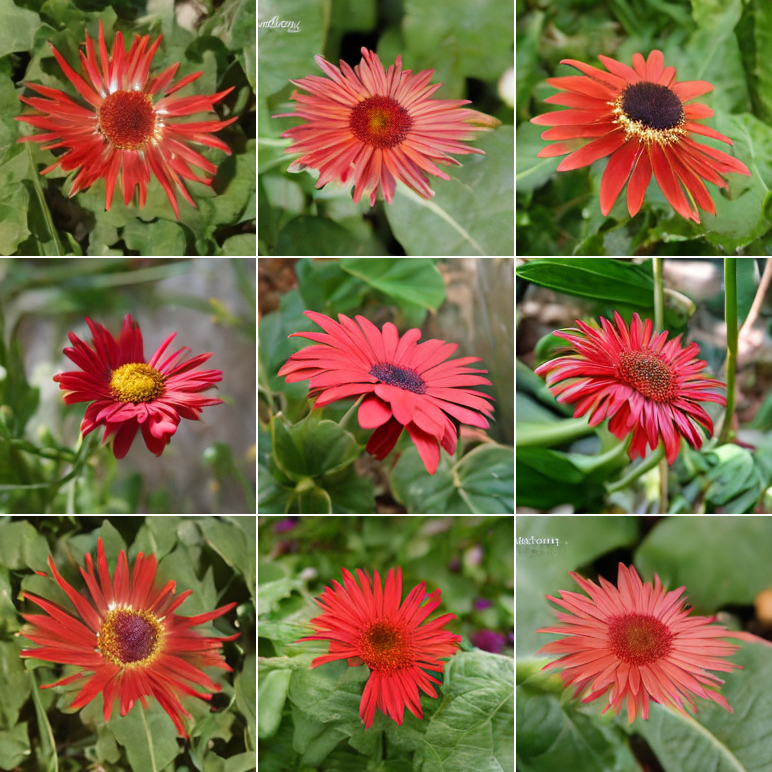} &
\includegraphics[width=0.15\textwidth]{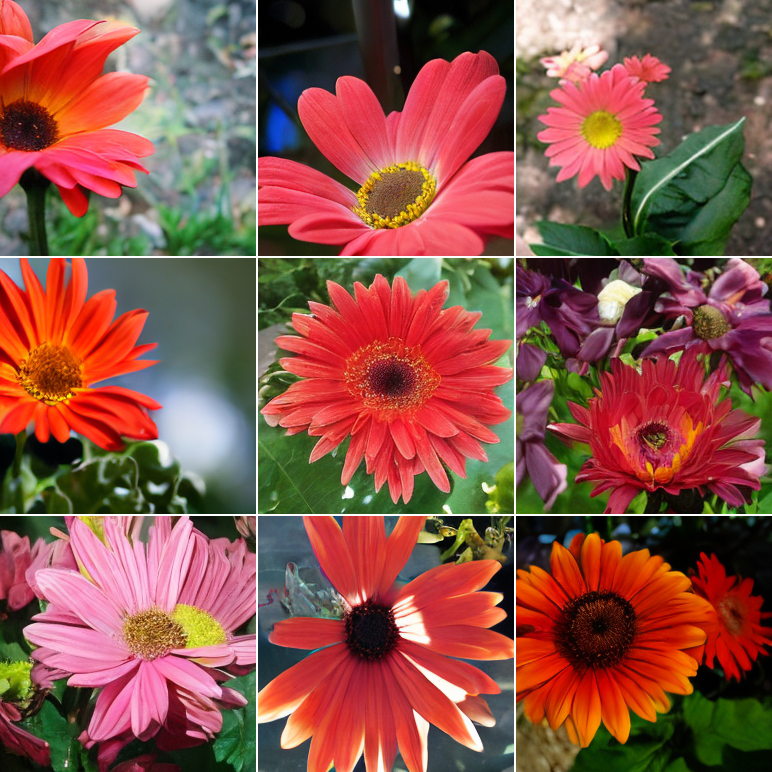} &
\includegraphics[width=0.15\textwidth]{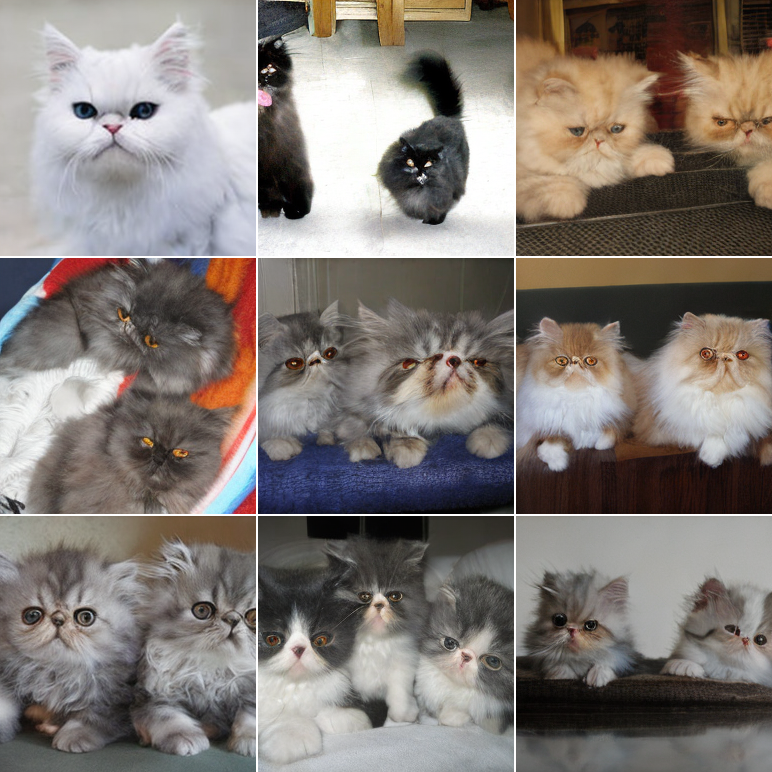} &
\includegraphics[width=0.15\textwidth]{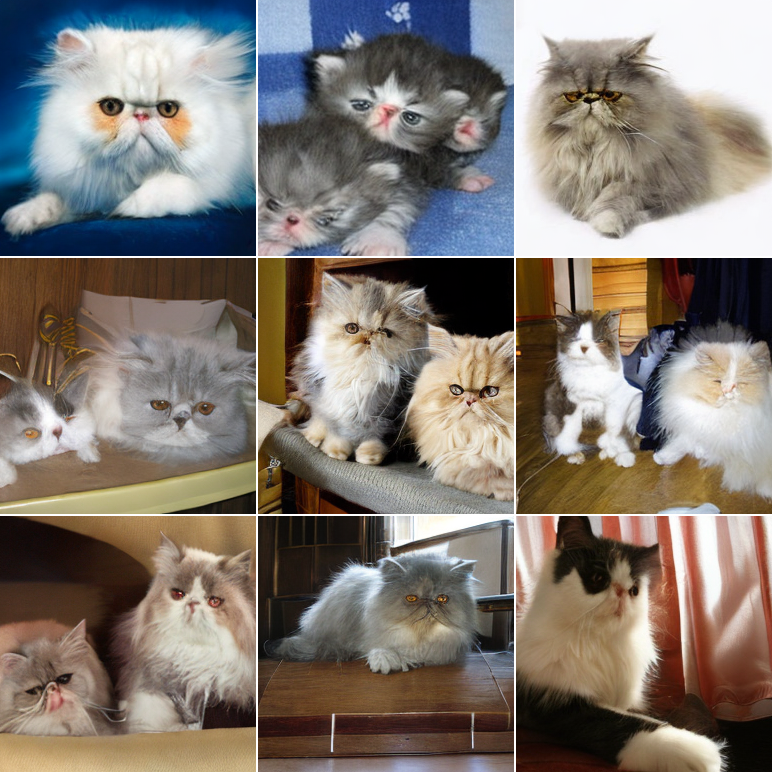} &
\includegraphics[width=0.15\textwidth]{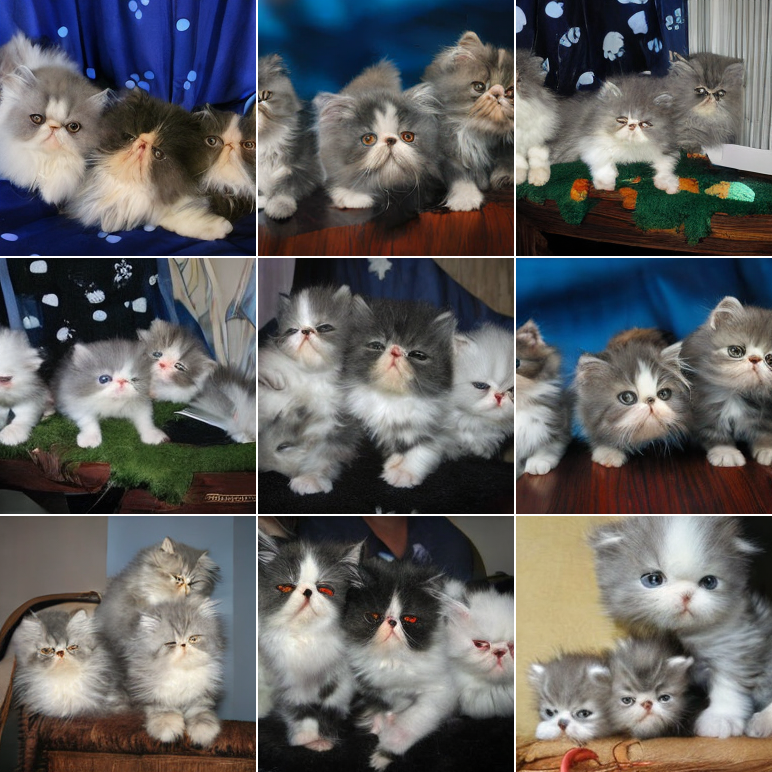}
\end{tabular}%
}
\caption{\small{Search Visualizations. (left) Reward evaluation with $r_{\text{standard}}$.; (right) Reward evaluation with $r_{\text{sparse}}$. }}
\label{fig:search-visu}
\vspace{-12pt}
\end{figure*}
\vspace{-11pt}
%\begin{comment}
\paragraph{Experimental Setting}
Here, the prior is a distilled flow map model optimized jointly via the $\mathcal{L}^{\text{distill}}_{\text{inst}}$ and $\mathcal{L}^{\text{distill}}_{\text{cons}}$ objectives (see Background). We benchmark all approaches across different target classes, revealing fine-grained specifications solely through iterative feedback (class and prompt details in Appendix). Performance is assessed via two complementary metrics: \textit{mean reward} ($r_{\text{mean}}$), capturing overall population-wide alignment efficacy, and \textit{max reward} ($r_{\text{max}}$), measuring peak alignment performance—a critical metric when downstream applications prioritize a single high-quality sample over average generation quality. Consequently, we include MFM ~\citep{potaptchik2026meta} as a baseline exclusively for $r_{\text{max}}$, since its design inherently optimizes for a single best-quality sample rather than population-wide efficiency. As MFM-Base is a Flow-Map Euler Tree sampler, we denote it as FMET.
In contrast to the standard setting, to evaluate BSIT’s search capability under a sparse reward setting, we construct a set of rare target prompts featuring a novel/rare property relative to the prior model's support (details in Appendix). %Specifically, we consider the rare target prompt as "XYZ".
Crucially, similar to the 2D setting, across all ImageNet experiments, we hold the feedback budgets $\mathcal{T}$ and NFE
constraints constant across all tree samplers during the construction phase. The derived sampler must rely entirely on sequential feedback to progressively steer generations toward the true fine-grained target prompt under a fixed evaluation and compute budget.
%Here, the prior is a distilled flow map model optimized jointly via the $\mathcal{L}^{\text{distill}}_{\text{inst}}$ and $\mathcal{L}^{\text{distill}}_{\text{cons}}$ objectives (see Background). We benchmark all approaches across five classes using **standard target specifications**, revealing fine-grained details solely through iterative feedback (prompt details in Appendix). Performance is assessed via two complementary metrics: \textit{mean reward} ($r_{\text{mean}}$), capturing overall population-wide alignment efficacy, and \textit{max reward} ($r_{\text{max}}$), measuring peak alignment performance. Consequently, MFM (BoN)~\citep{potaptchik2026meta} is evaluated exclusively on max reward, as its design inherently optimizes for a single best-quality sample. **In contrast to these standard setups, we evaluate out-of-support search capability under a sparse reward setting using a rare target specification ("XYZ") that introduces a novel property relative to the prior distribution.** Crucially, across all ImageNet experiments, the target specifications remain unknown at the outset; the model must rely entirely on sequential terminal feedback to progressively discover them.
%\vspace{-8pt}
\paragraph{Baselines and Evaluations}
We benchmark BSIT against a diverse set of competitive baselines. These include state-of-the-art tree-based samplers, i.e., BFMT~\citep{ji2026bootstrap}, DTS~\citep{jain2026diffusion}, Feynman-Kac Steering (FKS)~\citep{singhal2025general}, the SMC-based approach DAS~\citep{kim2025test}, and MFM-Base
(FMET)~\citep{potaptchik2026meta}. To simulate feedback, we employ ImageReward~\citep{xu2023imagereward} as the reward model. %To guard against reward hacking, we evaluate the samples with independent metrics, i.e., the VQA score measured with CLIP-FlanT5~\citep{lin2024evaluating} 
To mitigate reward hacking, generated samples are evaluated using independent metrics, namely VQA scores via CLIP-FlanT5~\citep{lin2024evaluating}
and the reward score from ImageReward. To differentiate between the standard and sparse reward settings, we specify the corresponding reward functions as $r_{\text{standard}}$ and $r_{\text{sparse}}$.
\newcommand{\legenditem}[4]{%
  \begin{tikzpicture}[baseline=-0.6ex]
    \draw[#1, line width=1.5pt, #2] (0,0) -- (0.48,0);
    \node[text=#1, fill=white, inner sep=0.4pt] at (0.24,0) {\tiny $#3$};
  \end{tikzpicture}\,#4%
}
\newcommand{\methodlegend}{%
  {\scriptsize
  \setlength{\tabcolsep}{0.5pt}%
  \begin{tabular}{@{}cccccc@{}}
    \legenditem{cyan!70!black}{solid}{\circ}{DAS} &
    \legenditem{red!75!black}{dashed}{\square}{FKS} &
    \legenditem{purple!80!black}{dotted}{\diamond}{DTS} &
    \legenditem{orange!85!black}{dash dot}{\star}{BFMT} &
    \legenditem{green!60!black}{solid}{\blacktriangledown}{\textbf{BSIT}}
    
  \end{tabular}%
  }%
}
\newcommand{\barpanel}[2]{%
  \begin{minipage}[t]{0.48\linewidth}
    \centering
    \includegraphics[width=\linewidth,height=0.16\textheight,keepaspectratio]{#1}\par
    \vspace{-0.3ex}
    {\tiny #2}
  \end{minipage}%
}
\begin{figure*}[h!]
\centering
\begin{subfigure}[t]{0.49\linewidth}
  \centering
  \caption{\bfseries Max ImageReward with $r_{\text{standard}}$}
  \vspace{0.4ex}
  \methodlegend\par
  \vspace{0.4ex}
  \barpanel{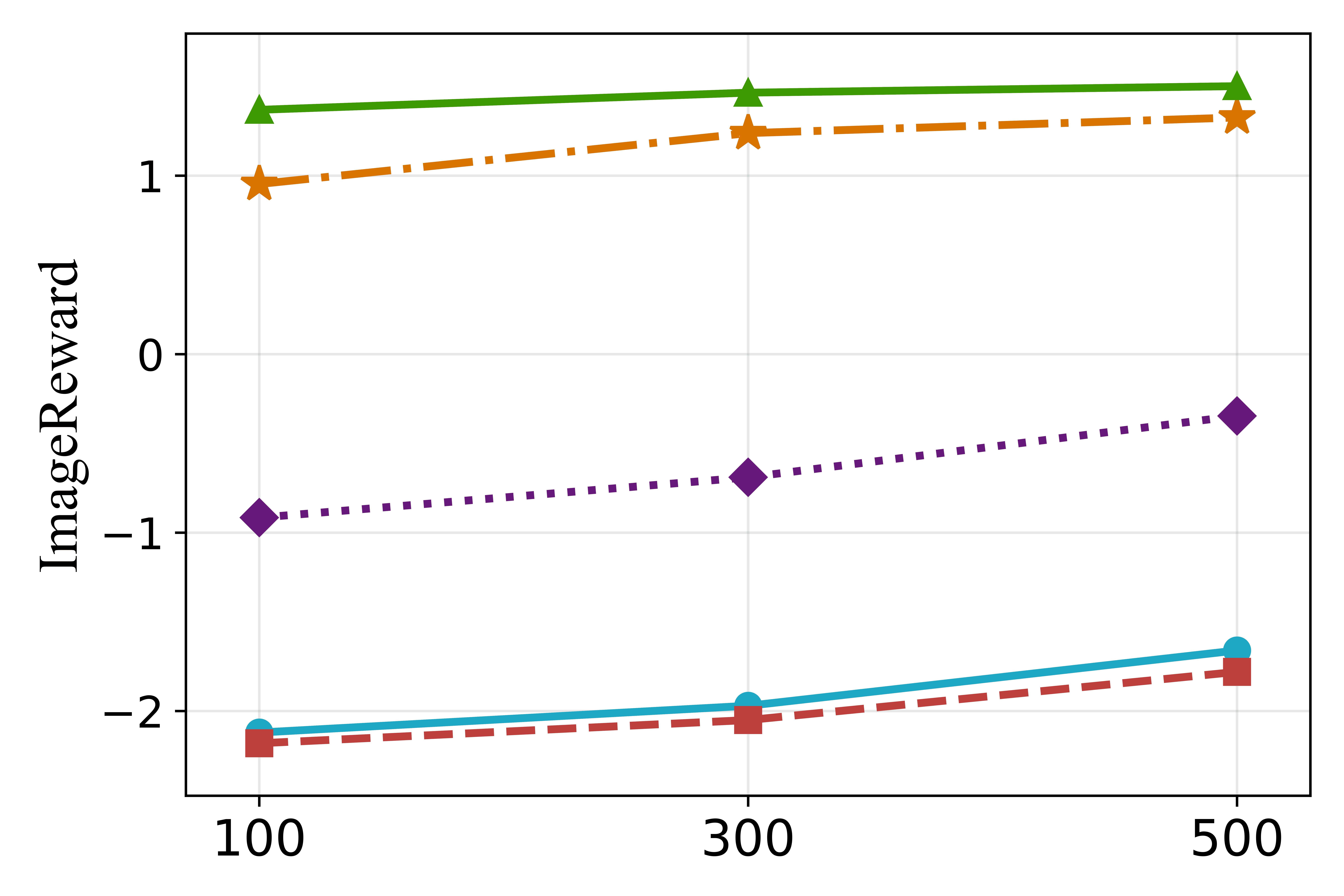}{}\hfill
  \barpanel{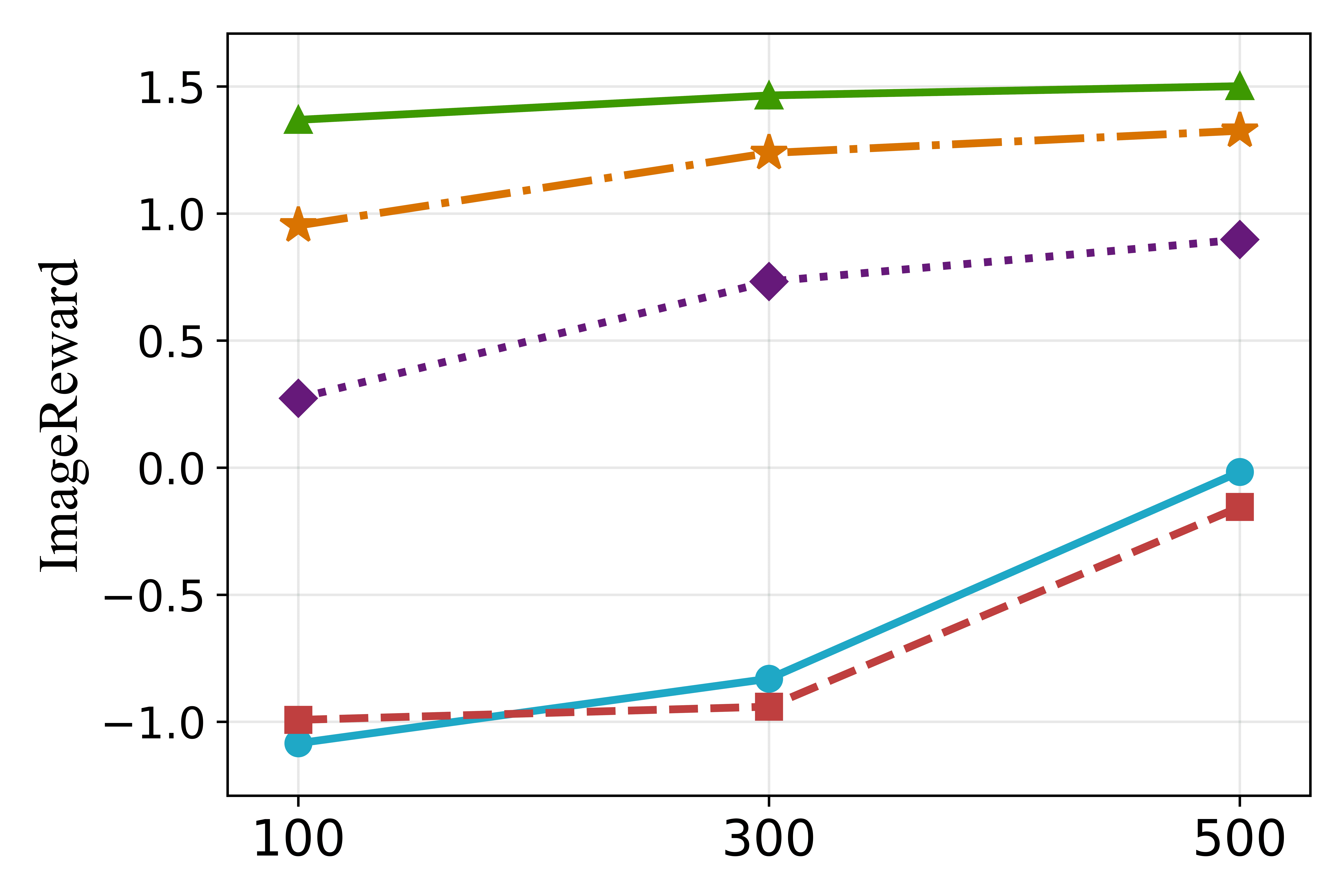}{}
  \label{sub:standard}
\end{subfigure}\hfill%
\begin{subfigure}[t]{0.49\linewidth}
  \centering
  \caption{\bfseries Mean VQA Score with $r_{\text{standard}}$}
  \vspace{0.4ex}
  \methodlegend\par
  \vspace{0.4ex}
  \barpanel{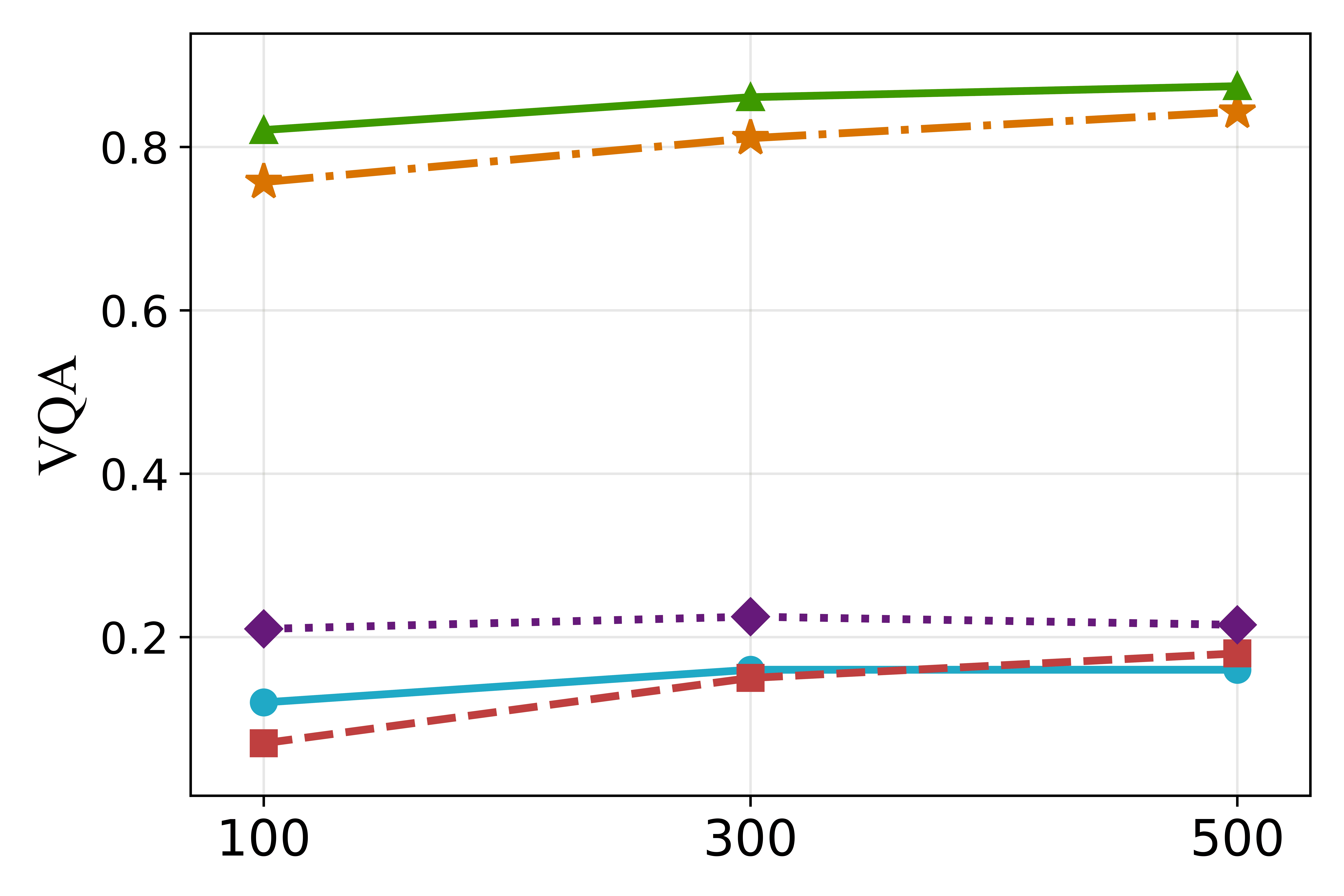}{}\hfill
  \barpanel{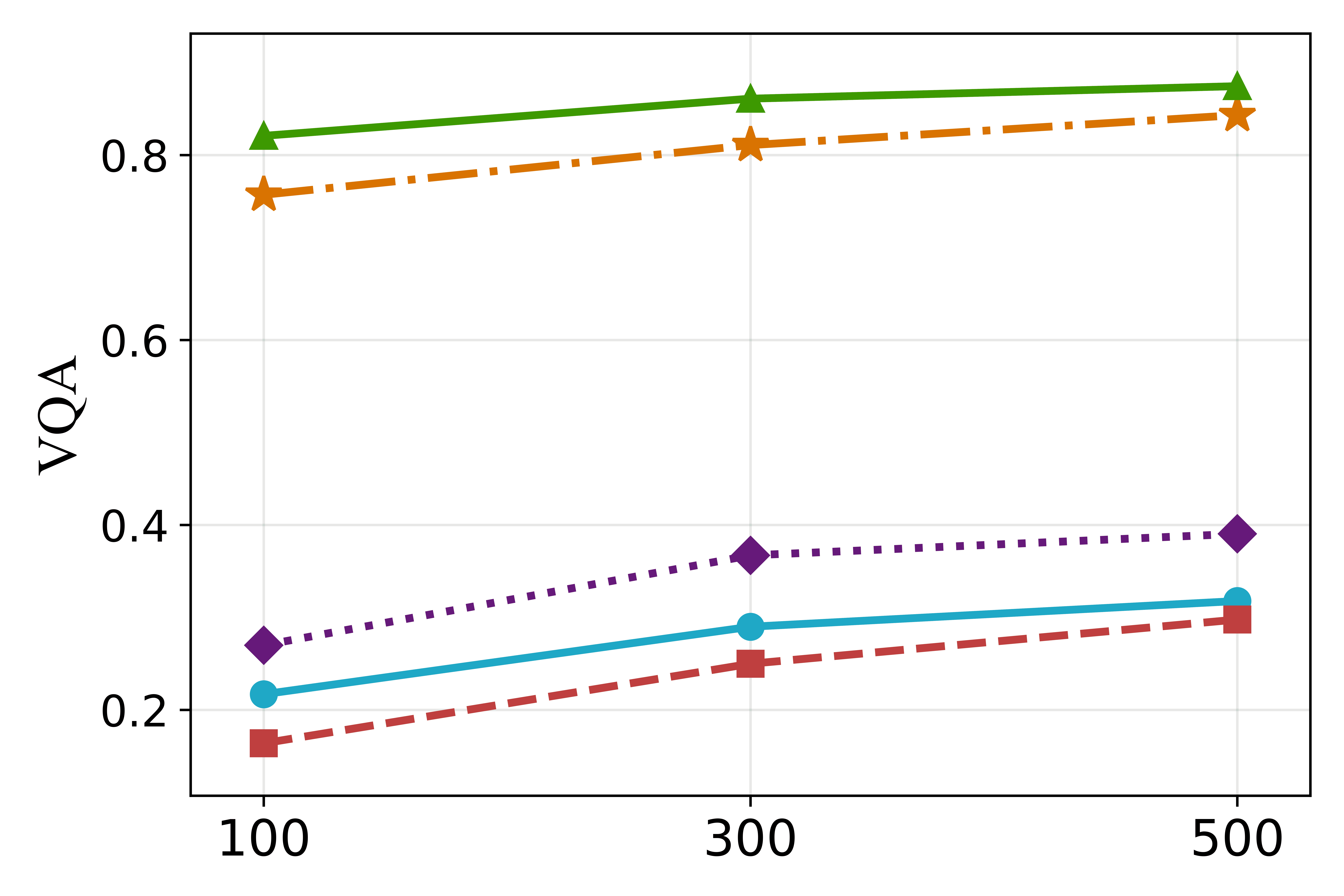}{}
  \label{sub:sparse}
\end{subfigure}
\caption{Additional search performance under fine-grained search setting.}
\label{fig:search-result}
\end{figure*}

\begin{table}[h!]
  \centering
  \small
  \caption{Mean reward comparison across NFE and feedback budget for $\mathbf{r}_{\mathrm{standard}}$, with ImageReward.}
  \label{tab:standard-mean}
  \resizebox{\columnwidth}{!}{%
    \begin{tabular}{lccc|ccc}
      \hline\hline
      Method & \multicolumn{3}{c|}{NFE} &
      \multicolumn{3}{c}{Feedback Budget} \\
       & 100 & 300 & 500 & 100 & 300 & 500 \\
      \hline
      DAS  & -2.2669 & -1.9541 & -2.1804 & -2.1336 & -1.8729 & -1.3416 \\
      FKS  & -2.2773 & -2.1279 & -2.0265 & -2.1727 & -1.6387 & -1.5446 \\
      DTS  & -1.5369 & -1.5308 & -1.3193 & -1.3442 & -0.3381 & -0.1409 \\
      BFMT & 0.2738 & 0.4557 & 0.6165 & 0.2738 & 0.4557 & 0.6165 \\
      \rowcolor{blue!20} BSIT & 0.3269 & 0.5367 & 0.6891 & 0.3269 & 0.5367 & 0.6891 \\
      \hline\hline
    \end{tabular}%
  }
\end{table}

\begin{table}[h!]
  \centering
  \small
  \caption{Max reward comparison across feedback budget for $r_{\mathrm{sparse}}$, with ImageReward.}
  \label{tab:sparse-max}
  \resizebox{\columnwidth}{!}{%
    \begin{tabular}{lccc|ccc}
      \hline\hline
      Method & \multicolumn{3}{c|}{Max} & \multicolumn{3}{c}{Mean} \\
       & 100 & 300 & 500 & 100 & 300 & 500 \\
      \hline
      FMET  & 0.2889 & 0.3042 & 0.5677 & N/A & N/A & N/A \\
      BFMT & 0.0421 & 0.4309 & 0.5782 & -1.0297 & -1.1421 & -1.1162 \\
      \rowcolor{blue!20} BSIT & 1.1229 & 1.2679 & 1.3356 & -0.3926 & -0.4434 & -0.5385 \\
      \hline\hline
    \end{tabular}%
  }
\end{table}
\vspace{-7pt}
\paragraph{Results on ImageNet}
In Fig.~\ref{fig:search-result}, we present quantitative comparisons of VQA and reward scores, using $r_{\text{standard}}$ as the reward, across varying numbers of NFEs and feedback budgets ($\mathcal{B}$). Table~\ref{tab:standard-mean} reports the mean reward comparisons across varying NFEs and feedback budgets ($\mathcal{B}$) using $r_{\text{standard}}$ as the reward function, with ImageReward serving as the proxy reward model.
BSIT outperforms all baselines across varying feedback budgets and NFE constraints, demonstrating both its feedback and compute efficiency in accelerating feedback-driven search. The feedback efficiency stems primarily from its enhanced exploration capabilities relative to prior tree samplers. The computational efficiency stems from using a flow-map to sample the terminal node from any denoising step and by leveraging an iterative stochastic interpolant approach to construct the intermediate nodes of the trajectory, requiring only a single NFE for the entire tree path construction. 
Similarly, under the sparse reward function $r_{\text{sparse}}$ (See Table \ref{tab:sparse-max}), BSIT significantly outperforms all baselines (in terms of both mean and max reward) across all feedback budgets. This highlights its efficacy in generating samples with novel target properties under a strict feedback budget.
%We observe a similar trend when comparing the performance with the maximum reward (i.e., $r_{\text{max}}$) under , as reported in Table 6. 
Overall, these results confirm BSIT’s compute and feedback efficiency in generating target-aligned samples across both standard and sparse reward settings.
Additionally, we provide qualitative comparisons against the most competitive baselines in Fig.~\ref{fig:search-visu}, under both standard and sparse reward settings, with additional experimental details and visualizations in the Appendix.
\begin{figure}[!h]
\centering
\small
\begin{minipage}[t]{0.48\textwidth}
\captionsetup{type=table}
\centering
\caption{\small{Mean Reward Comparison across feedback budget with ``Aphelocoma woodhouseii'' as the target species.}}
\label{tab:sparse-species}
\resizebox{\linewidth}{!}{%
\begin{tabular}{lccc|ccc}
\hline
\hline
Method & \multicolumn{3}{c|}{NFE} & \multicolumn{3}{c}{Feedback Budget} \\
 & 1000 & 3000 & 5000 & 1000 & 3000 & 5000 \\
\hline
%DAS  & TODO & TODO & TODO & TODO & TODO & TODO \\
%FKS  & TODO & TODO & TODO & TODO & TODO & TODO \\
%DTS  & TODO & TODO & TODO & TODO & TODO & TODO \\
DTS  & 0.0600 & 0.09334 & 0.112 & 0.1484 & 0.1709 & 0.1731 \\
BFMT & 0.1516 & 0.1690 & 0.1756 & 0.1516 & 0.1690 & 0.1756 \\
\rowcolor{blue!20} BSIT & 0.1780 & 0.1915 & 0.1987 & 0.1780 & 0.1915 & 0.1987 \\
\hline
\hline
\end{tabular}%
}
\end{minipage}
\end{figure}
%\vspace{-12pt}
\begin{figure}[h!]
\centering
{\setlength{\tabcolsep}{1pt}%
\begin{tabular}{@{}ccc@{}}
\multicolumn{3}{c}{\itshape ``Aphelocoma woodhouseii as target species.''} \\[1pt]
\textbf{Target} & \textbf{BFMT} & \textbf{BSIT} \\
\hline
\includegraphics[width=0.32\columnwidth]{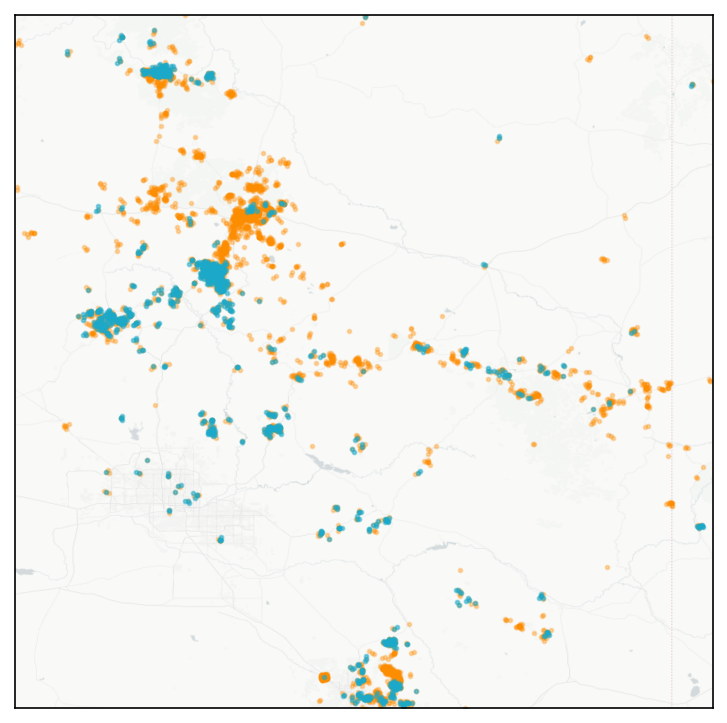} &
\includegraphics[width=0.32\columnwidth]{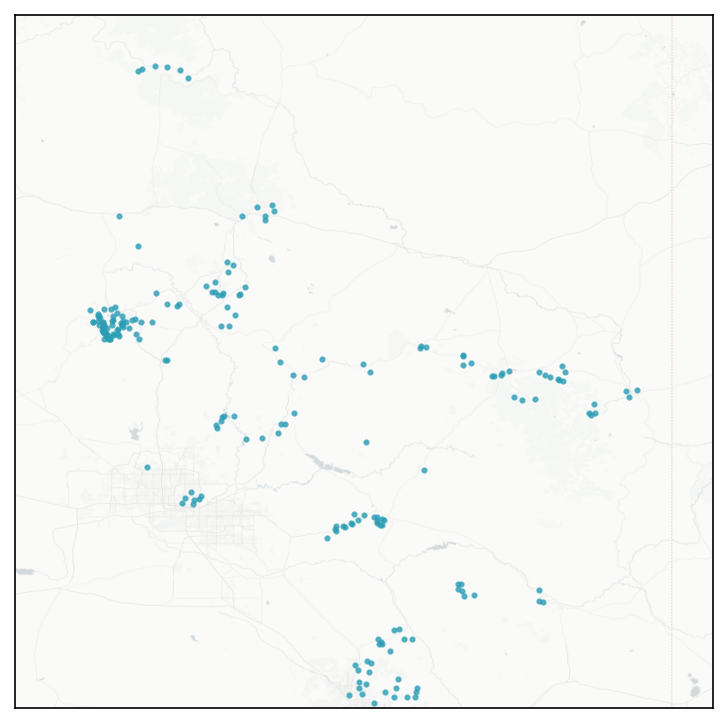} &
\includegraphics[width=0.32\columnwidth]{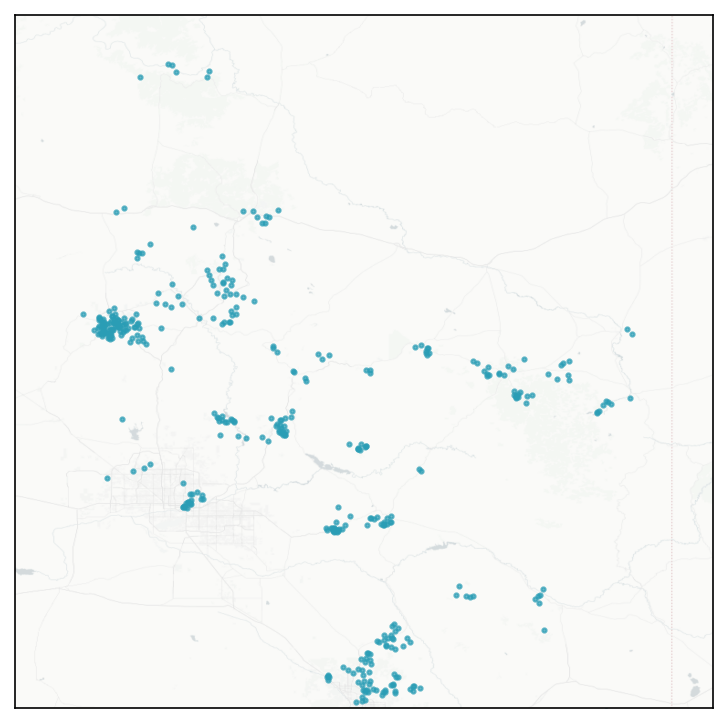}
\end{tabular}%
}
\caption{\small Comparative visualizations of uncovered target regions.}
\label{fig:search-visu-sp}
%\vspace{-12pt}
\end{figure}
%The empirical findings suggest that BSIT consistently achieves higher rewards scores with a fixed feedback budget constraint and also with a fixed NFE than all competing approaches, underscoring its efficacy in accelerating online feedback-driven search. This superior performance is primarily due to its enhanced exploration capability compared to the prior tree samplers.
\vspace{-6pt}
\begin{figure*}[!t]
\centering
\newlength{\molh}\setlength{\molh}{2.0cm}
\newlength{\rowgap}\setlength{\rowgap}{4pt}

% ================= LEFT: 40% width, 2x2 grid (a,b,c,d) =================
\begin{minipage}[c]{0.30\linewidth}
\centering
% ---- Row 1: (a), (b) ----
\begin{minipage}[t]{0.48\linewidth}
\centering
\begin{overpic}[width=0.9\linewidth, height=\molh]{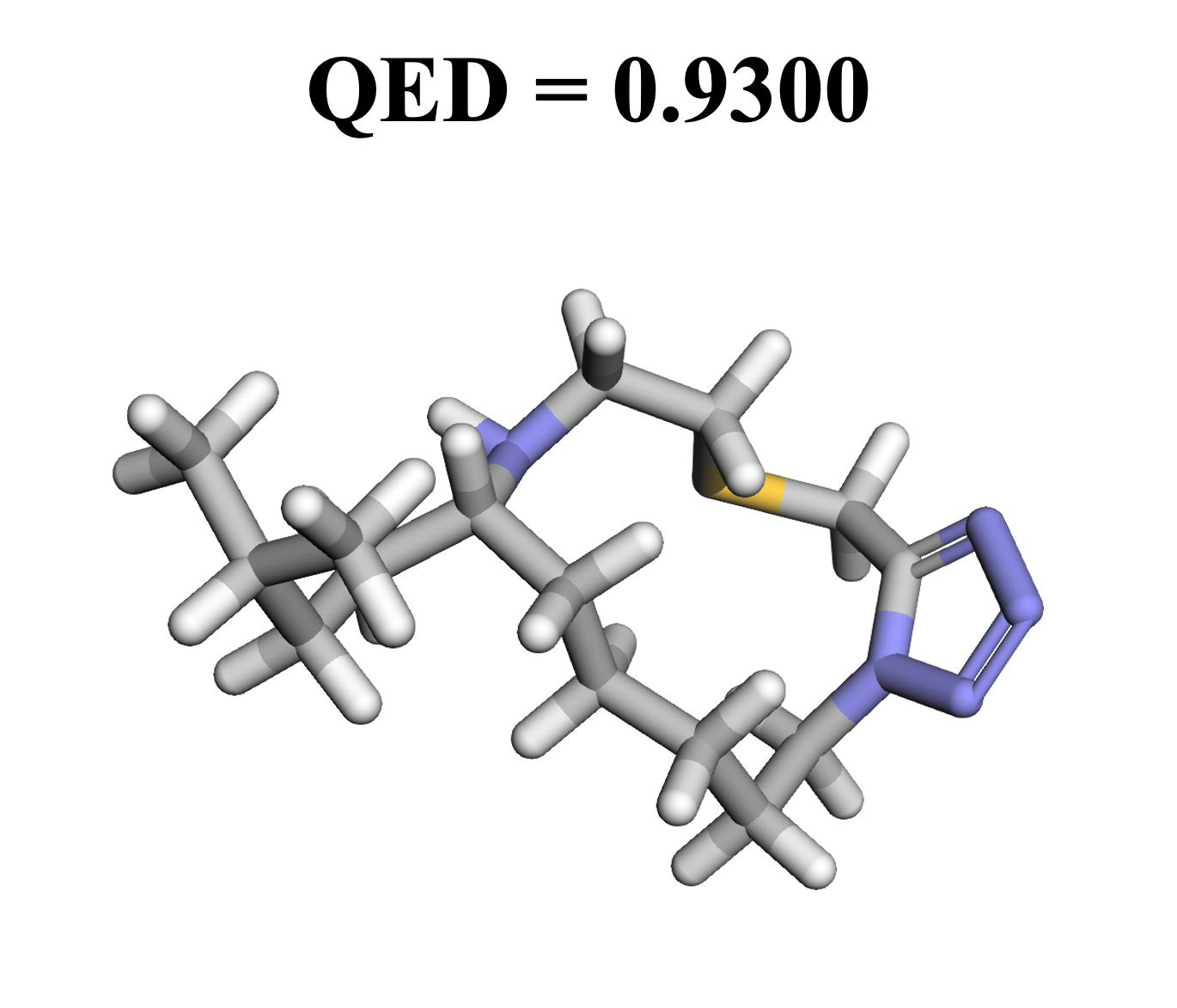}
%\put(2,88){\footnotesize\textbf{(a)}}
\end{overpic}
\end{minipage}\hfill
\begin{minipage}[t]{0.48\linewidth}
\centering
\begin{overpic}[width=0.9\linewidth, height=\molh]{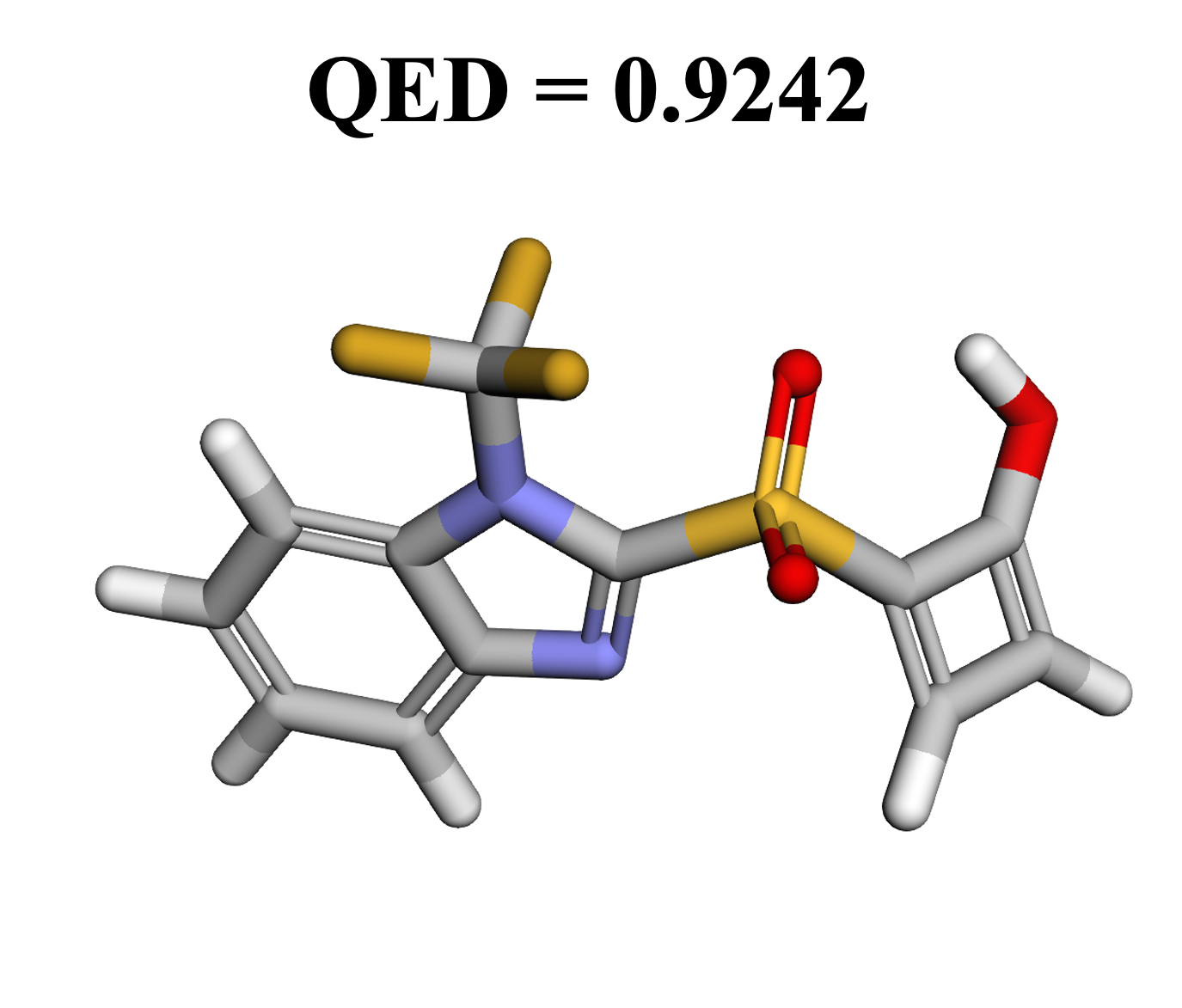}
%\put(2,88){\footnotesize\textbf{(b)}}
\end{overpic}
\end{minipage}

\vspace{\rowgap}

% ---- Row 2: (c), (d) ----
\begin{minipage}[t]{0.48\linewidth}
\centering
\begin{overpic}[width=0.9\linewidth, height=\molh]{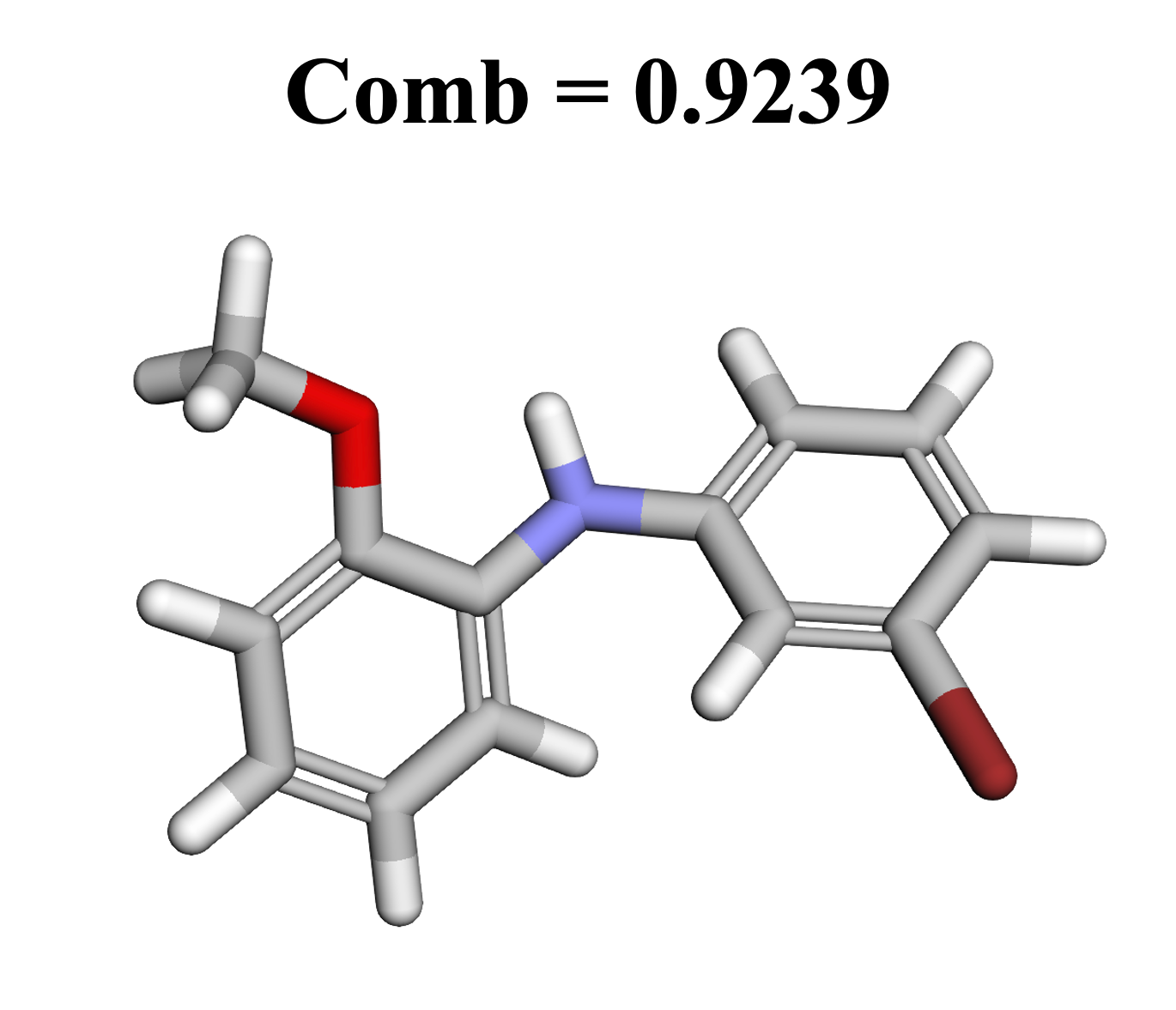}
%\put(2,88){\footnotesize\textbf{(c)}}
\end{overpic}
\end{minipage}\hfill
\begin{minipage}[t]{0.48\linewidth}
\centering
\begin{overpic}[width=0.9\linewidth, height=\molh]{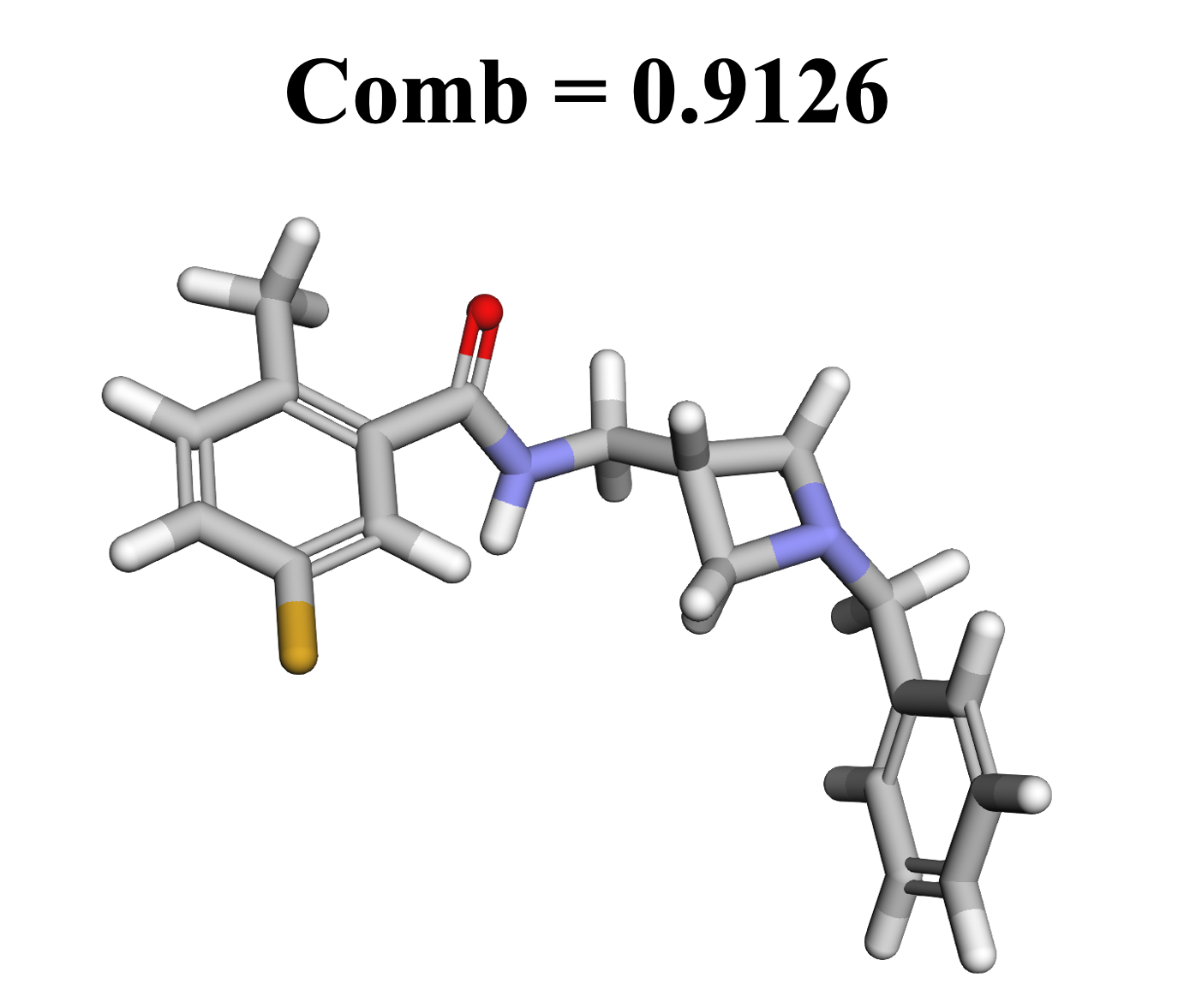}
%\put(2,88){\footnotesize\textbf{(d)}}
\end{overpic}
\end{minipage}
\end{minipage}%
%\hfill
\hspace{2mm}
% ================= MIDDLE: 30% width, (e) =================
\begin{minipage}[c]{0.25\linewidth}
\centering
\begin{overpic}[width=\linewidth, height=\dimexpr 2\molh+\rowgap\relax]{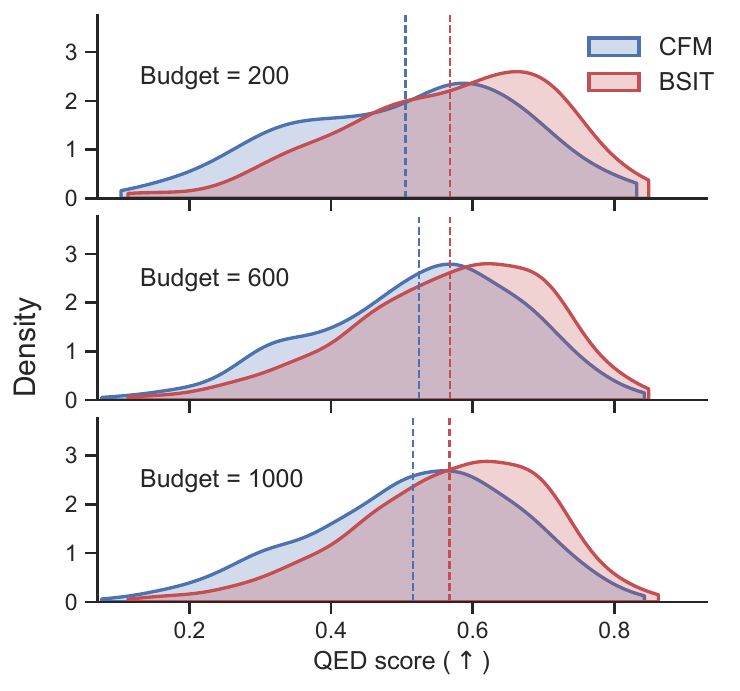}
%\put(2,93){\footnotesize\textbf{(e)}}
\end{overpic}
\end{minipage}%
%\hfill
\hspace{2mm}
% ================= RIGHT: 30% width, (f) =================
\begin{minipage}[c]{0.25\linewidth}
\centering
\begin{overpic}[width=\linewidth, height=\dimexpr 2\molh+\rowgap\relax]{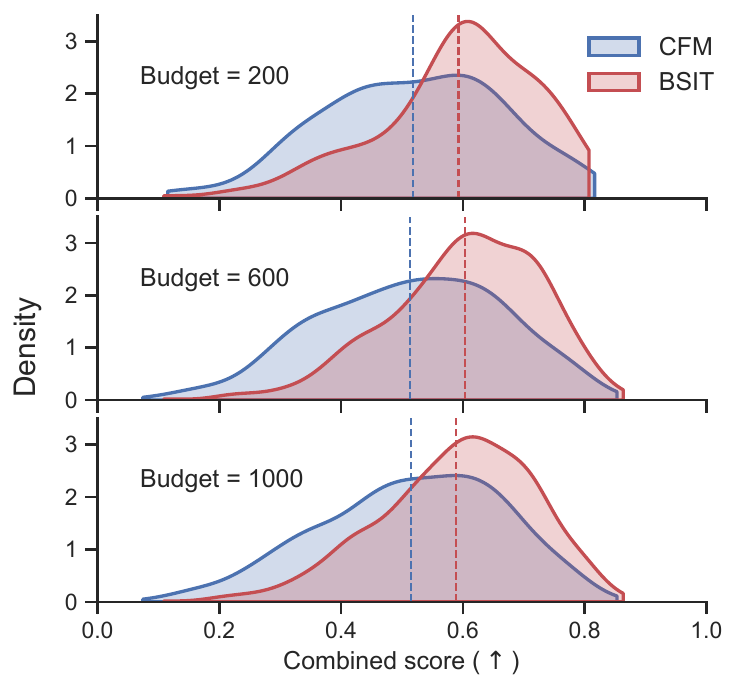}
%\put(2,93){\footnotesize\textbf{(f)}}
\end{overpic}
\end{minipage}
\caption{Effectiveness of BFMT on target-aligned feedback-constrained Molecule Generation Task.}
\label{fig:mol-res}
\end{figure*}
\paragraph{Rare Species Search Task}
%We analyze the effectiveness of BSIT in a practical setting, such as species discovery task. To this end, we create a search task involving a rare bird category named "xyz" as the search target. We train the prior across several bird categories (detailed in Appendix).  Each datapoint in the training set containing the location information (i.e. latitude, longitude) of birds spanning across diverse categories, so the prior model aims to learn the bird's spatial distribution across a broad geospatial area. During inference we aim to sample the regions containing one or instances of a rare target bird categories. We consider the search target to be discovered in a way that resonate with a rare multimodal reward. Every query of a specific lat, lon location reuturns a binary outcome, it is 1 is the surrounding of that particular spot with a radius of xyz contains the target bird caterory else the reward is 0, and the objective of the task is to find out as many local regions as posssible within a given feedback budget. 
To evaluate BSIT in a practical setting, we design a geospatial search task targeting a rare bird species. The prior flow-map model is trained on species location coordinates~\citep{van2018inaturalist} (latitude and longitude) across diverse avian categories to learn their broad spatial distributions (see Appendix). At inference, the goal is to discover local regions hosting instances of the target species under a sparse multi-modal reward structure. 
%\vspace{-8pt}
Querying a location returns a binary reward: $1$ if the target species is present within a fixed radius, and $0$ otherwise. The objective is to maximize the discovery of distinct target-containing regions within a strict feedback budget. We present the results in Table~\ref{tab:sparse-species}. Across various rare-species discovery tasks (see Appendix) and feedback budgets, BSIT achieves significant performance gains, highlighting its effectiveness for sample-efficient search in real-world applications. We present qualitative comparisons in Figure~\ref{fig:search-visu-sp}, where the orange and blue points together form the prior distribution, and the blue points mark $r_{\mathrm{sparse}}$ (Fig.~\ref{fig:search-visu-sp} Target). These visualizations demonstrate that BSIT uncovers many disjoint modes of the target distribution under a fixed feedback budget, highlighting its superior exploration capabilities.
%\vspace{-8pt}

\paragraph{Results with Large-Scale Molecular Structure Generation Task}
%We consider $10$M molecules extracted from PubChem~\citep{kim2023pubchem}. The molecules are represented as SELFIES~\citep{krenn2020self}, a robust string representation for molecular structures. These molecules are used to train Categorical Flow Maps (CFM), a generative flow-matching model developed by~\citep{roos2026categorical}. We trained the CFM model for $16$ epochs using the default hyperparameters and then used the trained model for inference-time search. We subsequently used BFMT to steer generation toward regions of chemical space with high (i) Quantitative Estimate of Drug-likeness (QED) and (ii) a  combination of QED and synthetic accessibility (SA) scores. QED measures a molecule's drug-likeness, while SA estimates how easily it can be synthesized. Our goal is to generate molecules that satisfy these objectives and assess whether BFMT produces a higher proportion of desirable molecules than vanilla CFM sampling, while maintaining comparable validity and uniqueness.
For our experiments, we utilize 10 million molecules from PubChem~\citep{kim2023pubchem} represented as SELFIES~\citep{krenn2020self}. These are used to train Categorical Flow Maps (CFM)~\citep{roos2026categorical} for 16 epochs using default hyperparameters. We then apply BSIT for inference-time search, steering generation toward regions with high Quantitative Estimate of Drug-likeness (QED) and a combined QED and synthetic accessibility (SA) score. This evaluates whether BSIT yields a higher proportion of desirable molecules than vanilla CFM sampling while preserving validity and uniqueness. Fig.~\ref{fig:mol-res} depicts the target property distributions and representative molecule visualizations.
\begin{comment}
\begin{wraptable}{r}{0.56\textwidth}
\centering
\vspace{-10pt}
\caption{Validity Check for the Generated Molecules.}
\vspace{-4pt}
\label{tab:mole}
\footnotesize
\setlength{\tabcolsep}{3pt}
\renewcommand{\arraystretch}{0.85}
\begin{tabular}{lcccc}
\toprule
Method & Uniqueness & Validity & Mean QED & Mean QED+SA \\
\midrule
CFM & todo & todo & todo & todo\\
\rowcolor{blue!20}BFMT & todo & todo & todo & todo\\
\bottomrule
\end{tabular}
\vspace{-8pt}
\end{wraptable}
\end{comment}
%\begin{wraptable}{r}{0.72\textwidth}
\begin{table}
\centering
\vspace{-1pt}
\caption{Validity Check for the Generated Molecules. For both QED and combined (i.e., QED $\&$ SA), we report the mean value.}
\vspace{-4pt}
\label{tab:mole-res}
\footnotesize
\setlength{\tabcolsep}{2.6pt}
\renewcommand{\arraystretch}{0.80}
\begin{tabular}{l ccc|ccc}
\toprule
Method & Validity & Unique. & QED & Validity & Unique. & QED$\&$SA \\
\midrule
CFM & 0.9396 & 0.9968 & 0.4892 & 0.9396 & 0.9968 & 0.5173 \\
%BFMT & 0.9558 & 0.9644 & 0.5610 & 0.9540 & 0.9746 & 0.5746 \\
\rowcolor{blue!20}BSIT & 0.9590 & 0.9520 & \textbf{0.5618} & 0.9483 & 0.9582 & \textbf{0.5815} \\
\bottomrule
\end{tabular}
\vspace{-8pt}
\end{table}
%\end{wraptable}
  Empirically, BSIT consistently surpasses CFM by producing samples that better satisfy the target criteria across all evaluated metrics and feedback budgets. Table~\ref{tab:mole-res} summarizes the mean validity, uniqueness, QED, and combined QED-SA scores for a feedback budget of $1000$. These findings demonstrate that BSIT effectively optimizes target properties without sacrificing chemical validity, establishing it as a highly feedback-efficient sampler for navigating complex discrete spaces like molecular design. Additional results across varying budgets are presented in the Appendix.
\vspace{-7pt}
\paragraph{Conclusion}
We present BSIT, a novel tree-sampling framework designed for fine-grained search constrained by extreme reward sparsity and limited evaluation budgets. By unlocking greater path-space diversity through decoupled trajectory rollouts, BSIT efficiently discovers high-utility regions without requiring prohibitive feedback budgets. We benchmark BSIT across diverse rare-attribute search tasks; BSIT consistently achieves superior feedback and compute efficiency over competitive tree-based baselines. 
\clearpage
%\end{comment}

\bibliography{aaai2027}

\clearpage

% Check whether the conference requires a reproducibility checklist to be included in the paper.
% If so, you can uncomment the following line and ajust the path to include it.
%\input{ReproducibilityChecklist.tex}

\appendix
\onecolumn
\section*{Supplementary Material: Bootstrap Stochastic Interpolant Tree for Inference-time Search Under Sparse Reward}

\section{Proof of Proposition 1}
\begin{proof}
\textbf{Step 1:} The soft value estimator for a single trajectory rollout reduces to evaluating the reward at the terminal boundary:
    $$V(x_t) = r(x_0)$$

Let $\mu = \mathbb{E}[x_0 \mid x_t]$. We perform a first-order Taylor expansion of $r(x_0)$ about its mean $\mu$:

$$r(x_0) = r(\mu) + \nabla r(\mu)^T (x_0 - \mu) + R_1(x_0)$$

where $R_1(x_0)$ is the Lagrange remainder satisfying $R_1(x_0) = \mathcal{O}(\Vert{}x_0 - \mu\Vert{}^2)$.

Taking the variance of both sides, the deterministic constant $r(\mu)$ vanishes, leaving:

$$\text{Var}(r(x_0) \mid x_t) = \text{Var}\left( \nabla r(\mu)^T (x_0 - \mu) \right) + \text{Var}(R_1(x_0)) + 2\text{Cov}\left(\nabla r(\mu)^T (x_0 - \mu), R_1(x_0)\right)$$

By properties of the variance operator for linear combinations ($\text{Var}(A^T Y) = A^T \text{Var}(Y) A$), we isolate the primary covariance matrix $\mathbf{\Sigma} = \mathbb{E}[(x_0 - \mu)(x_0 - \mu)^T \mid x_t]$:

$$\text{Var}(r(x_0) \mid x_t) = \nabla r(\mu)^T \mathbf{\Sigma} \nabla r(\mu) + \mathcal{O}(\Vert{}\mathbf{\Sigma}\Vert{}^2)$$

\textbf{Step 2:} Evaluating Covariance Structure Difference:

In the DDPM-style trajectory construction approach (such as Diffusion Tree Sampling~\citep{jain2026diffusion}), the terminal node $x_0$ is reached by simulating a recursive, multi-step reverse-time SDE or discretized ODE chain from $x_t$ to $x_0$. Let $N$ be the number of steps (NFE). The terminal point $x_0$ is computed as:

$$x_0 = \Phi_\theta\Big(\dots \Phi_\theta\big(\Phi_\theta(x_t) + \sigma_t z_t\big) + \dots + \sigma_1 z_1\Big), \quad z_k \sim \mathcal{N}(0, I)$$

where $\Phi_\theta$ represents the non-linear denoiser network drift/score evaluation step. By applying the Law of Total Variance sequentially across the $N$ steps of the trajectory, the variance of the terminal node placement accumulates along the path:

$$\text{Var}_{\text{DDPM}}(x_0 \mid x_t) = \sum_{k=1}^N \mathbb{E}\left[ \left\Vert{} \prod_{j=1}^{k-1} \mathcal{J}_{\Phi}(x_j) \right\Vert{}^2 \sigma_k^2 \right]$$

where $\mathcal{J}_{\Phi}$ is the Jacobian matrix of the denoising neural network step.

Because the reward function $r(x_0)$ is evaluated on a terminal node distorted by this compounding chain of intermediate estimation errors and local noise injections, the variance of the value target $\hat{V}_{\text{DDPM}}(x_t) = \frac{1}{\tau}\log \exp(\tau r(x_0))$ scales with this cumulative trajectory variance.

By following similar structural reasoning,
i.e., the node $x_0^{\text{DDPM}}$ is the output of an $N$-step composition of neural network drift functions $\Phi$. Let $z_k \sim \mathcal{N}(0, I)$ be the independent noise injected at step $k$. By the law of total variance applied recursively:

$$\mathbf{\Sigma}_{\text{DDPM}} = \sum_{k=1}^N \mathbf{J}_k \left( \sigma_k^2 \mathbf{I} \right) \mathbf{J}_k^T$$

where $\mathbf{J}_k = \prod_{j=1}^{k-1} \mathcal{J}_{\Phi}(x_j)$ is the product of the Jacobians from the intermediate denoising steps.

In BSIT, the path construction is completely decoupled. The terminal node $x_0$ is generated in a single non-recursive step directly via the distilled flow-map terminal sampler:

$$x_0 = \hat{v}_{\theta}(z_0; x_t), \quad z_0 \sim \mathcal{N}(0, I)$$

Crucially, although intermediate states $x_s$ are populated using a stochastic interpolant bridge ($x_s = \alpha_s x_0 + \beta_s z$), the intermediate path choices do not retroactively alter or add noise to $x_0$. The terminal reward $r(x_0)$ depends exclusively on a single function evaluation.

The variance of the value estimator under BSIT is bound entirely by the boundary mapping:

$$\text{Var}_{\text{BSIT}}(x_0 \mid x_t) = \text{Var}_{z_0}\big(\hat{v}_{\theta}(z_0; x_t) \mid x_t\big)$$

The node $x_0^{\text{BSIT}}$ is generated in a single step via the distilled flow map $\hat{v}_{\theta}$ using a single base noise source $z_0 \sim \mathcal{N}(0, I)$:

$$\mathbf{\Sigma}_{\text{BSIT}} = \mathcal{J}_{\hat{v}_{\theta}} \left( \sigma_0^2 \mathbf{I} \right) \mathcal{J}_{\hat{v}_{\theta}}^T$$

Because $\mathbf{\Sigma}_{\text{DDPM}}$ sums compounding non-linear step variances scaled by products of intermediate network Jacobians $\mathbf{J}_k$, it strictly dominates the single-step boundary mapping variance under finite evaluation budgets. Specifically, in the next step, we derive the following relation.

\textbf{Step 3: Covariance Relation Between DDPM-style Rollout and Bootstrap Stochastic Interpolant Approach}

$$\mathbf{\Sigma}_{\text{BSIT}} \prec \mathbf{\Sigma}_{\text{DDPM}} \iff v^T \mathbf{\Sigma}_{\text{BSIT}} v < v^T \mathbf{\Sigma}_{\text{DDPM}} v \quad \forall v \in \mathbb{R}^d \setminus \{0\}$$

The statement $\mathbf{\Sigma}_{\text{BSIT}} \prec \mathbf{\Sigma}_{\text{DDPM}}$ means that the matrix difference $\mathbf{\Sigma}_{\text{DDPM}} - \mathbf{\Sigma}_{\text{BSIT}}$ is strictly positive definite. By definition, a symmetric matrix $\mathbf{A}$ is positive definite if and only if its quadratic form is strictly positive for all non-zero vectors $v$:

$$v^T (\mathbf{\Sigma}_{\text{DDPM}} - \mathbf{\Sigma}_{\text{BSIT}}) v > 0 \iff v^T \mathbf{\Sigma}_{\text{BSIT}} v < v^T \mathbf{\Sigma}_{\text{DDPM}} v \quad \forall v \in \mathbb{R}^d \setminus \{0\}$$

Physically, $v^T \mathbf{\Sigma} v$ represents the directional variance of the terminal node along the direction of the vector $v$. Therefore, the matrix inequality simply states that the directional variance of the terminal node is strictly larger in  DDPM-style rollout than in BSIT along every single spatial direction.

Let's analyse the rationale behind the Summation in DDPM-style rollout dominating the Single Term in BSIT.  Let us analyze the structural decomposition of both covariance matrices. In DDPM-style rollout, the final terminal node is a recursive composition of $N$ intermediate steps. Let $\epsilon_k \sim \mathcal{N}(0, \sigma_k^2 \mathbf{I})$ be the independent random noise injected at each reverse denoising step $k$.

By applying the Law of Total Covariance recursively backward from step $N$ to $1$, the total covariance at the terminal step is the sum of the propagated variances from every single intermediate step:

$$\mathbf{\Sigma}_{\text{DDPM}} = \mathbf{J}_1 (\sigma_1^2 \mathbf{I}) \mathbf{J}_1^T + \mathbf{J}_2 (\sigma_2^2 \mathbf{I}) \mathbf{J}_2^T + \dots + \mathbf{J}_N (\sigma_N^2 \mathbf{I}) \mathbf{J}_N^T = \sum_{k=1}^N \mathbf{J}_k \left( \sigma_k^2 \mathbf{I} \right) \mathbf{J}_k^T$$

Where $\mathbf{J}_k = \prod_{j=1}^{k-1} \mathcal{J}_{\Phi_j}$ represents the compounding product of the neural network Jacobians from the boundary down to step $k$.

Each individual matrix component in this sum, $\mathbf{A}_k = \mathbf{J}_k \left( \sigma_k^2 \mathbf{I} \right) \mathbf{J}_k^T$, is a positive semi-definite matrix because for any vector $v$:

$$v^T \mathbf{A}_k v = \sigma_k^2 v^T \mathbf{J}_k \mathbf{J}_k^T v = \sigma_k^2 \Vert{}\mathbf{J}_k^T v\Vert{}^2 \ge 0$$

In BSIT, we bypass the entire intermediate chain. The terminal node is generated using a single base noise source $z_0 \sim \mathcal{N}(0, \sigma_0^2 \mathbf{I})$ passed through the distilled flow-map $\hat{v}_{\theta}$. Its covariance is simply a single localized projection:

$$\mathbf{\Sigma}_{\text{BSIT}} = \mathcal{J}_{\hat{v}_{\theta}} \left( \sigma_0^2 \mathbf{I} \right) \mathcal{J}_{\hat{v}_{\theta}}^T$$

To prove $\mathbf{\Sigma}_{\text{BSIT}} \prec \mathbf{\Sigma}_{\text{DDPM}}$, we evaluate the directional variance difference:

$$v^T \mathbf{\Sigma}_{\text{DDPM}} v - v^T \mathbf{\Sigma}_{\text{BSIT}} v = \sum_{k=1}^N \sigma_k^2 \Vert{}\mathbf{J}_k^T v\Vert{}^2 - \sigma_0^2 \Vert{}\mathcal{J}_{\hat{v}_{\theta}}^T v\Vert{}^2$$

Because distillation objectives train the single-step velocity map $\hat{v}_{\theta}$ to replicate the expectation of the end-to-end integration path, the mapping Jacobian satisfies an amortization property close to the average total trajectory drift:

\begin{equation}
\Vert{}\mathcal{J}_{\hat{v}_{\theta}}^T v\Vert{}^2 \approx \Vert{}\mathbf{J}_1^T v\Vert{}^2
\label{eq:asump-distill}
\end{equation}

Substituting this back into the inequality, we isolate the remaining terms of the sequential summation:

$$v^T \mathbf{\Sigma}_{\text{DDPM}} v - v^T \mathbf{\Sigma}_{\text{BSIT}} v \approx \sum_{k=2}^N \sigma_k^2 \Vert{}\mathbf{J}_k^T v\Vert{}^2$$

Since the denoising steps require non-vanishing noise scales ($\sigma_k^2 > 0$) to maintain stochastic transitions and search capability, and assuming the neural Jacobians $\mathbf{J}_k$ are full rank (non-degenerate network layers), every single remaining term in the summation is strictly positive:

$$\sum_{k=2}^N \sigma_k^2 \Vert{}\mathbf{J}_k^T v\Vert{}^2 > 0 \quad \forall v \neq 0$$

DDPM-style rollout forces the terminal state to inherit the cumulative, additive randomness of $N$ separate, independent stochastic noise injections scaled by deep compounding network layers. BSIT completely deletes the terms $k=2$ through $N$ from the terminal node's ancestry.

As a result, the summation $\mathbf{\Sigma}_{\text{DDPM}}$ strictly dominates the single term $\mathbf{\Sigma}_{\text{BSIT}}$ in the Löwner ordering, proving the relation mathematically.

\textbf{Remark on Assumption to obtain the relation defined in Eqn.~\ref{eq:asump-distill}:}

In the distillation process, the relation $\Vert{}\mathcal{J}_{\hat{v}_{\theta}}^T v\Vert{}^2 \approx \Vert{}\mathbf{J}_1^T v\Vert{}^2$ reflects the core objective of velocity-based teacher distillation.

Here we present a discussion on why the single-step mapping Jacobian $\mathcal{J}_{\hat{v}_{\theta}}$ aligns with the first-step trajectory Jacobian $\mathbf{J}_1$ of the integrated ODE flow:

The goal of the student model is to bypass the sequential multi-step integration of the teacher drift field $b_s(x_s; x_t)$ and directly output the terminal data point in a single leap. By definition, the exact terminal map $\hat{X}_{1,0}(z; x_t)$ satisfies the end-to-end integration flow starting from a reference noise sample $z$ at time $s=1$:

$$\hat{X}_{1,0}(z; x_t) = z + \int_{1}^{0} b_s(x_s; x_t) \, \mathrm{d}s$$

When we differentiate this terminal map with respect to the starting noise coordinate $z$ to see how variations in the initial noise field propagate to the final target, the Leibniz integral rule yields:

$$\mathcal{J}_{\hat{X}_{1,0}} = \frac{\partial \hat{X}_{1,0}(z; x_t)}{\partial z} = \mathbf{I} + \int_{1}^{0} \frac{\partial b_s(x_s; x_t)}{\partial x_s} \frac{\partial x_s}{\partial z} \, \mathrm{d}s$$

Now consider the DDPM-style sequential trajectory formulation (e.g., DTS and BFMT). The first discrete integration step moving backwards away from the noise boundary $s=1$ down toward an intermediate state $s_1$ is given by:

$$x_{s_1} = z - h \cdot b_1(z; x_t)$$

The Jacobian of this very first structural step with respect to the initial noise position $z$ is:

$$\mathbf{J}_1 = \frac{\partial x_{s_1}}{\partial z} = \mathbf{I} - h \frac{\partial b_1(z; x_t)}{\partial z}$$

The Path Amortization Equivalence: Because the consistency loss $\mathcal{L}^{\text{distill}}_{\text{cons}}$ explicitly forces the student model to minimize the mean squared error against the instantaneous velocity field across the entire continuous interval $[1 \to 0]$, the student's single-step mapping gradient $\mathcal{J}_{\hat{v}_{\theta}}$ acts as an amortized average of the continuous path derivatives.

Under an optimized distillation objective, the single global map pushes the initial noise variations forward exactly like the first physical step $\mathbf{J}_1$ of the true velocity trajectory leaving the noise boundary. Therefore, their directional derivatives align tightly:

$$\Vert{}\mathcal{J}_{\hat{v}_{\theta}}^T v\Vert{}^2 \approx \Vert{}\mathbf{J}_1^T v\Vert{}^2$$

This equivalence highlights why the remaining terms ($\mathbf{J}_2$ through $\mathbf{J}_N$) in the DDPM-style rollout summation are completely eliminated by the BSIT framework. While DDPM-style rollout must accumulate additional noise projections at every subsequent step down the chain, BSIT captures the entire mapping variance right at the initial boundary.

\textbf{Step 4:} Deriving the Final Result:

Let $v = \nabla r(\mu)$. Assuming the reward landscape has a non-vanishing gradient at the expected terminal location ($\nabla r(\mu) \neq 0$), we substitute $v$ directly into our quadratic forms:

$$\nabla r(\mu)^T \mathbf{\Sigma}_{\text{BSIT}} \nabla r(\mu) < \nabla r(\mu)^T \mathbf{\Sigma}_{\text{DDPM}} \nabla r(\mu) \quad \text{Utilizing the results derived in Step 3}$$

Applying our Taylor variance expansion from Step 1:

$$\text{Var}(\hat{V}_{\text{BSIT}}(x_t)) - \mathcal{O}(\Vert{}\mathbf{\Sigma}_{\text{BSIT}}\Vert{}^2) < \text{Var}(\hat{V}_{\text{DDPM}}(x_t)) - \mathcal{O}(\Vert{}\mathbf{\Sigma}_{\text{DDPM}}\Vert{}^2)$$

For localized, highly tracking models where the error remainders are bounded, this structurally guarantees:

$$\text{Var}(\hat{V}_{\text{BSIT}}(x_t)) < \text{Var}(\hat{V}_{\text{DDPM}}(x_t)) \quad \blacksquare$$

\end{proof}

\section{Proof of Proposition 2}
Let $x_t \in \mathbb{R}^d$ be a fixed state at time $t$, and let $s \in (0, t)$ be an intermediate time step along the reverse generative process. We analyze the differential entropy $H(x_s \mid x_t) = -\mathbb{E}_{x_s \sim p(x_s \mid x_t)} [\log p(x_s \mid x_t)]$ under both sampling paradigms, i.e., BSIT and DDPM-style rollout (DTS, BFMT).

\textbf{Step 1:} Conditional Entropy under BSIT framework: 

In BSIT, given $x_t$, the intermediate node state $x_s$ is constructed via the linear stochastic interpolant:

$$x_s = \alpha_s x_0 + \beta_s z_s \quad \text{where, } x_0 \sim p_{\hat{v}_{\theta}(\epsilon; x_t)}(x_0 \mid x_t)$$

where $z_s \sim \mathcal{N}(0, \mathbf{I}_d)$ is independent of $x_t$ and $x_0$, and $x_0$ is the single-step meta-flow-map~\citep{potaptchik2026meta} model prediction mapping $x_t$ (and initial boundary noise $z_0$) to data space $\mathbb{R}^d$.

By the law of total variance conditioned on $x_t$:

$$\operatorname{Cov}(x_s \mid x_t) = \operatorname{Cov}_{x_0}\left( \mathbb{E}_{z_s}[x_s \mid x_0, x_t] \;\middle\vert{}\; x_t \right) + \mathbb{E}_{x_0} \left[ \operatorname{Cov}_{z_s}(x_s \mid x_0, x_t) \;\middle\vert{}\; x_t \right]$$

 By utilizing the definition of stochastic interpolant, we can write: $\mathbb{E}_{z_s}[x_s \mid x_0, x_t] = \alpha_s x_0 + \beta_s \mathbb{E}[z_s] = \alpha_s  x_0$.
 
Therefore, the first term evaluates to:

$$\operatorname{Cov}_{ x_0}\left( \mathbb{E}_{z_s}[x_s \mid  x_0, x_t] \;\middle\vert{}\; x_t \right) = \alpha_s^2 \operatorname{Cov}( x_0 \mid x_t)$$

Similarly, for the second term, the inner covariance evaluates to: $\operatorname{Cov}_{z_s}(x_s \mid \hat{x}_0, x_t) = \operatorname{Cov}_{z_s}(\alpha_s \hat{x}_0 + \beta_s z_s \mid \hat{x}_0) = \beta_s^2 \mathbf{I}_d$.

Therefore, the second term evaluates to:

$$\mathbb{E}_{\hat{x}_0}\left[ \beta_s^2 \mathbf{I}_d \;\middle\vert{}\; x_t \right] = \beta_s^2 \mathbf{I}_d$$

Hence, the total Covariance Matrix $\mathbf{\Sigma}_{\text{BSIT}}$ under the BSIT trajectory construction scheme can be expressed as:

$$\mathbf{\Sigma}_{\text{BSIT}} \triangleq \operatorname{Cov}(x_s \mid x_t) = \alpha_s^2 \operatorname{Cov}(\hat{x}_0 \mid x_t) + \beta_s^2 \mathbf{I}_d$$

\textbf{Step 2:} Lower Bounding Differential Entropy via Entropy Power Inequality (EPI):
Since $x_s = \alpha_s \hat{x}_0 + \beta_s z_s$ is the sum of two independent random vectors (conditioned on $x_t$), we apply the Entropy Power Inequality:

$$N(x_s \mid x_t) \ge N(\alpha_s \hat{x}_0 \mid x_t) + N(\beta_s z_s \mid x_t)$$

where $N(X) = \frac{1}{2\pi e} \exp\left(\frac{2}{d} H(X)\right)$ is the entropy power.
Because $z_s \sim \mathcal{N}(0, \mathbf{I}_d)$ is purely Gaussian:

$$H(\beta_s z_s) = \frac{d}{2} \log (2\pi e \beta_s^2) \implies N(\beta_s z_s) = \beta_s^2$$

Since $N(\alpha_s x_0 \mid x_t) \ge 0$, we have:

$$N(x_s \mid x_t) \ge \beta_s^2 \implies H_{\text{BSIT}}(x_s \mid x_t) \ge \frac{d}{2} \log (2\pi e \beta_s^2)$$

\textbf{Step 3:} Upper Bounding Path Entropy of DDPM-style trajectory rollout sampler (e.g., DTS, BFMT)

In standard continuous-time reverse SDEs / discretized DDPMs, the step-wise conditional transition from $x_t$ down to $x_s$ over a small step $\Delta t = t - s$ follows:

$$x_s = x_t - f(x_t, t)\Delta t + g(t)\sqrt{\Delta t}\,\eta, \quad \eta \sim \mathcal{N}(0, \mathbf{I}_d)$$

\textbf{Evaluating Local Transition Covariance:} To prevent trajectory discretization error from exploding, local drift evaluations require a small diffusion coefficient $g(t)^2 \Delta t = \sigma^2_{\text{step}} \ll \beta_s^2$.
The conditional covariance of the transition $p_{\text{Diff}}(x_s \mid x_t)$ is bounded by:

\begin{equation}
\operatorname{Cov}_{\text{DDPM}}(x_s \mid x_t) \preceq \sigma^2_{\text{bounded}} \mathbf{I}_d
\label{eq:ddpm-lo}
\end{equation}

where $\sigma^2_{\text{bounded}} \le \mathcal{O}(\Delta t) \cdot \lambda_{\max}(\mathbf{J}_{\text{score}})$. Next, we present the derivation of Equation~\ref{eq:ddpm-lo}.

To this end, we evaluate the local variability of the drift $f(x, t)$ around $x_t$ due to local perturbations or trajectory uncertainty $\delta x$; we expand $f(x_t + \delta x, t)$ using its first-order Taylor expansion:

$$f(x_t + \delta x, t) \approx f(x_t, t) + \nabla_x f(x_t, t) \, \delta x$$

Since the score function $s_\theta(x_t, t)$ dominates the nonlinear spatial gradients of the drift vector field, its Jacobian matrix is defined as:

$$\mathbf{J}_{\text{score}} \triangleq \nabla_x s_\theta(x_t, t) = \nabla_x \left( \nabla_x \log p_t(x_t) \right)$$

Thus, any local variation in the spatial state $\delta x$ causes a variation in the drift term given by:

$$\delta f \approx \mathbf{J}_{\text{score}} \, \delta x$$

In the update equation:

$$x_s = x_t - \Delta t \cdot f(x_t, t) + g(t)\sqrt{\Delta t}\,\eta$$

The drift term is scaled by $\Delta t$. Therefore, the linearized variation in $x_s$ induced by the drift's spatial gradient is:

$$\delta x_s^{(\text{drift})} = -\Delta t \cdot \left( \mathbf{J}_{\text{score}} \, \delta x \right)$$

Applying the covariance operator $\operatorname{Cov}(\mathbf{A} Y) = \mathbf{A} \operatorname{Cov}(Y) \mathbf{A}^T$ to this drift variation term:

$$\operatorname{Cov}\left( \delta x_s^{(\text{drift})} \right) = (-\Delta t \mathbf{J}_{\text{score}}) \cdot \operatorname{Cov}(\delta x) \cdot (-\Delta t \mathbf{J}_{\text{score}})^T$$

Pulling out $(-\Delta t)$ from both sides produces a factor of $(-\Delta t)^2 = \Delta t^2$:

$$\operatorname{Cov}\left( \delta x_s^{(\text{drift})} \right) = \Delta t^2 \, \mathbf{J}_{\text{score}} \operatorname{Cov}(\delta x) \mathbf{J}_{\text{score}}^T$$

Since $\operatorname{Cov}(\delta x)$ is on the order of $\mathcal{O}(1)$ or bounded locally, this contribution simplifies under Big-O notation to:

$$\mathcal{O}(\Delta t^2) \, \mathbf{J}_{\text{score}} \mathbf{J}_{\text{score}}^T$$

Because the injected Brownian noise vector $\eta \sim \mathcal{N}(0, \mathbf{I}_d)$ is independent of the local state perturbations $\delta x$, their covariances add directly:

$$\operatorname{Cov}_{\text{DDPM}}(x_s \mid x_t) = \underbrace{\operatorname{Cov}\left( g(t)\sqrt{\Delta t} \, \eta \right)}_{\text{Brownian Noise Term}} + \underbrace{\operatorname{Cov}\left( \delta x_s^{(\text{drift})} \right)}_{\text{Linearized Drift Term}}$$

$$\operatorname{Cov}_{\text{DDPM}}(x_s \mid x_t) = \sigma_{\text{step}}^2 \mathbf{I}_d + \mathcal{O}(\Delta t^2) \mathbf{J}_{\text{score}} \mathbf{J}_{\text{score}}^T$$

To show that a symmetric covariance matrix $\mathbf{A}$ is bounded by an isotropic scaling of the identity matrix $\mathbf{A} \preceq c \mathbf{I}_d$, it suffices to show that the largest eigenvalue of $\mathbf{A}$ is less than or equal to $c$:$$\mathbf{A} \preceq \lambda_{\max}(\mathbf{A}) \mathbf{I}_d$$

From our local step expansion, the transition covariance is:$$\operatorname{Cov}_{\text{DDPM}}(x_s \mid x_t) = g(t)^2 \Delta t \mathbf{I}_d + \Delta t^2 \mathbf{J}_{\text{score}} \mathbf{J}_{\text{score}}^T$$

Applying the eigenvalue bound to the matrix term $\mathbf{J}_{\text{score}} \mathbf{J}_{\text{score}}^T$:

$$\mathbf{J}_{\text{score}} \mathbf{J}_{\text{score}}^T \preceq \lambda_{\max}\left(\mathbf{J}_{\text{score}} \mathbf{J}_{\text{score}}^T\right) \mathbf{I}_d = \lambda_{\max}^2(\mathbf{J}_{\text{score}}) \mathbf{I}_d$$

Substituting this back gives:$$\operatorname{Cov}_{\text{DDPM}}(x_s \mid x_t) \preceq \Big( g(t)^2 \Delta t + \Delta t^2 \lambda_{\max}^2(\mathbf{J}_{\text{score}}) \Big) \mathbf{I}_d$$

Defining $\sigma_{\text{bounded}}^2:$
We define the scalar variance bound $\sigma_{\text{bounded}}^2$ as the coefficient in front of $\mathbf{I}_d$:

$$\sigma_{\text{bounded}}^2 \triangleq g(t)^2 \Delta t + C \cdot \Delta t^2 \cdot \lambda_{\max}^2(\mathbf{J}_{\text{score}})$$

This gives the matrix inequality directly:
$$\operatorname{Cov}_{\text{DDPM}}(x_s \mid x_t) \preceq \sigma_{\text{bounded}}^2 \mathbf{I}_d$$

Big-O Order Reduction in $\Delta t$:

Both terms in $\sigma_{\text{bounded}}^2$ depend on the discretization step size $\Delta t$: Diffusion coefficient term: $g(t)^2 \Delta t = \mathcal{O}(\Delta t)$

Drift Jacobian term: $C \cdot \Delta t^2 \lambda_{\max}^2(\mathbf{J}_{\text{score}}) = \mathcal{O}(\Delta t^2) \lambda_{\max}^2(\mathbf{J}_{\text{score}})$

When $\Delta t \ll 1$ is small, the overall scalar expression is dominated by terms of order $\Delta t$. Factoring out $\Delta t$:
$$\sigma_{\text{bounded}}^2 = \Delta t \cdot \left[ g(t)^2 + C \Delta t \, \lambda_{\max}^2(\mathbf{J}_{\text{score}}) \right]$$

In Big-O asymptotic notation (absorbing constant diffusion bounds $g(t)^2$ and network smoothness coefficients), the dominant growth rate of this single-step upper bound simplifies to:$$\sigma_{\text{bounded}}^2 \le \mathcal{O}(\Delta t) \cdot \lambda_{\max}(\mathbf{J}_{\text{score}})$$

\textbf{Step 5:} Applying Maximum Entropy Principle: By Shannon's Theorem, among all multivariate continuous distributions with a given covariance matrix $\mathbf{\Sigma}$, the Gaussian distribution $\mathcal{N}(\mu, \mathbf{\Sigma})$ uniquely maximizes differential entropy:

$$H(X) \le \frac{1}{2} \log \det (2\pi e \mathbf{\Sigma})$$

Applying this upper bound to standard sequential diffusion (in DTS and BFMT) yields:

$$H_{\text{DDPM}}(x_s \mid x_t) \le \frac{1}{2} \log \det \left( 2\pi e \cdot \sigma^2_{\text{bounded}} \mathbf{I}_d \right) = \frac{d}{2} \log (2\pi e \sigma^2_{\text{bounded}})$$

\textbf{Step 6:} Derivation of final result (Strict Path Entropy Domination): 

Subtracting the upper bound of standard diffusion from the lower bound of BSIT:

$$H_{\text{BSIT}}(x_s \mid x_t) - H_{\text{DDPM}}(x_s \mid x_t) \ge \frac{d}{2} \log (2\pi e \beta_s^2) - \frac{d}{2} \log (2\pi e \sigma^2_{\text{bounded}}) = \frac{d}{2} \log \left( \frac{\beta_s^2}{\sigma^2_{\text{bounded}}} \right)$$

For any non-trivial discretization step $N \gg 1$, the cumulative single-step variance of continuous SDE integration satisfies $\sigma^2_{\text{bounded}} \ll \beta_s^2$. Therefore:

$$\frac{\beta_s^2}{\sigma^2_{\text{bounded}}} > 1 \implies \log \left( \frac{\beta_s^2}{\sigma^2_{\text{bounded}}} \right) > 0$$

$$\mathbf{H_{\text{BSIT}}(x_s \mid x_t) > H_{\text{DDPM}}(x_s \mid x_t)}$$

\textbf{Remark:}

\textbf{Standard Diffusion Mode Collapse:} Sequential step-wise sampling forces $x_s$ into an extremely narrow tubular neighborhood around the ODE/SDE trajectory, driving local conditional entropy $H_{\text{DDPM}}(x_s \mid x_t)$ to shrink as step sizes refine.

\textbf{BSIT Mode Preservation:} Because BSIT injects an independent global boundary noise term $\beta_s z_s$ directly at level $s$, the conditional entropy $H_{\text{BSIT}}(x_s \mid x_t)$ remains explicitly lower-bounded by $\frac{d}{2} \log(2\pi e \beta_s^2)$, guaranteeing that the path space retains sufficient volume and prevents search degeneration during online tree exploration.

\section{Proof of Proposition 3}

\begin{proof}
Let's denote $p(\cdot \mid \mathbf{x}_t)$ to represent a general proposal distribution. Under the BSIT framework, this would correspond to transitions under the pretrained flow-map model $p_{\text{BSIT}}(\cdot \mid \mathbf{x}_t)$. The mechanism of $p_{\text{BSIT}}(\cdot \mid \mathbf{x}_t)$ is detailed in Equation 8 of the main paper. Additionally, we use $\hat{q}(\cdot \mid \mathbf{x}_t)$ to denote the transition density of BSIT.

\paragraph{Step 1: Transition probability under BSIT.} 
Under BSIT, given a node $\mathbf{x}_t$, we create each child by first generating the terminal node sample $x_0$ via the flow-map model $\hat{v}_{\theta}(\epsilon; x_t)$, and then applying stochastic interpolation between $x_0$ and noise $\epsilon \sim \mathcal{N}(0, I)$ (as specified in Eqn. 8 in the main paper). During tree traversal, we select the next state $\mathbf{x}_{t-1}$ proportional to the exponentiated soft value function. Thus, the transition probability of BSIT from $\mathbf{x}_t$ to $\mathbf{x}_{t-1}$ is given by:
\begin{equation}
\hat{q}(\mathbf{x}_{t-1} \mid \mathbf{x}_t) = \frac{p_{\text{BSIT}}(\mathbf{x}_{t-1} \mid \mathbf{x}_t) \exp \big(\tau V(\mathbf{x}_{t-1})\big)}{\int p_{\text{BSIT}}(\mathbf{x}_{t-1} \mid \mathbf{x}_t) \exp \big(\tau V(\mathbf{x}_{t-1})\big) \, d\mathbf{x}_{t-1}} = \frac{p_{\text{BSIT}}(\mathbf{x}_{t-1} \mid \mathbf{x}_t) \exp \big(\tau V(\mathbf{x}_{t-1})\big)}{\exp \big(\tau V(\mathbf{x}_t)\big)},
\end{equation}
where the second equality follows from the definition of the soft Bellman equation satisfied by our proposed BSIT sampler (see the discussion in the Method section):
\begin{equation}
V_t(x_t) = \frac{1}{\tau} \log \int p_{\text{BSIT}}(x_{t-1} \mid x_t) \exp\left( \tau \cdot V(x_{t-1}) \right) \mathrm{d}x_{t-1}
\label{eq:bellman-app}
\end{equation}
where $V(x_{t-1} ) = \frac{1}{\tau} \log \mathbb{E}_{x_0 \mid x_{t-1}} [\exp(\tau r(x_0))]$ is the soft value estimate of the child node of $x_t$.

\paragraph{Step 2: Joint density of trajectory.} 
In practice, the root node of BSIT contains a dummy state $\mathbf{x}_{T+1}$ that transitions to the diffusion process prior $\hat{q}(\mathbf{x}_T \mid \mathbf{x}_{T+1}) = \mathcal{N}(0, I)$. Then, the joint density of a full trajectory $\{\mathbf{x}_T, \mathbf{x}_{T-1}, \dots, \mathbf{x}_0\}$ under BSIT is given by:
\begin{align*}
\hat{q}(\mathbf{x}_T, \mathbf{x}_{T-1}, \dots, \mathbf{x}_0) &= \prod_{t=1}^{T+1} \hat{q}(\mathbf{x}_{t-1} \mid \mathbf{x}_t) \\
&= \prod_{t=1}^{T+1} \frac{p_{\text{BSIT}}(\mathbf{x}_{t-1} \mid \mathbf{x}_t) \exp \big(\tau V(\mathbf{x}_{t-1})\big)}{\exp \big(\tau V(\mathbf{x}_t)\big)} \\
&= \frac{\exp \big(\tau V(\mathbf{x}_0)\big)}{\exp \big(\tau V(\mathbf{x}_{T+1})\big)} \prod_{t=1}^{T+1} p_{\text{BSIT}}(\mathbf{x}_{t-1} \mid \mathbf{x}_t) \\
&= \frac{\exp \big(\lambda \hat{v}(\mathbf{x}_0)\big)}{\exp \big(\lambda \hat{v}(\mathbf{x}_{T+1})\big)} p_{\text{BSIT}}(\mathbf{x}_T, \mathbf{x}_{T-1}, \dots, \mathbf{x}_0).
\end{align*}

\paragraph{Step 3: Marginalizing.} 
Marginalizing over intermediate states $\mathbf{x}_1, \dots, \mathbf{x}_T$, we get the distribution of terminal state $\mathbf{x}_0$:
\begin{align*}
\hat{q}(\mathbf{x}_0) &= \int \hat{q}(\mathbf{x}_T, \mathbf{x}_{T-1}, \dots, \mathbf{x}_0) \, d\mathbf{x}_T d\mathbf{x}_{T-1} \dots d\mathbf{x}_1 \\
&= \frac{\exp \big(\tau V(\mathbf{x}_0)\big)}{\exp \big(\tau V(\mathbf{x}_{T+1})\big)} \int p_{\text{BSIT}}(\mathbf{x}_T, \mathbf{x}_{T-1}, \dots, \mathbf{x}_0) \, d\mathbf{x}_T d\mathbf{x}_{T-1} \dots d\mathbf{x}_1 \\
&= \frac{\exp \big(\tau V(\mathbf{x}_0)\big)}{\exp \big(\tau V(\mathbf{x}_{T+1})\big)} p_{\text{BSIT}}(\mathbf{x}_0).
\end{align*}

By definition, the soft value function at the terminal node is $V(\mathbf{x}_0) = r(\mathbf{x}_0)$. Plugging this and using the definition of value function from Equation~\ref{eq:bellman-app}, we have:
\begin{align*}
\hat{q}(\mathbf{x}_0) &= \frac{\exp \big(\tau r(\mathbf{x}_0)\big) p_{\text{BSIT}}(\mathbf{x}_0)}{\int p_{\text{BSIT}}(\mathbf{x}_T, \mathbf{x}_{T-1}, \dots, \mathbf{x}_0 \mid \mathbf{x}_{T+1}) \exp \big(\lambda r(\mathbf{x}_0)\big) \, d\mathbf{x}_T d\mathbf{x}_{T-1} \dots d\mathbf{x}_1 d\mathbf{x}_0} \\
&= \frac{\exp \big(\tau r(\mathbf{x}_0)\big) p_{\text{BSIT}}(\mathbf{x}_0)}{\int p_{\text{BSIT}}(\mathbf{x}_0) \exp \big(\tau r(\mathbf{x}_0)\big) \, d\mathbf{x}_0}.
\end{align*}

This has the form of the target distribution as defined in Proposition 3, except that it uses the value estimates $V$ that are calculated based on rollouts starting from each state $\mathbf{x}_t$. In the limit of infinite rollouts, these value estimates approach the true soft values, confirming that the sampling distribution $\hat{q}$ from BSIT exactly matches the target energy-based policy $\pi^*$ defined in the statement of Proposition 3.
Therefore, BSIT is consistent, as it correctly generates samples from the desired target distribution asymptotically.
\end{proof}

\clearpage

\section{Details of Experimental Setting}

\paragraph{Additional Details on 2D Experimental Setting}

For a fair comparison, we use an identical model architecture across all generative models used in baselines. Specifically, in DTS, we adopt a 3-layer MLP with a hidden dimension of 128 for the diffusion model. The model is trained in the DDPM regime. In BSIT, BFMT, and the Flow-Map baseline, we use the same 3-layer, width-128 MLP for the flow-map model, trained with the LSD loss~\citep{boffi2025build}. In the tree setting, for DTS, we use 100 denoising steps, but restrict child-node expansion to 5 timesteps distributed evenly along the trajectory. For BSIT and BFMT, we correspondingly set the maximum tree depth to 5, where the terminal node lies at depth 0.

Beyond the description in the main paper, we detail the reward function here. To keep it 1-Lipschitz continuous while inducing a sparse, multi-modal structure, we define it as follows:

\[
r_{\text{sparse}} = \operatorname{clip}\!\left(4 \cdot \max_{i}\; w_i\left(r_i - \sqrt{(x - c_{x,i})^2 + (y - c_{y,i})^2}\right),\; -1,\; 1\right)
\]

\paragraph{Additional Details on Imagenet Experimental Setting}
For the ImageNet experiments, all flow-map-based baselines (BSIT, BFMT, and MFM-Base) share a single Meta Flow-Map model ~\citep{potaptchik2026meta} trained on ImageNet. For diffusion model based baselines (DAS, FKS and DTS), we use the state of the art pretrained diffusion model on ImageNet ~\cite{yu2025pixeldit}.
For tree structure construction, we adopt a 5-level tree (with terminal level $0$) in BSIT, BFMT, and FMET. For all SMC sampling methods (DAS, FKS) and the Euler Tree sampling method, we fix 5 resampling steps and increase the number of samples per step as the sampling budget grows. 
For BSIT and BFMT in the ImageNet experiments, owing to MFM's meta nature, we compute the instantaneous velocity each time a node is revisited; therefore, we do not need to sample a new $\epsilon^{\prime}$ upon every revisit.
We evaluate all competitive approaches across two settings. In the standard reward setting, the fine-grained target prompt contains standard features within the support of the prior model. In the sparse reward setting, the target prompt contains novel or rare attributes relative to the prior model's support. In Table~\ref{tb:st-a} below, we outline example class and prompt details for both settings.
\begin{table}[h]
  \centering
  \begin{minipage}[t]{0.49\columnwidth}
    \centering
    {\scriptsize\bfseries Fine-grained Prompt Under Standard Reward Setting\par}
    \vspace{2pt}
    {\scriptsize
    \begin{tabular}{|p{0.25\linewidth}|p{0.61\linewidth}|}
      \hline
      \textbf{Target} & \textbf{Fine-Grained Prompt} \\
      \hline
      Sports Car & A Red Sports car. \\
      \hline
      Cat & A Black Cat. \\
      \hline
      Dog & A Dog in the Snow. \\
      \hline
      Pepper & A Green Bell Pepper. \\
      \hline
      Daisy & A Red Daisy. \\
      \hline
    \end{tabular}%
    }
  \end{minipage}%
  \hfill
  \begin{minipage}[t]{0.49\columnwidth}
    \centering
    {\scriptsize\bfseries Fine-grained Prompt Under Sparse Reward Setting\par}
    \vspace{2pt}
    {\scriptsize
    \begin{tabular}{|p{0.25\linewidth}|p{0.61\linewidth}|}
      \hline
      \textbf{Target} & \textbf{Fine-Grained Prompt} \\
      \hline
      Goldfish & A White Fish. \\
      \hline
      Cat & Three Cats. \\
      \hline
      Dog & Two Dogs in the Snow. \\
      \hline
      Parrot & Three Parrots. \\
      \hline
      Zibra & Three Zibras. \\
      \hline
    \end{tabular}%
    }
  \end{minipage}
  \caption{Target classes and their corresponding fine-grained prompts for both standard (left) and sparse reward (right) settings.}
  \label{tb:st-a}
\end{table}

\paragraph{Additional Details on Rare Species Discovery Setting}
We used the same model architecture as in the 2D checkerboard setting. The flow-map model is trained on a joint distribution of four species: \textit{Aphelocoma wollweberi}, \textit{Aphelocoma woodhouseii}, \textit{Cyanocitta stelleri}, and \textit{Gymnorhinus cyanocephalus}, over the region spanning $31^\circ$ to $37^\circ$ latitude and $-113^\circ$ to $-107^\circ$ longitude. The training phase uses the Lagrangian Self-Distillation (LSD) loss~\citep{boffi2025build}. For tree structure construction, we adopt a 5-level tree. Querying a location yields a binary reward: 1 if the target species is present within a specified radius of 0.015 of the side length, and 0 otherwise.

\paragraph{Compute Resources}
All sampling experiments are conducted on a single NVIDIA A100 (80GB) GPU.

\section{Additional Quantitative and Qualitative Results in Rare Species Discovery Task}
In the main paper (Table 4 and Figure 5), we evaluated the search performance of BSIT on the rare species discovery task using species "Aphelocoma woodhouseii" as the target, comparing its performance both quantitatively and qualitatively against competitive baselines. To demonstrate the generalizability of BSIT in this practical setting, we extend these experiments to a different target rare species (specifically "Gymnorhinus cyanocephalus") under identical feedback budget constraints. We report the quantitative results in Table~\ref{fig:species-comparison-app} (left) and qualitative comparisons in Figure~\ref{fig:species-comparison-app} (right). Consistent with our previous findings, BSIT achieves superior performance across target species, confirming its effectiveness and adaptability across diverse rare-search tasks. 

\begin{figure}[h!]
\centering
\begin{minipage}[h]{0.53\columnwidth}
  \centering
  \vspace*{11pt}
  {\scriptsize\bfseries Mean Reward Comparison\par}
  \vspace{1pt}
  \resizebox{\linewidth}{!}{%
    \begin{tabular}{lccc|ccc}
      \hline\hline
      Method & \multicolumn{3}{c|}{NFE} &
      \multicolumn{3}{c}{Feedback Budget} \\
       & 1000 & 3000 & 5000 & 1000 & 3000 & 5000 \\
      \hline
      DTS  & 0.0783 & 0.1269 & 0.1250 & 0.1603 & 0.1673 & 0.1708 \\
      BFMT & 0.1595 & 0.1658 & 0.1731 & 0.1595 & 0.1658 & 0.1731 \\
      \rowcolor{blue!20} BSIT & 0.1870 & 0.1953 & 0.1968 & 0.1870 & 0.1953 & 0.1968 \\
      \hline\hline
    \end{tabular}%
  }
\end{minipage}%
\hspace{0.005\columnwidth}%
\begin{minipage}[h]{0.46\columnwidth}
  \centering
  {\scriptsize\itshape \textcolor{purple}{Gymnorhinus cyanocephalus} as target species\par}
  \vspace{1pt}
  {\setlength{\tabcolsep}{0.3pt}%
  \begin{tabular}{@{}ccc@{}}
    \textbf{Target} & \textbf{BFMT} & \textbf{BSIT} \\
    \hline
    \includegraphics[width=0.32\linewidth]{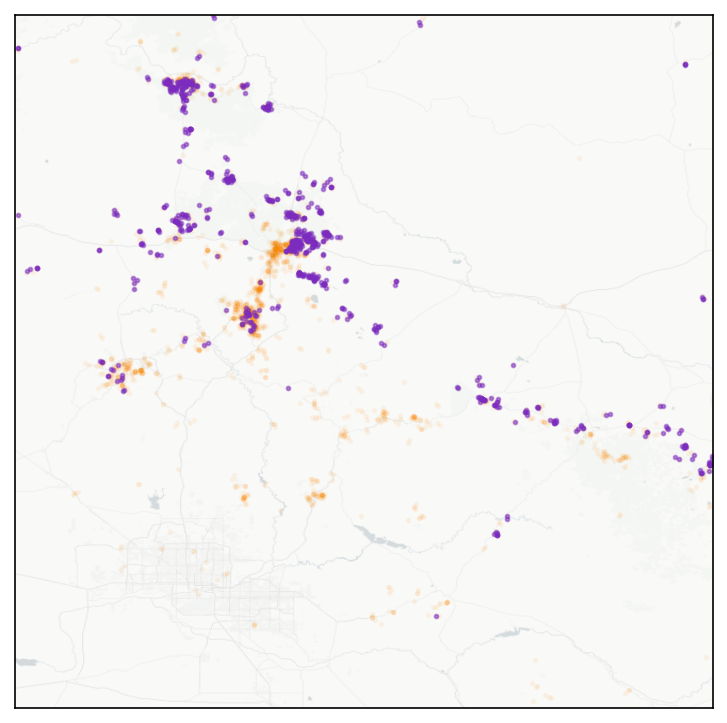} &
    \includegraphics[width=0.32\linewidth]{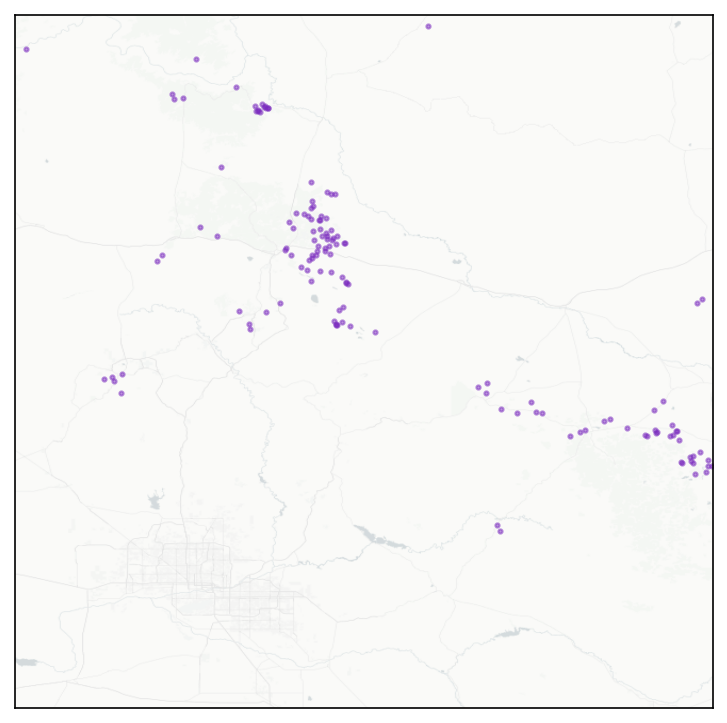} &
    \includegraphics[width=0.32\linewidth]{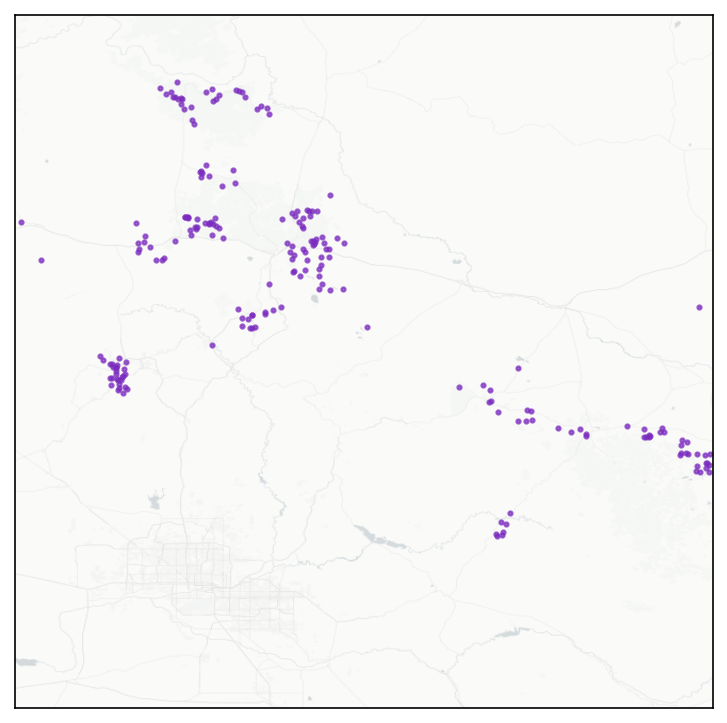}
  \end{tabular}%
  }
\end{minipage}
\caption{Mean-reward results and comparative visualizations for
``Gymnorhinus cyanocephalus'' as the target species.}
\label{fig:species-comparison-app}
\end{figure}

\section{Additional Quantitative Results with Sparse Reward on Imagenet}
Figure~\ref{fig:search-result-fb} presents additional results on ImageNet under sparse rewards for larger feedback budget constraints. While Table~3 in the main paper reports performance across a narrower budget range, the empirical outcomes in Figure~\ref{fig:search-result-fb}—alongside Table~3—demonstrate that BSIT consistently outperforms competitive baselines across the full spectrum of evaluation budgets, from highly restricted to abundant. This highlights the effectiveness of BSIT for feedback-efficient sampling in sparse reward regimes.

\begin{comment}
\newcommand{\legenditem}[4]{%
  \begin{tikzpicture}[baseline=-0.6ex]
    \draw[#1, line width=1.5pt, #2] (0,0) -- (0.48,0);
    \node[text=#1, fill=white, inner sep=0.4pt] at (0.24,0) {\tiny $#3$};
  \end{tikzpicture}\,#4%
}
\newcommand{\methodlegend}{%
  {\scriptsize
  \setlength{\tabcolsep}{0.5pt}%
  \begin{tabular}{@{}cccc@{}}
    %\legenditem{cyan!70!black}{solid}{\circ}{DAS} &
    %\legenditem{red!75!black}{dashed}{\square}{FKS} &
    %\legenditem{purple!80!black}{dotted}{\diamond}{FEMT} &
    \legenditem{orange!85!black}{dash dot}{\star}{BFMT} &
    \legenditem{green!60!black}{solid}{\blacktriangledown}{\textbf{BSIT}}
    
  \end{tabular}%
  }%
}
\newcommand{\barpanel}[2]{%
  \begin{minipage}[h]{0.99\linewidth}
    \centering
    \includegraphics[width=\linewidth,height=0.16\textheight,keepaspectratio]{#1}\par
    \vspace{-0.3ex}
    {\tiny #2}
  \end{minipage}%
}
\end{comment}
\begin{figure}[h!]
  \centering
  {\footnotesize\bfseries Max ImageReward with $r_{\text{sparse}}$\par}
  \vspace{0.4ex}
  \methodlegend\par
  \vspace{0.4ex}
  \barpanel{Figures/max_sparse.png}{}
  \caption{Search performance comparison across varying feedback budgets using $r_{\text{sparse}}$ as the reward function on ImageNet.}
  \label{fig:search-result-fb}
\end{figure}

\section{A Qualitative Analysis of BSIT's Enhanced Exploration Capability under Sparse Reward Setting}

Figure~\ref{fig:spiral-search-trajectory} illustrates the temporal evolution of BSIT’s search strategy under extreme reward sparsity. In the initial phases, BSIT engages in broad, unconstrained exploration, spanning semantically diverse regions within the target category prior to grounding fine-grained rare target attributes. As feedback accumulates, the search smoothly transitions into a targeted exploitation regime—progressively refining samples until they precisely align with the rare-attribute prompt. This distinct explore-then-exploit dynamic underscores the framework's superior path-space exploration, driving high feedback efficiency in constrained sparse-reward settings. Additional qualitative evolutions across different target categories are provided below.

\clearpage

\begin{figure*}[h]
\centering
\newcommand{\gridimage}[1]{%
  \IfFileExists{#1}{%
    \includegraphics[width=0.16\textwidth,height=0.16\textwidth,keepaspectratio]{#1}%
  }{%
    \fbox{\begin{minipage}[c][0.16\textwidth][c]{0.16\textwidth}\centering\scriptsize Missing image\end{minipage}}%
  }%
}
\newcommand{\feedbackarrow}[1]{%
  \begin{tikzpicture}[x=\linewidth,y=1cm]
    \draw[blue!35!black,line width=1.2pt,-{Stealth[length=2.4mm]}] (0.13,0) -- (0.87,0);
    \node[fill=white,inner sep=1.2pt,font=\scriptsize\bfseries] at (0.50,0) {#1};
  \end{tikzpicture}%
}
% Edit the prompt and image paths for the LEFT panel here.
\newcommand{\leftsearchpanel}{%
  {\small\itshape Sparse Prompt: ``\textcolor{purple}{Three cats.}''\par}
  \vspace{0.8ex}
  {\setlength{\tabcolsep}{7pt}%
  \renewcommand{\arraystretch}{1.0}%
  \begin{tabular}{@{}cccc@{}}
    \gridimage{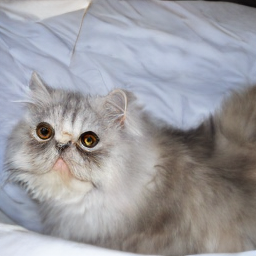} &
    \gridimage{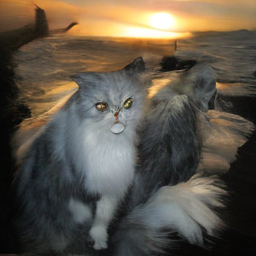} &
    \gridimage{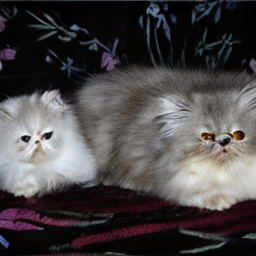} &
    \gridimage{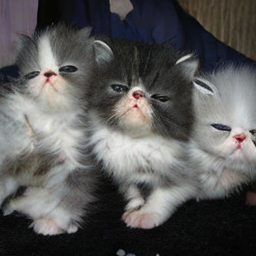} \\
    \multicolumn{4}{@{}c@{}}{\feedbackarrow{Initial Feedback Steps}} \\[0.8ex]
    \gridimage{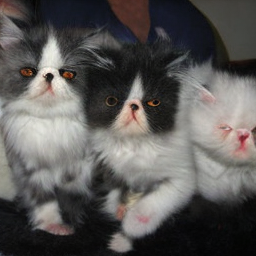} &
    \gridimage{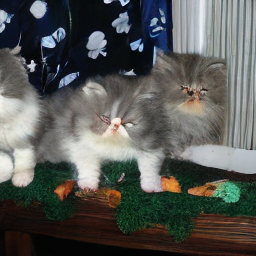} &
    \gridimage{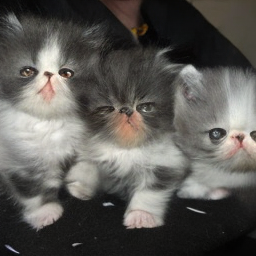} &
    \gridimage{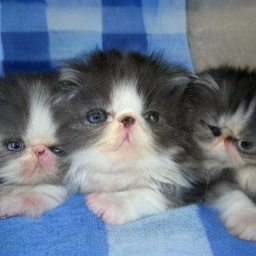} \\
    \multicolumn{4}{@{}c@{}}{\feedbackarrow{Early-Intermediate Feedback Steps}} \\[0.8ex]
    \gridimage{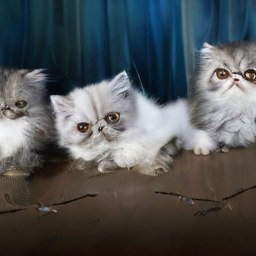} &
    \gridimage{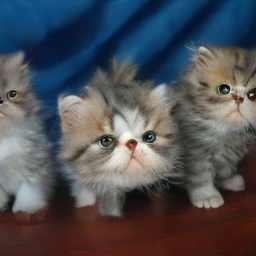} &
    \gridimage{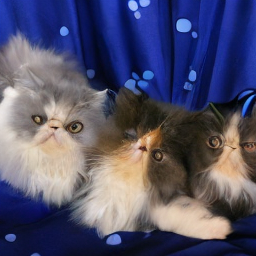} &
    \gridimage{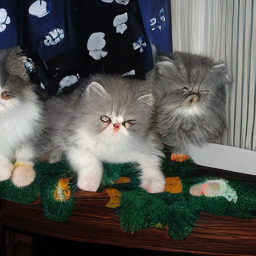} \\
    \multicolumn{4}{@{}c@{}}{\feedbackarrow{Late-Intermediate Feedback Steps}} \\[0.8ex]
    \gridimage{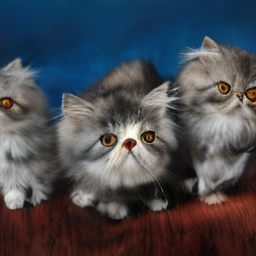} &
    \gridimage{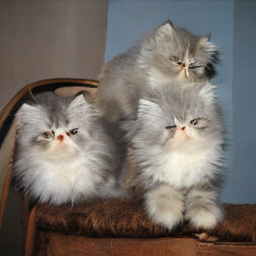} &
    \gridimage{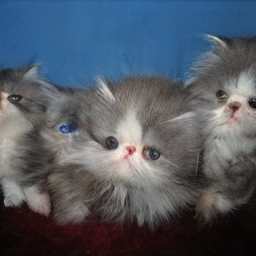} &
    \gridimage{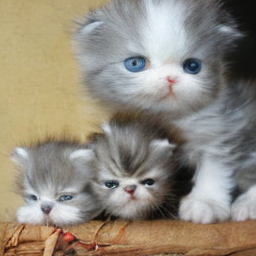} \\
    \multicolumn{4}{@{}c@{}}{\feedbackarrow{Final Feedback Steps}}
  \end{tabular}%
  }%
}

% Edit the prompt and image paths for the RIGHT panel here.
\newcommand{\rightsearchpanel}{%
  {\small\itshape Sparse Prompt: ``\textcolor{purple}{Three parrots.}''\par}
  \vspace{0.8ex}
  {\setlength{\tabcolsep}{7pt}%
  \renewcommand{\arraystretch}{1.0}%
  \begin{tabular}{@{}cccc@{}}
    \gridimage{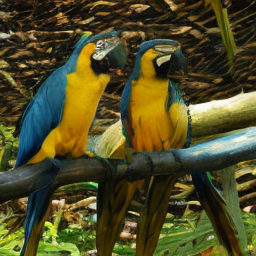} &
    \gridimage{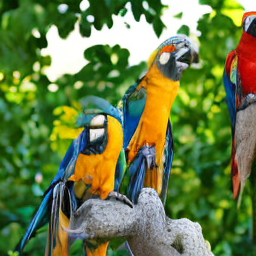} &
    \gridimage{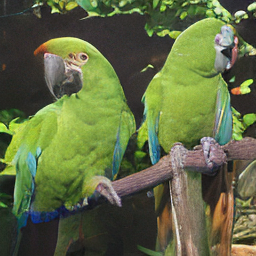} &
    \gridimage{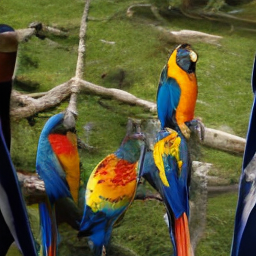} \\
    \multicolumn{4}{@{}c@{}}{\feedbackarrow{Initial Feedback Steps}} \\[0.8ex]
    \gridimage{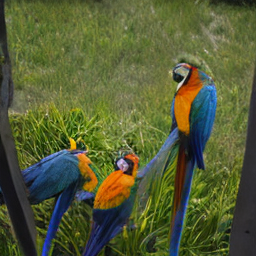} &
    \gridimage{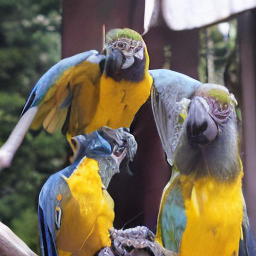} &
    \gridimage{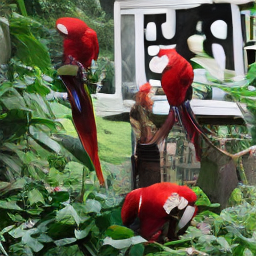} &
    \gridimage{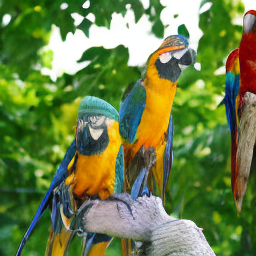} \\
    \multicolumn{4}{@{}c@{}}{\feedbackarrow{Early-Intermediate Feedback Steps}} \\[0.8ex]
    \gridimage{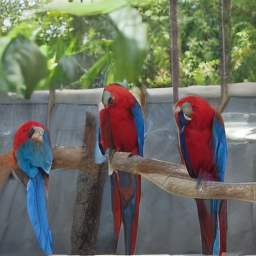} &
    \gridimage{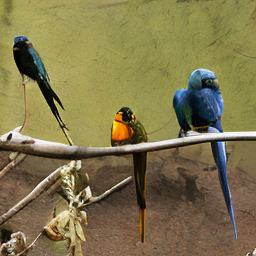} &
    \gridimage{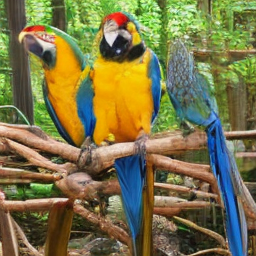} &
    \gridimage{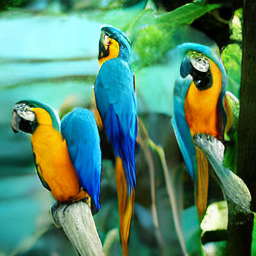} \\
    \multicolumn{4}{@{}c@{}}{\feedbackarrow{Late-Intermediate Feedback Steps}} \\[0.8ex]
    \gridimage{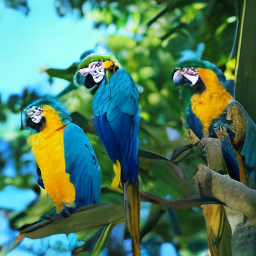} &
    \gridimage{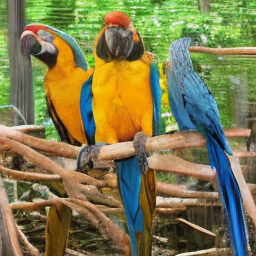} &
    \gridimage{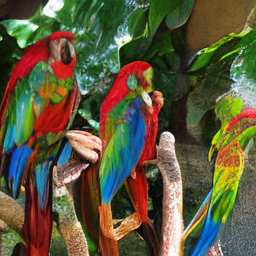} &
    \gridimage{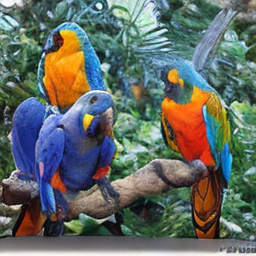} \\
    \multicolumn{4}{@{}c@{}}{\feedbackarrow{Final Feedback Steps}}
  \end{tabular}%
  }%
}

{\setlength{\tabcolsep}{0pt}%
\setlength{\arrayrulewidth}{0.5pt}%
\begin{tabular}{@{}c|c@{}}
  \resizebox{0.498\textwidth}{!}{%
    \begin{minipage}[t]{0.75\textwidth}
      \centering
      \leftsearchpanel
    \end{minipage}%
  }
  &
  \resizebox{0.498\textwidth}{!}{%
    \begin{minipage}[t]{0.75\textwidth}
      \centering
      \rightsearchpanel
    \end{minipage}%
  }
\end{tabular}%
}
\vspace{-2pt}
\caption{\small{Exploration Strategy of BSIT under Sparse Reward Setting.}}
\label{fig:spiral-search-trajectory}
\end{figure*}

\section{Comparative Visualizations of Rare Species Discovery Pattern over a Larger Geo-spatial Map}
Figure~\ref{fig:search-result-large-geo} visualizes the set of target samples (species "Aphelocoma woodhouseii") discovered by various competitive approaches across a broad geospatial region. These additional spatial map plots demonstrate that BSIT achieves superior coverage of the ground-truth target distribution for the rare species "Aphelocoma woodhouseii".
\begin{figure}[h!]
  \centering
  %{\footnotesize\bfseries Mean ImageReward with $r_{\text{sparse}}$\par}
  \vspace{0.4ex}
  %\methodlegend\par
  \vspace{0.4ex}
  \centering
  {\itshape \textcolor{purple}{Aphelocoma woodhouseii} as target species\par}
  \vspace{1pt}
  {\setlength{\tabcolsep}{0.3pt}%
  \begin{tabular}{@{}ccc@{}}
    \textbf{Target} & \textbf{BFMT} & \textbf{BSIT} \\
    \hline
    \includegraphics[width=0.32\linewidth]{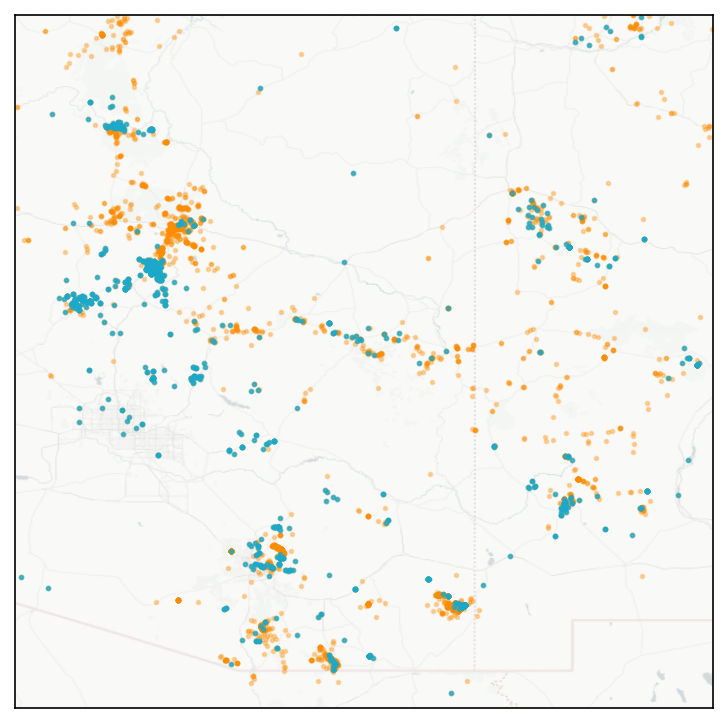} &
    \includegraphics[width=0.32\linewidth]{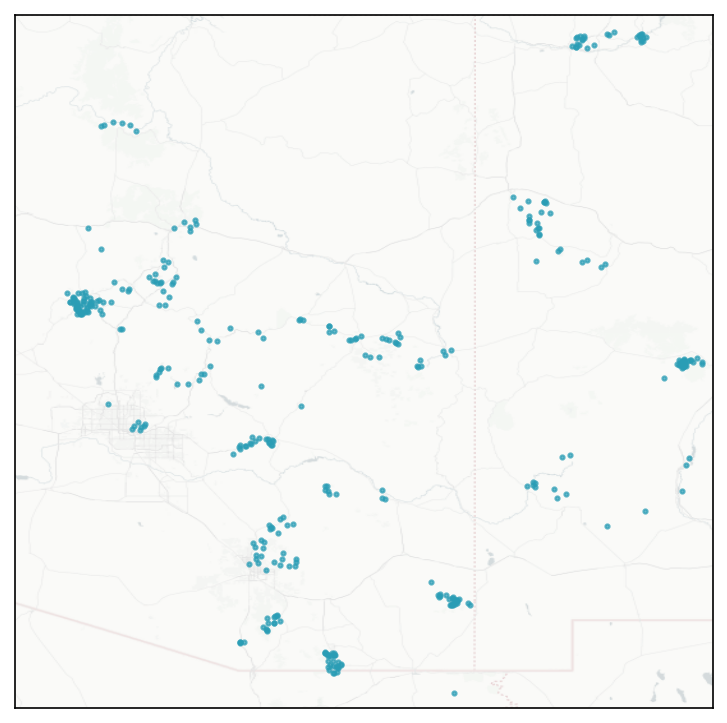} &
    \includegraphics[width=0.32\linewidth]{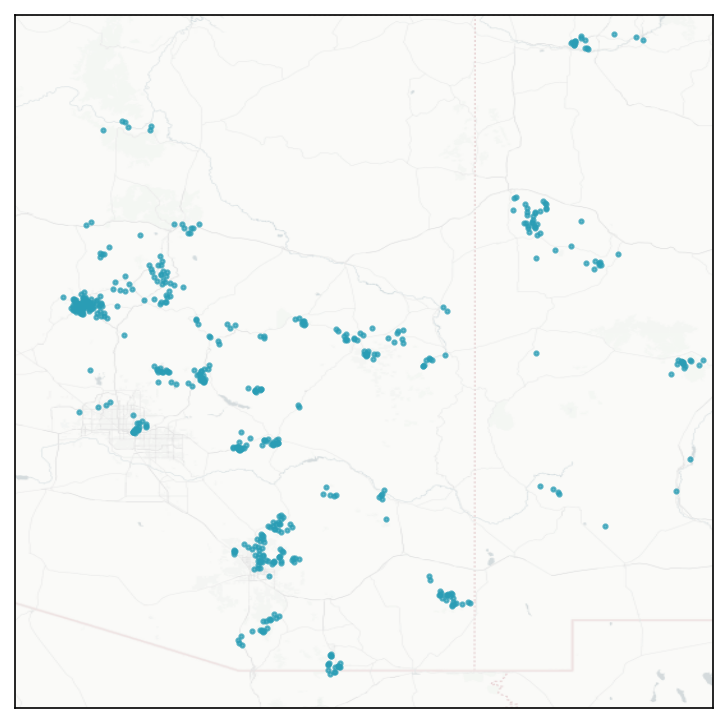}
  \end{tabular}%
  }
  \caption{Aphelocoma woodhouseii as the target species over a large geo-spatial map.}
  \label{fig:search-result-large-geo}
\end{figure}

%Figure 9 visualizes the evolution of BSIT’s search strategy across different feedback stages under a sparse reward setting. In the early stages, BSIT explores broadly, producing samples with semantically diverse features within the target category, albeit without precise alignment to the fine-grained target prompt with rare features. As feedback accumulates and the search matures, BSIT progressively converges toward samples that closely match the target prompt — reflecting BSIT's effective exploration capability, resulting in an efficient search process. This exploration-then-exploitation dynamic underscores BFMT’s strong exploratory capability, which is fundamental to its success in feedback-constrained sparse reward settings. We provide additional visualizations with other target categories with rare features below.

\section{ Additional Visualizations of Competitive Approaches on Sparse Reward Settings}
To highlight performance under sparse rewards, Figure~\ref{fig:sparse-visu-add} compares BSIT against baseline samplers on ImageNet when targeting objectives with rare or novel features. Aligning with our quantitative results, BSIT excels at generating semantically diverse, target-aligned samples with rare/novel features within a limited feedback budget, validating its strong exploratory capabilities under sparse reward conditions. 

\begin{figure*}[h]
\centering
\newcommand{\triplepanel}[3]{%
  \setlength{\tabcolsep}{2pt}%
  \begin{tabular}{@{}c@{\hspace{4pt}}c@{\hspace{4pt}}c@{}}
    \includegraphics[width=0.315\linewidth]{#1} &
    \includegraphics[width=0.315\linewidth]{#2} &
    \includegraphics[width=0.315\linewidth]{#3}
  \end{tabular}%
}
\newcommand{\panelblock}[4]{%
  \begin{minipage}[t]{\linewidth}
    \centering
    {\itshape ``#1''\par}
    \vspace{0pt}
    \triplepanel{#2}{#3}{#4}
  \end{minipage}%
}
\newcommand{\topheaders}{%
  {\fontsize{9.5}{10.5}\selectfont
  \begin{tabular}{@{}c@{\hspace{4pt}}c@{\hspace{4pt}}c@{}}
    \makebox[0.315\linewidth][c]{\textbf{FMET}} &
    \makebox[0.315\linewidth][c]{\textbf{BFMT}} &
    \makebox[0.315\linewidth][c]{\textbf{BSIT}}
  \end{tabular}%
  }
}
\noindent
\topheaders
\vspace{-0.4ex}
\noindent\rule{\textwidth}{0.6pt}
\vspace{-0.8ex}
\begin{minipage}[t]{\textwidth}
\centering
\panelblock{Sparse Fine-grained Prompt: \textcolor{purple}{A white fish.}}{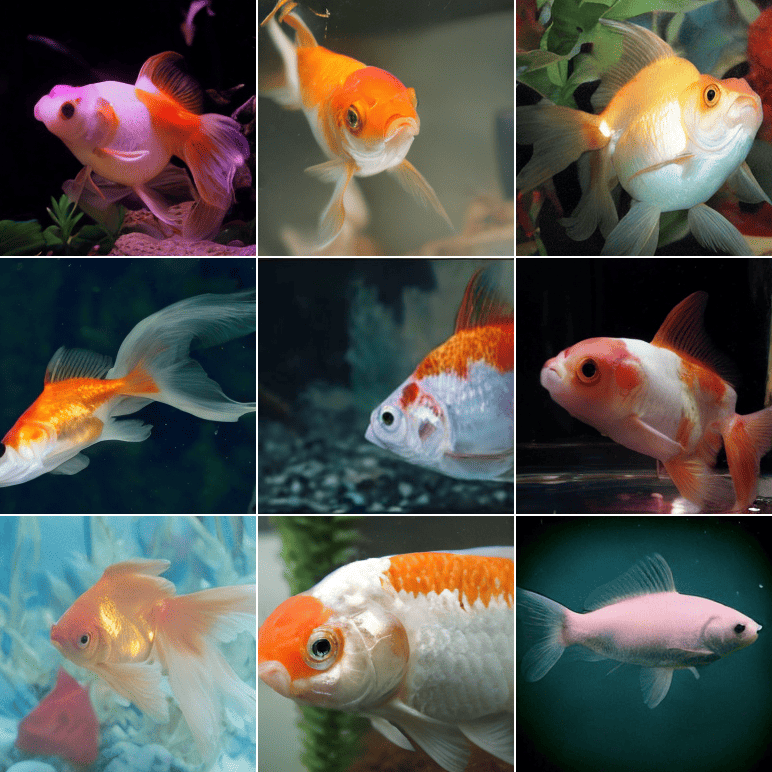}{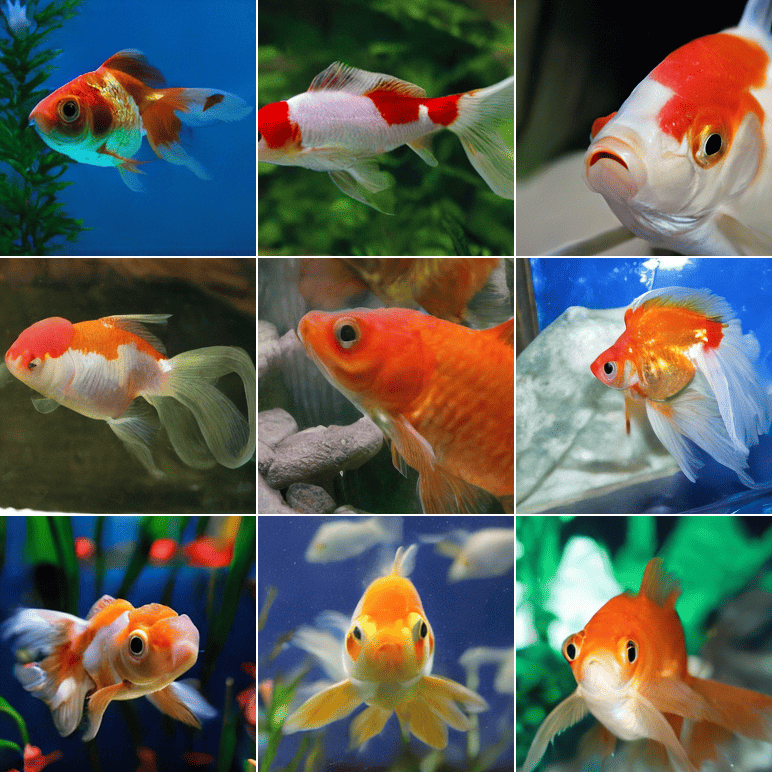}{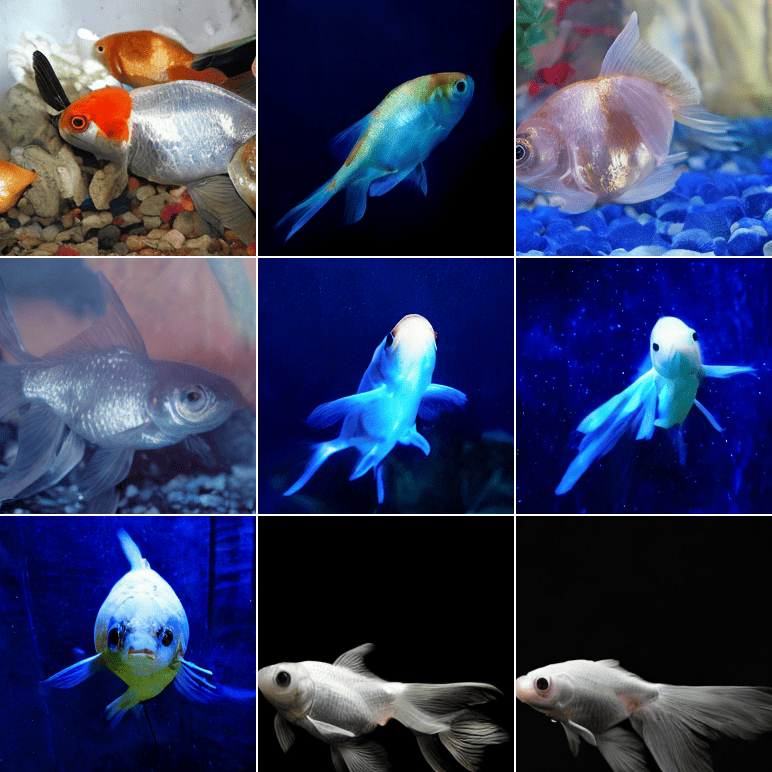}
\end{minipage}
\vspace{0.8ex}
\noindent\rule{\textwidth}{0.6pt}
\vspace{-0.4ex}
\topheaders
\vspace{-0.4ex}
\noindent\rule{\textwidth}{0.6pt}
\vspace{-0.8ex}
\begin{minipage}[t]{\textwidth}
\centering
\panelblock{Sparse Fine-grained Prompt: \textcolor{purple}{Two dogs in the snow.}}{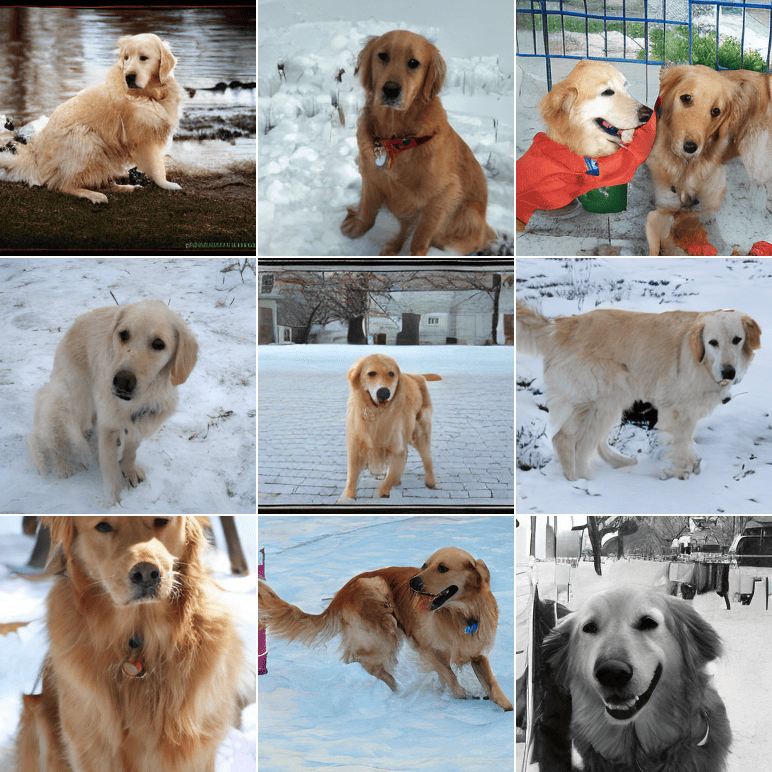}{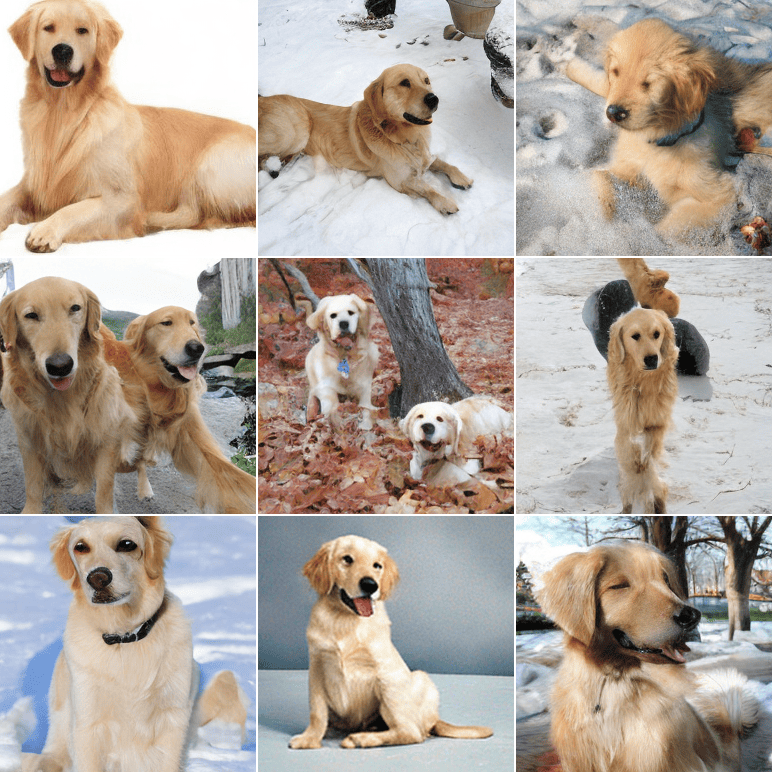}{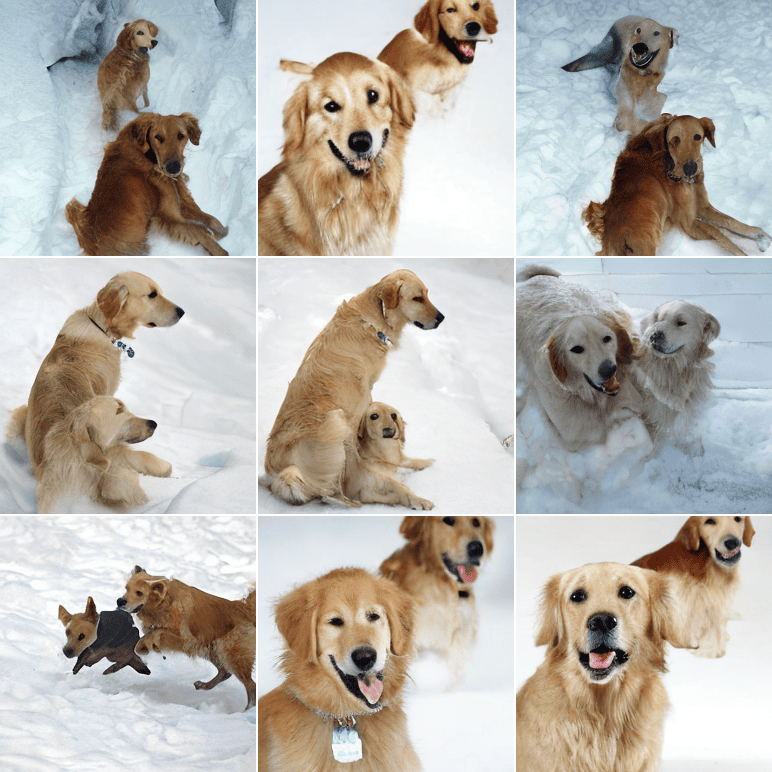}
\end{minipage}%
\vspace{0.6ex}
\noindent\rule{\textwidth}{0.6pt}
\caption{\small{Additional Search Visualizations with sparse fine-grained prompt.}}
\label{fig:sparse-visu-add}
%\vspace{-18pt}
\end{figure*}
\vspace{-10pt}
\section{Additional Results with Molecule Generation Task}\label{sec:mole-addq}
\vspace{-6pt}
Tables~\ref{tab:mole_K} and~\ref{tab:mole_S} report the mean validity, uniqueness, QED, and composite QED and SA scores for samples generated under feedback budgets 200 and 600, respectively. These results further highlight BSIT's ability to efficiently generate valid, target-aligned molecules even under tightly constrained feedback budgets.
\begin{table}[!h]
\centering
\caption{Validity Check for the Generated Molecules.}
\label{tab:mole_K}
\setlength{\tabcolsep}{5pt}
\renewcommand{\arraystretch}{0.90}
\begin{tabular}{lccc|ccc}
\toprule
Method & Uniqueness & Validity & Mean QED & Uniqueness & Validity & Mean QED+SA \\
\midrule
CFM & 0.9993 & 0.9410 & 0.4876 & 0.9993 & 0.9410 & 0.5148\\
%BFMT & 0.9823 & 0.9620 & 0.5489 & 0.9778 & 0.9500 & 0.5627\\
\rowcolor{blue!20}BSIT & 0.9674 & 0.9540 & \textbf{0.5482} & 0.9695 & 	0.9520 & \textbf{0.5642}\\
\bottomrule
\end{tabular}
\end{table}
\vspace{-16pt}
\begin{table}[!h]
\centering
\caption{Validity Check for the Generated Molecules.}
\label{tab:mole_S}
\setlength{\tabcolsep}{5pt}
\renewcommand{\arraystretch}{0.90}
\begin{tabular}{lccc|ccc}
\toprule
Method & Uniqueness & Validity & Mean QED & Uniqueness & Validity & Mean QED+SA \\
\midrule
CFM & 0.9989 & 0.9410 & 0.4925 & 0.9989 & 0.9410 & 0.5202\\
%BFMT & 0.9774 & 0.9579 & 0.5575 & 0.9766 & 0.9588 & 0.5702\\
\rowcolor{blue!20}BSIT & 0.9603 & 0.9573 & \textbf{0.5528} & 0.9543 & 0.9567 & \textbf{0.5762}\\
\bottomrule
\end{tabular}
\end{table}

\section{ Additional Visualizations of Competitive Approaches on Standard Reward Setting}

In Figure~\ref{fig:standard-visu-add}, we present additional qualitative results on ImageNet operating under a fixed feedback budget constraint with standard fine-grained prompts as target objectives. By evaluating BSIT side-by-side with the most competitive baseline samplers, these visual comparisons further corroborate our quantitative findings: BSIT consistently demonstrates a superior capability to generate semantically diverse, fine-grained target-aligned samples even under a strictly limited feedback budget.

\begin{figure*}[h]
\centering
\newcommand{\triplepanel}[3]{%
  \setlength{\tabcolsep}{2pt}%
  \begin{tabular}{@{}c@{\hspace{4pt}}c@{\hspace{4pt}}c@{}}
    \includegraphics[width=0.315\linewidth]{#1} &
    \includegraphics[width=0.315\linewidth]{#2} &
    \includegraphics[width=0.315\linewidth]{#3}
  \end{tabular}%
}
\newcommand{\panelblock}[4]{%
  \begin{minipage}[t]{\linewidth}
    \centering
    {\itshape ``#1''\par}
    \vspace{0pt}
    \triplepanel{#2}{#3}{#4}
  \end{minipage}%
}
\newcommand{\topheaders}{%
  {\fontsize{9.5}{10.5}\selectfont
  \begin{tabular}{@{}c@{\hspace{4pt}}c@{\hspace{4pt}}c@{}}
    \makebox[0.315\linewidth][c]{\textbf{DTS}} &
    \makebox[0.315\linewidth][c]{\textbf{BFMT}} &
    \makebox[0.315\linewidth][c]{\textbf{BSIT}}
  \end{tabular}%
  }
}
\noindent
\topheaders
\vspace{-0.4ex}
\noindent\rule{\textwidth}{0.6pt}
\vspace{-0.8ex}
\begin{minipage}[t]{\textwidth}
\centering
\panelblock{Standard Fine-grained Prompt: \textcolor{purple}{A red car.}}{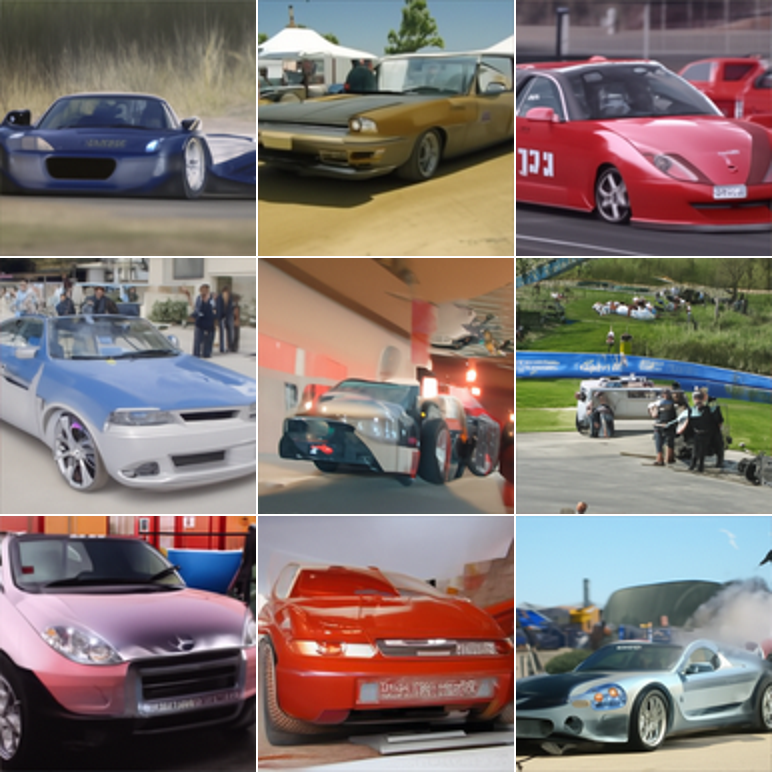}{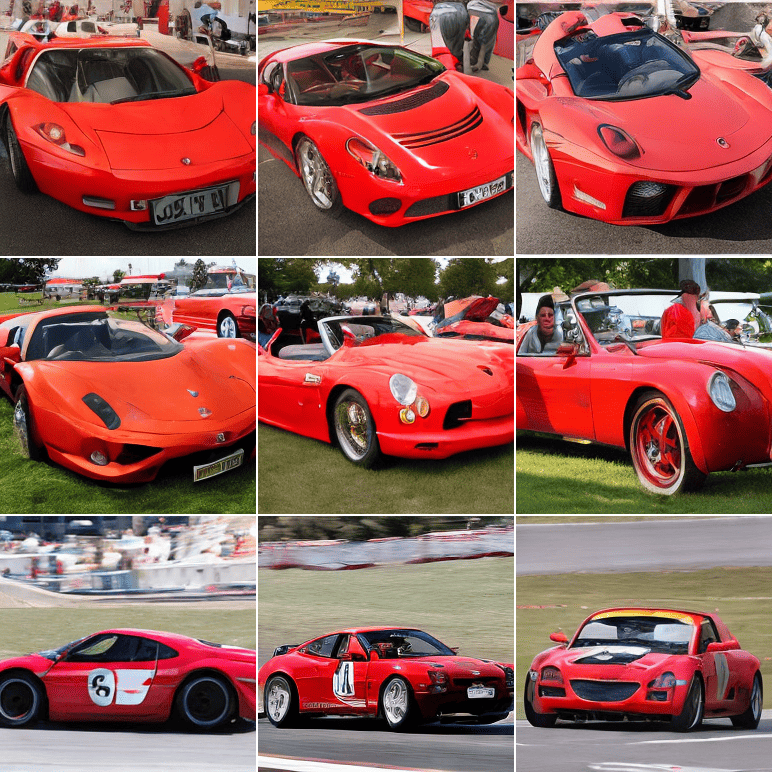}{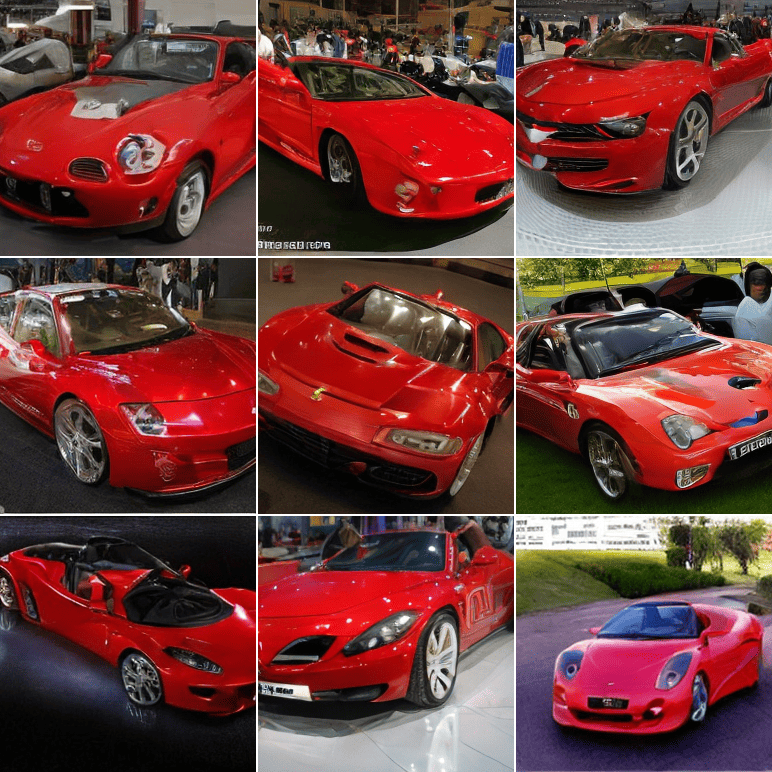}
\end{minipage}
\vspace{0.8ex}
\noindent\rule{\textwidth}{0.6pt}
\vspace{-0.4ex}
\topheaders
\vspace{-0.4ex}
\noindent\rule{\textwidth}{0.6pt}
\vspace{-0.8ex}
\begin{minipage}[t]{\textwidth}
\centering
\panelblock{Standard Fine-grained Prompt: \textcolor{purple}{A black cat.}}{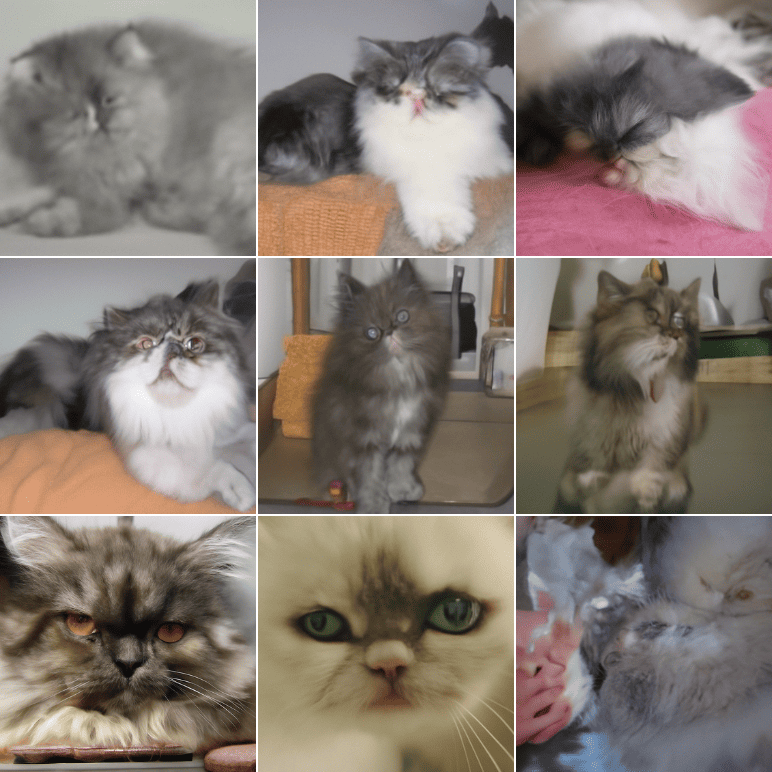}{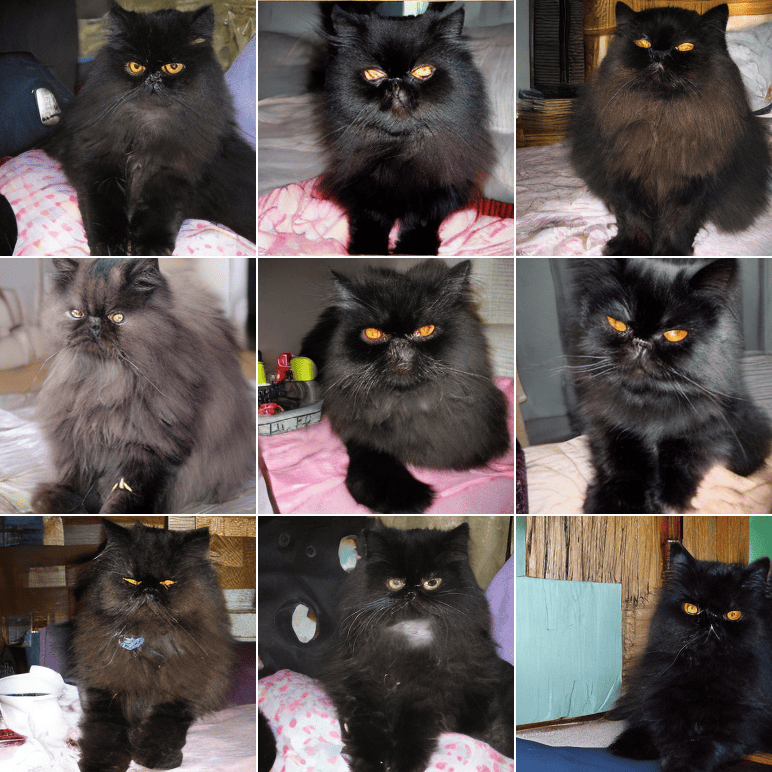}{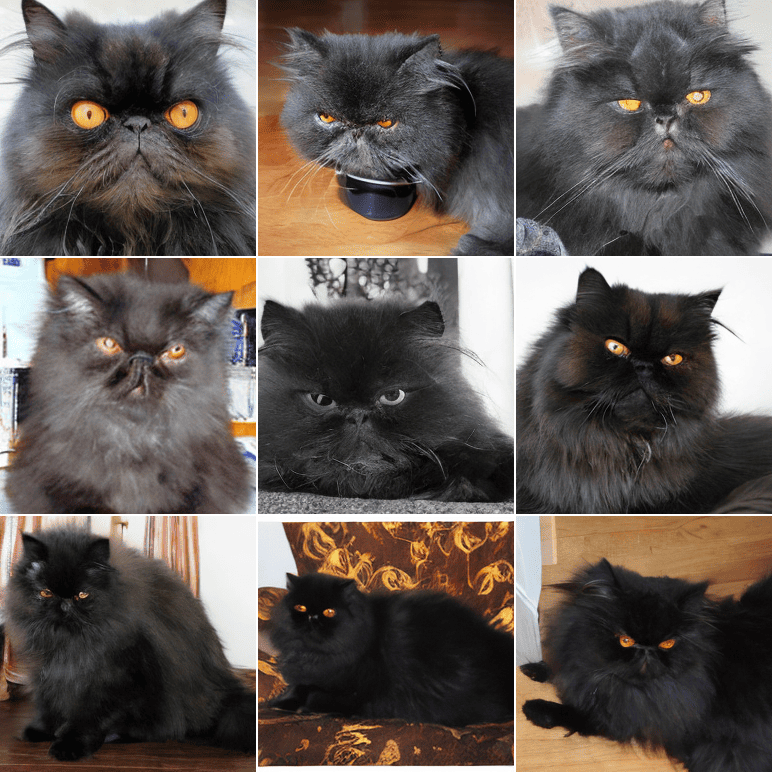}
\end{minipage}%
\vspace{0.6ex}
\noindent\rule{\textwidth}{0.6pt}
\caption{\small{Additional Search Visualizations with standard fine-grained prompt.}}
\label{fig:standard-visu-add}
%\vspace{-18pt}
\end{figure*}

\clearpage
\section{Pseudocode of BSIT sampler}
During tree traversal, if an intermediate node $x_t$ ($t > 0$) has not yet reached its branching capacity (i.e., $|\mathcal{C}(x_t)| < B(x_t)$), we actively expand it. Following \cite{couetoux2011continuous}, this dynamic maximum branching factor $B(x_t)$ scales with the parent's visit count: $B(x_t) = \lceil C \cdot N(x_t)^\alpha \rceil, \quad C > 0, \alpha \in (0, 1).$
Below, we present pseudocode for two variants of BSIT: one using a trained Flow-Map as the prior (see Algorithm~\ref{alg:bsit-fm}), and the other using a trained Meta-Flow-Map (see Algorithm~\ref{alg:bsit-mfm}).
\begin{algorithm}[H]
\caption{Bootstrap Stochastic Interpolant Tree Sampling (BSIT)}
\label{alg:bsit-fm}
\begin{algorithmic}[1]
\REQUIRE \textcolor{purple}{Pretrained Flow-Map velocity field $\hat{v}_\theta(x_t)$}, reward function $r$, number of tree iterations $M$, inverse temperature $\tau$, progressive widening parameters $C, \alpha$.
\STATE Initialize root node $x_{T+1}$ with dummy value, $V(x_{T+1}) \gets 0$, $N(x_{T+1}) \gets 1$
\STATE Initialize tree $\mathcal{T}$ with root node $x_{T+1}$

\FOR{$m = 1, \dots, M$}
    \STATE $P \gets \{x_{T+1}\}$
    \STATE $t \gets T+1$
    
    \STATE \textbf{// Selection (via UCT)}
    \WHILE{$|\mathcal{C}(x_t)| \ge C \cdot N(x_t)^\alpha$ \textbf{and} $t > 0$}
        \STATE Select child maximizing UCT score:
        \STATE $x_{t-1} \gets \arg\max_{x \in \mathcal{C}(x_t)} \left( V(x) + c \sqrt{\frac{\ln N(x_t)}{N(x)}} \right)$
        \STATE $P \gets P \cup \{x_{t-1}\}$
        \STATE $t \gets t - 1$
    \ENDWHILE
    
    \STATE \textbf{// Expansion \& Rollout}
    \IF{$t > 0$ \textbf{and} $|\mathcal{C}(x_t)| < C \cdot N(x_t)^\alpha$}
        \WHILE{$t > 0$}
            \IF{$N(x_t) = 0$}
                 \STATE \textbf{// \textcolor{purple}{Bootstrap Stochastic Interpolant Trajectory Construction}}
                 \STATE \textcolor{blue}{$  \textcolor{red}{x_0} \sim p_{\hat{v}_{\theta}(x_t)}(x_0 | x_t)$}
                 \WHILE{\textcolor{blue}{$t > 0$}}    
                      \STATE \textcolor{blue}{Stochastic Interpolant: $x_{t-1} = \alpha_{t-1} x_0 + \beta_{t-1} \epsilon_{t-1}$; \text{where,} $\epsilon_{t-1} \sim \mathcal{N}(0, I)$}
                      \STATE \textcolor{blue}{Initialize $V(x_{t-1}) \gets 0$, $N(x_{t-1}) \gets 1$}
                      \STATE \textcolor{blue}{$P \gets P \cup \{x_{t-1}\}$}
                      \STATE \textcolor{blue}{$t \gets t - 1$}
                 \ENDWHILE
            \ELSE
                \STATE \textbf{// \textcolor{purple}{Dynamic Node State Construction}}
                \STATE \textcolor{blue}{$\epsilon_t \sim \mathcal{N}(0, I) \quad \xrightarrow{\text{Interpolant}} \quad x^{\text{new}}_t =\alpha_{t} \textcolor{red}{x_0} + \beta_{t} \epsilon_{t}$}
                \STATE \textcolor{blue}{$x_0 \sim p_{\hat{v}_{\theta}(x^{\text{new}}_t)}(x_0 | x^{\text{new}}_t)$}
                \WHILE{\textcolor{blue}{$t > 0$}} 
                     \STATE \textcolor{blue}{Stochastic Interpolant: $x_{t-1} = \alpha_{t-1} x_0 + \beta_{t-1} \epsilon_{t-1}$; \text{where,} $\epsilon_{t-1} \sim \mathcal{N}(0, I)$}
                     \STATE \textcolor{blue}{Initialize $V(x_{t-1}) \gets 0$, $N(x_{t-1}) \gets 1$}
                     \STATE \textcolor{blue}{$P \gets P \cup \{x_{t-1}\}$}
                     \STATE \textcolor{blue}{$t \gets t - 1$}
                \ENDWHILE
            \ENDIF
        \ENDWHILE
    \ENDIF
    
    \STATE \textbf{// Evaluation \& Backup}
    \STATE Evaluate terminal reward: $V(x_0) \gets r(x_0)$
    \FOR{$t = 0, \dots, T$}
        \STATE Soft Bellman backup: 
        \STATE $V(x_{t+1}) \gets \frac{1}{\tau} \log \sum_{x_t \in \mathcal{C}(x_{t+1})} \exp(\tau V(x_t))$
        \STATE Update visit count: $N(x_{t+1}) \gets N(x_{t+1}) + 1$
    \ENDFOR
\ENDFOR

\RETURN Tree $\mathcal{T}$
\end{algorithmic}
\end{algorithm}

\begin{algorithm}[H]
\caption{Bootstrap Stochastic Interpolant Tree Sampling (BSIT)}
\label{alg:bsit-mfm}
\begin{algorithmic}[1]
\REQUIRE \textcolor{purple}{Pretrained Meta Flow-Map velocity field $\hat{v}_\theta(\epsilon;x_t)$}, reward function $r$, number of tree iterations $M$, inverse temperature $\tau$, progressive widening parameters $C, \alpha$.
\STATE Initialize root node $x_{T+1}$ with dummy value, $V(x_{T+1}) \gets 0$, $N(x_{T+1}) \gets 1$
\STATE Initialize tree $\mathcal{T}$ with root node $x_{T+1}$

\FOR{$m = 1, \dots, M$}
    \STATE $P \gets \{x_{T+1}\}$
    \STATE $t \gets T+1$
    
    \STATE \textbf{// Selection (via UCT)}
    \WHILE{$|\mathcal{C}(x_t)| \ge C \cdot N(x_t)^\alpha$ \textbf{and} $t > 0$}
        \STATE Select child maximizing UCT score:
        \STATE $x_{t-1} \gets \arg\max_{x \in \mathcal{C}(x_t)} \left( V(x) + c \sqrt{\frac{\ln N(x_t)}{N(x)}} \right)$
        \STATE $P \gets P \cup \{x_{t-1}\}$
        \STATE $t \gets t - 1$
    \ENDWHILE
    
    \STATE \textbf{// Expansion \& Rollout}
    \IF{$t > 0$ \textbf{and} $|\mathcal{C}(x_t)| < C \cdot N(x_t)^\alpha$}
        \WHILE{$t > 0$}
            %\IF{$N(x_t) = 0$}
                 \STATE \textbf{// \textcolor{purple}{Bootstrap Stochastic Interpolant Trajectory Construction}}
                 \STATE \textcolor{blue}{$\epsilon \sim \mathcal{N}(0, I) \quad \xrightarrow{\hat{v}_{\theta}(\epsilon;x_t)} \quad \textcolor{red}{x_0} \sim p_{\hat{v}_{\theta}(\epsilon;x_t)}(x_0 | x_t)$} 
                 \STATE \textcolor{blue}{Stochastic Interpolant: $x_{t-1} = \alpha_{t-1} x_0 + \beta_{t-1} \epsilon_{t-1}$; \text{where,} $\epsilon_{t-1} \sim \mathcal{N}(0, I)$}
                 \STATE \textcolor{blue}{Initialize $V(x_{t-1}) \gets 0$, $N(x_{t-1}) \gets 1$}
                 \STATE \textcolor{blue}{$P \gets P \cup \{x_{t-1}\}$}
                 \STATE \textcolor{blue}{$t \gets t - 1$}
            %\ELSE
                %\ENDWHILE
            %\ENDIF
        \ENDWHILE
    \ENDIF
    
    \STATE \textbf{// Evaluation \& Backup}
    \STATE Evaluate terminal reward: $V(x_0) \gets r(x_0)$
    \FOR{$t = 0, \dots, T$}
        \STATE Soft Bellman backup: 
        \STATE $V(x_{t+1}) \gets \frac{1}{\tau} \log \sum_{x_t \in \mathcal{C}(x_{t+1})} \exp(\tau V(x_t))$
        \STATE Update visit count: $N(x_{t+1}) \gets N(x_{t+1}) + 1$
    \ENDFOR
\ENDFOR

\RETURN Tree $\mathcal{T}$
\end{algorithmic}
\end{algorithm}

\clearpage
%\bibliography{aaai2027}
\end{document}